%% file: main.tex
\documentclass[letterpaper]{article}

\newif\ifanonymousauthor
\anonymousauthorfalse

\usepackage{xcolor}
\newif\ifeditorial
\editorialfalse

\usepackage{kotex}
\ifanonymousauthor
  \usepackage[submission]{aaai2027}
\else
  \usepackage[preprint]{aaai2027}
\fi
\usepackage[hyphens]{url}
\usepackage{graphicx}
\usepackage{natbib}
\usepackage{caption}
\usepackage{algorithm}
\usepackage{algorithmic}
\usepackage{booktabs}
\usepackage{amsmath,amssymb,amsthm,mathtools}
\usepackage{tabularx}
\usepackage{comment}

\newtheorem{theorem}{Theorem}[section]
\newtheorem{lemma}[theorem]{Lemma}
\newtheorem{proposition}[theorem]{Proposition}
\newtheorem{corollary}[theorem]{Corollary}
\newtheorem{remark}[theorem]{Remark}
\newcommand{\R}{\mathbb{R}}
\newcommand{\softmax}{\operatorname{softmax}}
\newcolumntype{Y}{>{\raggedright\arraybackslash}X}
\newcommand{\suppsec}[1]{Appendix~\ref{#1} of the supplementary material}

\title{Provably Learning Multi-Head Attention with Queries}
\ifanonymousauthor
  \author{Anonymous Submission}
\else
  \author{
    Sunyeop Kim\textsuperscript{\rm 1,2},
    Insung Kim\textsuperscript{\rm 1}
  }
\fi

\affiliations{
  \textsuperscript{\rm 1}Korea University, Republic of Korea\\
  \textsuperscript{\rm 2}Nanyang Technological University, Singapore\\
  \texttt{kin3548@korea.ac.kr},
  \texttt{cmcom35@korea.ac.kr},
}

\begin{document}
\maketitle

\input{local_commands}
\input{sections/abstract}
\input{sections/introduction}
\input{sections/related_work}

\input{sections/model_and_query_access}
\input{sections/repeated_token_local_decoding}
\input{sections/global_additive_head_matching}

\input{sections/main_result}

\input{sections/finite_precision_boundary}
\input{sections/ffn_extension}
\input{sections/experiments}

\input{sections/limitations_and_conclusion}

\bibliography{references}

\onecolumn
\setcounter{page}{1}
\appendix
\setcounter{section}{0}
\renewcommand{\thesection}{S\arabic{section}}
\renewcommand{\thesubsection}{\thesection.\arabic{subsection}}

\appendix
\setcounter{secnumdepth}{2}
\section*{Supplementary Material}
\input{supplementary/supplementary_material}

\end{document}

%% file: local_commands.tex
\ifeditorial
\newcommand{\jian}[1]{\textcolor{cyan}{\textbf{[JG: #1]}}}
\newcommand{\insung}[1]{\textcolor{red!50!black}{\textbf{[IK: #1]}}}
\newcommand{\sunyeop}[1]{\textcolor{green!50!black}{\textbf{[SK: #1]}}}
\newcommand{\important}[1]{\textcolor{red}{\textbf{[IMPORTANT: #1]}}}
\else
 \newcommand{\jian}[1]{}
 \newcommand{\insung}[1]{}
  \newcommand{\sunyeop}[1]{}
  \newcommand{\important}[1]{}
\fi

%% file: sections/abstract.tex
\begin{abstract} 
We study the problem of learning multi-head softmax attention from black-box input-output access. The learner may query arbitrary real-valued token sequences and observe only the scalar output at the final token. Recent work gives an algorithm using $O(d^2)$ value
queries to recover the single-head parameters $(W,v)$. For multiple heads, the same work establishes identifiability under the
assumption that the heads occupy pairwise orthogonal subspaces. Applying
the single-head recovery algorithm separately to the heads additionally
requires bases for these subspaces to be known. We recover a canonical representation by merging heads with the same $W_h$, summing their corresponding $v_h$, and discarding a merged head when this sum is zero, without these subspace assumptions. By varying the number of copies of a token, our algorithm obtains samples of a rational function whose interpolation separates the canonical heads. Additional queries formed by adding selected token vectors then match the same head across different queries. When the oracle outputs and all subsequent computations are exact, the learner chooses its query vectors at random and recovers the canonical pairs $\{(W_h,v_h):h\in[H]\}$ up to permutation with probability one. When $H$ is known, it uses exactly $4Hd^2-2H+1$ value queries of maximum length $2H+1$. If only a known upper bound $H_0$ is available, the algorithm uses
$4H_0d^2-2H_0+1$ value queries of maximum length $2H_0+1$. For approximate oracle outputs, we give conditions under which the parameter error is at most a model- and query-dependent constant multiple of the output error. Finally, we extend our result to a one-layer Transformer with multi-head attention followed by a bias-free ReLU feed-forward network. Under additional conditions, we recover a functionally equivalent Transformer without relying on a separate algorithm for learning the feed-forward network. 
\end{abstract}

%% file: sections/introduction.tex
\section{Introduction}
Multi-head self-attention is a central component of Transformer architectures \citep{vaswani2017attention}. Recent work gives an $O(d^2)$-query algorithm for exactly recovering a scalar-output single-head softmax attention model \citep{bhattamishra2026attention}. For multi-head attention, they reduce the problem to independent
single-head recovery, assuming that the heads occupy mutually orthogonal
subspaces and that a basis for each subspace is known to the learner. We ask whether exact recovery is possible without such a subspace decomposition.


We represent head $h$ by a pair $(W_h,v_h)$, where a token $x_i$ contributes attention score $x_i^\top W_h q$ and value $x_i^\top v_h$ against the final query token $q$. If two heads $h$ and $g$ satisfy $W_h=W_g$, every query assigns them identical attention weights. The oracle therefore depends on $v_h$ and $v_g$ only through their sum $v_h+v_g$, so the two vectors cannot be recovered separately. We therefore merge such heads, sum their vectors $v_h$, and discard groups whose sum is zero. We call the remaining set of head pairs the canonical representation and aim to recover it. \citet{tran2025equivariant} characterize when two multi-head attention models produce the same output at every token. We prove that equality of the scalar outputs at the final token determines the canonical representation. In fact, equality on length-three inputs already suffices.

Recovering this representation requires more than running a single-head learner repeatedly. Each oracle response is the sum of $H$ nonlinear head outputs and the individual heads are therefore not observed separately. To overcome
this obstacle, our algorithm proceeds in two stages: it first separates the
head contributions for each chosen pair of vectors and then matches these
contributions across different pairs. For each chosen pair of vectors \(u,q\), the response to a sequence containing one perturbed token \(q+tu\) and \(m\) copies of \(q\) is a rational function of \(m\). Interpolation of this rational function separates the $H$ head contributions and returns an unordered set containing, for each head, the two scalars $u^\top W_h q$ and $u^\top v_h$. We then make additional queries using sums of the chosen vectors. Only components from the same head combine consistently under these queries, allowing us to match the heads across different queries. Once the heads are matched across queries, the recovered scalars form linear systems whose solutions recover every pair $(W_h,v_h)$.

\paragraph{Contributions.} 
We study the recovery of the canonical representation of multi-head softmax attention from value queries and establish the following results:

\begin{enumerate} 
\item \textbf{Recovery of canonical multi-head attention.} 
Repeated-token queries make the observed scalar response a rational function of the number of repeated tokens. Rational interpolation reconstructs this function and separates the canonical heads, initially only as an unordered set. Additional queries formed by adding selected input directions align these unordered sets under one common head labeling. For a model with $H$ canonical heads, under the computational assumptions stated in Section~\ref{sec:problem-formulation}, our algorithm recovers all canonical pairs $(W_h,v_h)$ up to permutation with probability one over its random query directions. The algorithm is nonadaptive and, when $H$ is known, uses exactly $4Hd^2-2H+1$ value queries, each of length at most $2H+1$.

\item \textbf{Stability under approximate outputs.} When each scalar oracle response is within $\tau$ of the exact output, we show that, under quantitative separation and conditioning assumptions and for sufficiently small $\tau$, the parameter recovery error is at most $C_{\rm stab}\tau$ up to permutation, where $C_{\rm stab}$ depends on the target model and the realized query directions. A two-head construction shows that no uniform bound on $C_{\rm stab}$ follows from $W_h\ne W_g$ and $v_h\ne0$ alone. We also evaluate our algorithm using high-precision oracle responses and responses rounded to IEEE 754 binary64~\citep{ieee754}.

\item \textbf{Extension to a one-layer Transformer with ReLU.} 
We extend our multi-head recovery algorithm to a bias-free one-layer Transformer in which multi-head attention is followed by a ReLU feed-forward network.

Following \citet{bhattamishra2026attention}, subtracting the output on
$-X$ from that on $X$ simulates one value query to a scalar-output
multi-head attention model using two Transformer queries. Applying our
recovery algorithm to these simulated queries recovers
$(W_h,v_h)$ for every head, up to permutation. We then give a second stage that, under additional conditions on the feed-forward network, constructs a functionally equivalent Transformer without assuming a separate algorithm for learning the feed-forward network and without requiring its width $m$ to satisfy $m\le d$.
\end{enumerate}

Complete proofs, recovery when only an upper bound on $H$ is known, the low-rank extension, conditional recovery from binary membership queries, and full details of the one-layer Transformer extension are provided in Appendices~\ref{supp:canonical}--\ref{supp:transformer} of the supplementary material.

%% file: sections/related_work.tex
\section{Related Work}

\paragraph{Learning Neural Networks with Queries.} Learning with membership and value queries is a classical topic in computational learning theory \citep{angluin1988queries}. For neural networks, prior work ranges from identifiability using the full
input-output map \citep{fefferman1993recovering} to approximate learning with value queries and reconstruction of shallow ReLU networks from membership queries \citep{chen2021reluqueries,daniely2023exact}. Related access models also appear in black-box model extraction \citep{tramer2016stealing,jagielski2020high}. For production language models, \citet{carlini2024stealing} recover the hidden dimension and embedding projection matrix. Other cryptanalytic work studies broader parameter extraction under real-valued, hard-label, nonlinear, and recurrent settings \citep{carlini2020cryptanalytic, chen2024hard,carlini2025hardlabel,asselineau2026nonlinear, wei2026rnn}.

\paragraph{Learning Attention with Queries.} \citet{brunner2019identifiability} show that attention weights need not be identifiable from contextualized representations under a different access model. \citet{bhattamishra2026attention} recover the parameters $(W,v)$ of a single-head softmax attention model with $v\ne0$ using $O(d^2)$ value queries. They establish identifiability of multi-head attention when the head parameters lie in pairwise orthogonal subspaces. Their reduction to single-head recovery additionally requires the learner to know a basis for each subspace. Their one-layer Transformer extension uses the odd-component reduction and assumes access to an algorithm for learning the ReLU feed-forward network. For example, \citet{milli2019model} provide such a learner under a linear-independence assumption, which requires $m\le d$. We use the same reduction based on the difference between the outputs on $X$ and $-X$ to recover the pairs $(W_h,v_h)$ and, under additional conditions, directly construct a functionally equivalent Transformer without requiring $m\le d$. \citet{tran2025equivariant} characterize functional equivalence for full-output multi-head attention, with later work covering positional information \citep{tran2026functional}. For scalar outputs at the final token, we prove the corresponding canonical characterization by merging heads with the same $W_h$, summing their $v_h$, and removing zero aggregates. We recover the canonical pairs for pairwise distinct $W_h$ and nonzero $v_h$ without a subspace decomposition.


\paragraph{Learning Attention from Examples.} 

Unlike our query setting, where the learner chooses each input, these works
assume that the learner receives randomly sampled input--output examples. \citet{chenli2025mha} study softmax multi-head attention under
uniformly sampled sign inputs and observation of the full output matrix.
\citet{yau2025linear} give a polynomial-time learning algorithm for
multi-head linear attention, which uses a different attention mechanism.
Thus, these results concern prediction from randomly sampled examples,
rather than exact recovery of a given softmax attention model using
learner-chosen queries.

Table~\ref{tab:comparison} compares the access models, assumptions, and guarantees of the most directly related results.

\begin{table*}[t]
\centering
\small
\setlength{\tabcolsep}{4pt}
\begin{tabularx}{\textwidth}{
  @{}>{\raggedright\arraybackslash}p{0.19\textwidth}
     >{\raggedright\arraybackslash}p{0.21\textwidth}
     Y
     >{\raggedright\arraybackslash}p{0.20\textwidth}@{}}
\toprule
Result
& Access and target
& Assumptions and guarantee
& Complexity \\
\midrule

Single head \citep{bhattamishra2026attention}
& Exact value queries; recover $(W,v)$
& $v\ne0$; exact recovery
& $O(d^2)$ queries; length at most two \\
\midrule

Known-subspace MHA \citep{bhattamishra2026attention}
& Exact value queries; recover multiple heads
& Pairwise orthogonal subspaces with known bases; exact up to permutation
& Reduction to single-head learners \\
\midrule

Passive softmax MHA \citep{chenli2025mha}
& Random examples; observe the full output matrix
& Uniform sign inputs; approximate prediction
& $(dN)^{O(H^3)}$ running time \\
\midrule

Ours
& Exact value queries; recover canonical pairs
& Pairwise distinct $W_h$ and nonzero $v_h$; exact up to permutation with
probability one
& $4Hd^2-2H+1$ queries; length at most $2H+1$ \\

\bottomrule
\end{tabularx}
\caption{Comparison of prior work and our result on learning softmax
attention. MHA denotes multi-head attention. Query and sample complexities are not directly comparable because the
access models and observed outputs differ.}
\label{tab:comparison}
\end{table*}




%% file: sections/model_and_query_access.tex
\section{Definitions}\label{sec:learning_setting}
\insung{We need to formally rewrite the definition of each symbol and the problem statement.}

\subsection{Notation}
For an integer $n\ge1$, let $[n]\coloneqq\{1,\ldots,n\}$. We write $G^\top$ for the transpose of a matrix $G$. All vectors are column vectors, and for $y_1,\ldots,y_N\in\R^d$, we write $[y_1^\top;\ldots;y_N^\top]\in\R^{N\times d}$ for the matrix whose $i$-th row is $y_i^\top$. We use $\|\cdot\|_2$ for the Euclidean norm on vectors and the spectral norm on matrices, and $\|\cdot\|_F$ for the Frobenius norm.

\subsection{Multi-Head Attention Model}

Fix a token dimension $d\ge1$. An input sequence of length $N\ge1$ is
represented by a matrix $X\in\R^{N\times d}$ with rows
$x_1^\top,\ldots,x_N^\top$, where each $x_i\in\R^d$. The final token
$q\coloneqq x_N$ is used as the query token.

The model has $H_0\ge1$ heads, each with head dimension $d_h\ge1$. For each $h\in[H_0]$, let $Q_h,K_h,V_h\in\R^{d_h\times d}$ be the query, key, and value projection matrices of head $h$, and let $o_h\in\R^{d_h}$ be a read-out vector mapping the head output to the observed scalar. The query and key projections combine into a single matrix, and the value projection and read-out vector into a single vector,
\[
 W_h\coloneqq\frac{K_h^\top Q_h}{\sqrt{d_h}}\in\R^{d\times d},
 \qquad
 v_h\coloneqq V_h^\top o_h\in\R^d,
\]
so that for every token $x_i$ and query $q$,
\[
 \frac{(K_hx_i)^\top(Q_hq)}{\sqrt{d_h}}=x_i^\top W_hq,
 \qquad
 o_h^\top(V_hx_i)=x_i^\top v_h.
\]
Any two choices of $(Q_h,K_h,V_h,o_h)$ that yield the same $(W_h,v_h)$ produce the same attention scores and scalar token values, hence the same scalar output. We therefore parameterize the model by $ M\coloneqq\bigl((W_1,v_1),\ldots,(W_{H_0},v_{H_0})\bigr),$
where $W_h\in\R^{d\times d}$ and $v_h\in\R^d$. The learner targets only the pairs $(W_h,v_h)$. The factors $(Q_h,K_h,V_h,o_h)$ are not identifiable from scalar outputs.

For each $h\in[H_0]$, the attention weight assigned to token $x_i$ is
\[
 a_{h,i}(X)
 \coloneqq
 \frac{\exp(x_i^\top W_hq)}{\sum_{j=1}^N\exp(x_j^\top W_hq)},
 \qquad i\in[N],
\]
the scalar output of head $h$ is
\[
 f_{W_h,v_h}(X)\coloneqq\sum_{i=1}^N a_{h,i}(X)\,x_i^\top v_h,
\]
and the multi-head model returns
\[
 F_M(X)\coloneqq\sum_{h=1}^{H_0}f_{W_h,v_h}(X).
\]

\subsection{Canonical Representation}
Heads sharing the same $W_h$ assign identical attention weights to every input. Since the scalar head output is linear in $v_h$ for fixed $W_h$, such heads contribute through the sum of their corresponding vectors. For any $A\in\R^{d\times d}$, define
\[
C_M(A)
\coloneqq
\sum_{\substack{h\in[H_0]\\W_h=A}}v_h.
\]
The canonical representation of $M$ is
\[
\operatorname{Can}(M)
\coloneqq
\left\{
(A,C_M(A))\,\middle|\,
\substack{
A\in\{W_h:h\in[H_0]\}\\
C_M(A)\ne0
}
\right\}.
\]
Thus, heads with the same $W_h$ are merged, their corresponding $v_h$
are summed, and groups whose sum is zero are removed. We write $H\coloneqq|\operatorname{Can}(M)|$
for the number of canonical heads. From this point on, we reuse the
symbols $(W_h,v_h)$ for the canonical heads and enumerate the canonical
representation as $\operatorname{Can}(M)=\{(W_h,v_h):h\in[H]\}$.
By construction, these canonical heads satisfy 
\begin{equation} 
W_h\ne W_g\quad(h\ne g),\qquad v_h\ne0. \label{eq:id} 
\end{equation}

\begin{proposition}[Canonical identifiability]\label{prop:canonical-identifiability} 
Let $M$ and $M'$ be two scalar-output multi-head attention models of
token dimension $d$. Then $\operatorname{Can}(M)=\operatorname{Can}(M')$
if and only if $F_M(X)=F_{M'}(X)
~\text{for every }X\in\R^{3\times d}.$
\end{proposition}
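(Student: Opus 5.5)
The plan is to handle the two directions separately. For the ``only if'' direction, I would show that $F_M$ depends on $M$ only through $\operatorname{Can}(M)$. Group the heads of $M$ by the value of $W_h$. Heads with the same $W_h=A$ have identical weights $a_{h,i}(X)$, and $f_{A,v}(X)$ is linear in $v$. So the group contributes $f_{A,C_M(A)}(X)$, and this is $0$ when $C_M(A)=0$. Hence $F_M=\sum_{(A,c)\in\operatorname{Can}(M)}f_{A,c}$, and equal canonical representations give equal outputs on every input, in particular on $\R^{3\times d}$.

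For the ``if'' direction, I would reduce to a vanishing statement. Form the combined model $D$ consisting of the heads $(A,c)$ for $(A,c)\in\operatorname{Can}(M)$ together with the heads $(A',-c')$ for $(A',c')\in\operatorname{Can}(M')$. By linearity of $f_{A,v}$ in $v$, we have $F_D=F_M-F_{M'}$, which vanishes on $\R^{3\times d}$. By the first direction, $F_{\operatorname{Can}(D)}=F_D$. Also, $\operatorname{Can}(D)$ consists exactly of the pairs $(A,C_M(A)-C_{M'}(A))$ with nonzero second entry, where $C_M(A)$ is read as $0$ if $A$ does not occur. So it suffices to prove the following claim. \emph{If a family $\{(W_h,v_h)\}_{h\in[H]}$ satisfies \eqref{eq:id} and $\sum_h f_{W_h,v_h}\equiv0$ on $\R^{3\times d}$, then $H=0$.} Given the claim, $C_M(A)=C_{M'}(A)$ for every $A$, which yields $\operatorname{Can}(M)=\operatorname{Can}(M')$.

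To prove the claim, I would use the length-three input $X(s)$ with rows $(q+su)^\top, q^\top, q^\top$. Set $\alpha_h=u^\top W_hq$ and $\beta_h=u^\top v_h$. A direct computation gives
\[
f_{W_h,v_h}(X(s))=q^\top v_h+\frac{s\,\beta_h\,e^{s\alpha_h}}{e^{s\alpha_h}+2}.
\]
Summing over $h$ and using $s=0$ gives $q^\top\sum_hv_h=0$. For $s\neq0$, dividing by $s$ gives
\[
\sum_h\beta_h\,\phi_{\alpha_h}(s)=0\quad\text{for all real } s, \qquad\text{where } \phi_\alpha(s)=\frac{1}{1+2e^{-s\alpha}}.
\]
Next, I would choose $u,q$ generically. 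Each $W_h-W_g$ with $h\neq g$ is nonzero, so $u^\top(W_h-W_g)q$ is a nonzero polynomial in $(u,q)$. Each $v_h\neq0$, so $u^\top v_h$ is a nonzero polynomial. Avoiding the union of these finitely many zero sets, we may take the $\alpha_h$ pairwise distinct and every $\beta_h\neq0$.

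The main obstacle is the linear independence of $\{\phi_\alpha\}$ for distinct real $\alpha$. I would argue by analytic continuation. For $\alpha=0$, $\phi_0\equiv1/3$ is constant. For $\alpha\neq0$, $\phi_\alpha$ extends meromorphically to $\mathbb{C}$, with simple poles at $s=(\log2+i\pi(2k+1))/\alpha$ for $k\in\mathbb{Z}$. The pole with $k=0$ has real part $\log 2\cdot\mathrm{Re}(1/\alpha)$ and imaginary part $\pi/\alpha$. Its modulus and argument determine $\alpha$ uniquely, including its sign, so distinct nonzero $\alpha$ have disjoint pole sets. The identity $\sum_h\beta_h\phi_{\alpha_h}=0$ holds on $\R$ and hence on $\mathbb{C}$ minus the poles. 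Taking the residue at a pole of $\phi_{\alpha_h}$ with $\alpha_h\neq0$ forces $\beta_h=0$. After those terms are removed, only a possible constant term $\beta_h/3$ with $\alpha_h=0$ remains, which forces that $\beta_h$ to be $0$ as well. This contradicts $\beta_h\neq0$ unless $H=0$, which proves the claim and the proposition.
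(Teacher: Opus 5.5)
Your proof is correct and follows essentially the paper's route. You use the same canonical reduction $F_M=\sum_{(A,c)\in\operatorname{Can}(M)}f_{A,c}$, the same three-token input $[(q+su)^\top;q^\top;q^\top]$, and the same residue argument for the linear independence of $e^{s\alpha}/(2+e^{s\alpha})$. Your combined model $D$ and your joint generic choice of $(u,q)$ with every $\beta_h\neq0$ are only cosmetic variants of the paper's $c_\ell=C_M(A_\ell)-C_{M'}(A_\ell)$ and its open-set argument in $u$.

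One inference needs repair. Knowing that the $k=0$ pole determines $\alpha$ does not show that the full pole sets are disjoint, because you need that no pole of $\phi_{\alpha_h}$ is a pole of any $\phi_{\alpha_g}$ at any index $k'$. This follows at once, as in the paper, from the fact that every pole $(\log 2+\mathrm{i}\pi(2k+1))/\alpha$ has real part $\log 2/\alpha$, so a common pole forces $\alpha_h=\alpha_g$.
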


\citet{tran2025equivariant} characterize when two multi-head attention models produce identical full outputs for every input sequence, regardless of its length. Our oracle instead observes only the scalar $F_M(X)$ obtained from the output at the final token. Equality of these scalar observations is a weaker requirement than equality of the full outputs, so their characterization does not apply directly to our setting. The proof of Proposition~\ref{prop:canonical-identifiability} is given in \suppsec{supp:canonical}.

This result concerns identifiability rather than an explicit recovery algorithm. It shows that observations on all inputs of length three uniquely \emph{determine} $\operatorname{Can}(M)$, but it does not give a procedure that recovers the heads using only such inputs. Our recovery algorithm uses repeated-token queries to interpolate a rational function and therefore uses sequences of length up to $2H+1$. Thus, the two sequence lengths describe different results: length three is sufficient for identifiability, whereas length $2H+1$ is used by our constructive recovery algorithm.

\subsection{Problem Statement}\label{sec:problem-formulation}

\paragraph{Value-query oracle.} The learner has value-query access to $F_M$. A value query consists of choosing a sequence length $N\ge1$ and an arbitrary input $X\in\R^{N\times d}$ and receiving the exact scalar $F_M(X)$. The learner observes neither the attention weights nor the individual head outputs. We count each evaluation of $F_M$ as one value query and define the maximum query length as the largest sequence length used by the learner.

\paragraph{Recovery objective.} The learner knows the token dimension $d$ and the raw head count $H_0$, but does not know the model parameters or the canonical head count $H=|\operatorname{Can}(M)|$. The goal is to recover the unordered canonical set $\operatorname{Can}(M)$. Our sharp query bound is first stated under the additional assumption that $H$ is known; a subsequent result uses only $H_0$ and determines $H$ from the oracle responses.


\paragraph{Computational assumptions.}
For the theoretical analysis, we assume that the oracle returns exact real values and that the learner can perform exact real arithmetic, evaluate exponential and logarithmic functions, solve finite-dimensional linear systems, and factor polynomials. We count oracle calls and the number of these operations, rather than finite-bit operations.

These assumptions are separate from the numerical algorithms used to realize the operations in practice. In our implementation, linear systems are solved in multiprecision arithmetic and polynomial roots are computed from the eigenvalues of a companion matrix. Further implementation details are provided in
\suppsec{supp:numerical}.

\sunyeop{justifying this part will be important. also, we should say how we could approximate this for real computation}
\insung{I agree. Along with robustness, I think we'll need to spend a good amount of time justifying our model. Let's first get the overall draft down, and then work through this part together and fill in the details.}

%% file: sections/repeated_token_local_decoding.tex
\section{Recovery Algorithm}
\subsection{Repeated-Token Local Decoding}\label{sec:local}

Fix $u,q\in\R^d$ and a nonzero scale $t$. For $m\ge1$, let
\begin{equation}
 X_m^{(t)}(u,q)=[(q+tu)^\top;q^\top;\ldots;q^\top],
 \label{eq:repeat}
\end{equation}
a sequence of length $m+1$ whose first token is $q+tu$ and whose
remaining $m$ tokens are all equal to $q$. Define
\[
 s_h\coloneqq t u^\top W_hq,\qquad
 c_h\coloneqq t u^\top v_h,\qquad
 r_h\coloneqq e^{s_h}.
\]
These are functions of $(u,q)$, which we make explicit as $s_h(u,q)$ and
$c_h(u,q)$ when the dependence matters.
\begin{lemma}[Repeated-token identity]
For every integer $m\ge1$, define $\mathcal{R}(m)$ as follows.
\begin{equation}
 \mathcal{R}(m)\coloneqq F_M(X_m^{(t)}(u,q))-F_M([q^\top]).
 \label{eq:response}
\end{equation}
Then
\begin{equation}
 \mathcal{R}(m)=\sum_{h=1}^H\frac{c_hr_h}{m+r_h}.
 \label{eq:rational}
\end{equation}
\end{lemma}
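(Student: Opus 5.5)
The plan is to compute each head's output on the repeated-token sequence directly and then sum. First, the single-token input $[q^\top]$ is evaluated. With $N=1$, every head puts weight one on $q$, so $f_{W_h,v_h}([q^\top])=q^\top v_h$. Next comes $X_m^{(t)}(u,q)$, whose final token is $q$ (since $m\ge1$). For head $h$, the first token has score $(q+tu)^\top W_hq=q^\top W_hq+s_h$. Each of the $m$ copies of $q$ has score $q^\top W_hq$. Cancelling the common factor $e^{q^\top W_hq}$, the attention weights are $r_h/(m+r_h)$ on the first token and $1/(m+r_h)$ on each copy. The values are $q^\top v_h+c_h$ and $q^\top v_h$ respectively. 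Hence
\[
f_{W_h,v_h}(X_m^{(t)}(u,q))=\frac{r_h(q^\top v_h+c_h)+m\,q^\top v_h}{m+r_h}=q^\top v_h+\frac{c_hr_h}{m+r_h}.
\]

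Subtracting the single-token output removes the $q^\top v_h$ term, so for every raw head the difference is exactly $c_hr_h/(m+r_h)$. Summing over $h\in[H_0]$ gives $\mathcal{R}(m)$ as a sum of such terms over the raw heads of $M$.

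The remaining step, and the only point needing care, is passing from the raw heads to the canonical heads. Here the quantities $s_h,c_h,r_h$ are defined through $(W_h,v_h)$. Group the raw heads by their value of $W_h$. Within a group, $s_h$ and $r_h$ are common, while $c_h=tu^\top v_h$ is linear in $v_h$. The group's total contribution is therefore $t\,u^\top C_M(A)\,r/(m+r)$. Groups with $C_M(A)=0$ contribute zero, and the others are exactly the canonical pairs. This yields \eqref{eq:rational} with the sum over $h\in[H]$, as stated.

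Equivalently, one can first note that $F_M=F_{M'}$ whenever $M'$ is the model whose heads are the elements of $\operatorname{Can}(M)$: the sum of head outputs is linear in $v_h$ for fixed $W_h$. The computation above is then carried out directly on $M'$. No genuine obstacle is expected; the argument is a direct computation.
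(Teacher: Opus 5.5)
Your proof is correct and follows the paper's own argument. Both compute the per-head weights $r_h/(m+r_h)$ and $1/(m+r_h)$, obtain $f_{W_h,v_h}(X_m^{(t)}(u,q))=q^\top v_h+c_hr_h/(m+r_h)$, subtract the one-token output and sum over heads. The only difference is that you spell out the passage from raw to canonical heads, which the paper leaves implicit by reusing $(W_h,v_h)$ for the canonical heads and by the reduction \eqref{eq:supp-canonical-reduction} proved in the supplementary material.
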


\begin{proof}

Fix a head $h$. The query is the final token $q$, so the first token has attention score $(q+tu)^\top W_hq$ and each of the $m$ copies of $q$ has attention score
$q^\top W_hq$. Its attention weight is therefore
\[
 \frac{e^{(q+tu)^\top W_hq}}{e^{(q+tu)^\top W_hq}+m\,e^{q^\top W_hq}}
 =\frac{r_h}{m+r_h},
\]
and the value of the first token exceeds that of $q$ by
$(q+tu)^\top v_h-q^\top v_h=c_h$. Hence
\begin{align*}
 f_{W_h,v_h}(X_m^{(t)}(u,q))
 &=\frac{r_h}{m+r_h}(q+tu)^\top v_h+\frac{m}{m+r_h}q^\top v_h\\
 &=q^\top v_h+\frac{c_hr_h}{m+r_h},
\end{align*}
whereas $f_{W_h,v_h}([q^\top])=q^\top v_h$. Subtracting and summing over
the heads gives \eqref{eq:rational}.

\end{proof}
\insung{I've written out the definition of $X_m$ and the proof of Lemma 4.1 in more detail.}

$\mathcal{R}$ in \eqref{eq:rational} can be naturally extended to the rational function of a real variable $z$:
\begin{equation}
 \mathcal{R}(z)=\sum_{h=1}^H\frac{c_hr_h}{z+r_h}
     =\frac{\mathcal{P}(z)}{\mathcal{Q}(z)},\qquad
 \mathcal{Q}(z)=\prod_{h=1}^H(z+r_h). 
 \label{eq:pq}
\end{equation}
For every fixed model satisfying \eqref{eq:id}, the random choice of $(u,q)$ yields distinct $r_1,\ldots,r_H$ and nonzero $c_1,\ldots,c_H$ with probability one. Consequently, $\mathcal{P}$ and $\mathcal{Q}$ have no common factor and $\deg \mathcal{P}<\deg \mathcal{Q}=H$. Each $-r_h$ is therefore a pole of
$\mathcal{R}$. The residue at $-r_h$, namely the coefficient of
$1/(z+r_h)$ in \eqref{eq:pq}, is $c_hr_h$. Since $r_h>0$, we also have
$\mathcal{Q}(k)\ne0$ for $k=1,\ldots,2H$. The following lemma therefore
applies.

\begin{lemma}[Rational interpolation]
Let $\mathcal{R}=\mathcal{P}/ \mathcal{Q}$, where $\mathcal{P}$ and $\mathcal{Q}$ have no common factor, $\mathcal{Q}$ is monic,
and $\deg \mathcal{P}<\deg \mathcal{Q}=H$. Suppose that $\mathcal{Q}(k)\ne0$ for
$k=1,\ldots,2H$. Then $\mathcal{R}(1),\ldots,\mathcal{R}(2H)$ uniquely determine
$\mathcal{P}$ and $\mathcal{Q}$.
\end{lemma}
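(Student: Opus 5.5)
We argue uniqueness directly. Suppose $(\mathcal{P},\mathcal{Q})$ and $(\tilde{\mathcal{P}},\tilde{\mathcal{Q}})$ both satisfy the hypotheses of the lemma: coprime numerator and denominator, monic denominator of degree exactly $H$, numerator of degree less than $H$, and denominator nonvanishing at $1,\ldots,2H$. Suppose further that they produce the same values $\mathcal{R}(k)=\tilde{\mathcal{R}}(k)$ for $k=1,\ldots,2H$.

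Since $\mathcal{Q}(k)\ne0$ and $\tilde{\mathcal{Q}}(k)\ne0$, we may clear denominators. This gives $\mathcal{P}(k)\tilde{\mathcal{Q}}(k)=\tilde{\mathcal{P}}(k)\mathcal{Q}(k)$ for each $k\in[2H]$. Now consider
\[
D(z)\coloneqq \mathcal{P}(z)\tilde{\mathcal{Q}}(z)-\tilde{\mathcal{P}}(z)\mathcal{Q}(z).
\]
Both products have degree at most $(H-1)+H=2H-1$, so $\deg D\le 2H-1$. On the other hand, $D$ vanishes at the $2H$ distinct points $1,\ldots,2H$. A nonzero polynomial of degree at most $2H-1$ has at most $2H-1$ roots, so $D\equiv0$. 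Hence $\mathcal{P}\tilde{\mathcal{Q}}=\tilde{\mathcal{P}}\mathcal{Q}$ as polynomials.

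Next we use coprimality. Since $\mathcal{Q}$ divides $\mathcal{P}\tilde{\mathcal{Q}}$ and $\gcd(\mathcal{P},\mathcal{Q})=1$, it follows that $\mathcal{Q}\mid\tilde{\mathcal{Q}}$. By symmetry, $\tilde{\mathcal{Q}}\mid\mathcal{Q}$. Both polynomials are monic of degree $H$, so $\mathcal{Q}=\tilde{\mathcal{Q}}$. Dividing the identity $\mathcal{P}\tilde{\mathcal{Q}}=\tilde{\mathcal{P}}\mathcal{Q}$ by the nonzero polynomial $\mathcal{Q}$ then gives $\mathcal{P}=\tilde{\mathcal{P}}$.

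One degenerate case needs a remark: $\mathcal{P}$ might be the zero polynomial. Coprimality with a degree-$H$ polynomial then forces $H=0$, in which case $\mathcal{Q}=1$ and the claim is immediate. Alternatively, this case is excluded in the application because the $c_h$ are nonzero.

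The main point requiring care is the degree count: the $2H$ samples exactly match the $2H$ unknowns ($H$ coefficients of $\mathcal{P}$ and $H$ non-leading coefficients of $\mathcal{Q}$), and $\deg D\le 2H-1$ is what makes the root count close. I would also note that the argument yields a constructive procedure. One solves the linear system $\mathcal{P}(k)=\mathcal{R}(k)\mathcal{Q}(k)$, $k\in[2H]$, in these $2H$ unknowns. The uniqueness argument above shows that any solution, once reduced, coincides with the true pair. Full-rank solvability of this linear system is not needed for the statement as posed, since the lemma only asserts that the samples determine $\mathcal{P}$ and $\mathcal{Q}$.
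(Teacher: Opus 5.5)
Your proof is correct and follows the paper's argument: the cross-multiplied difference has degree at most $2H-1$ but vanishes at the $2H$ points $1,\ldots,2H$, so it is identically zero. Coprimality then gives $\mathcal{Q}\mid\tilde{\mathcal{Q}}$, and since both are monic of degree $H$, $\mathcal{Q}=\tilde{\mathcal{Q}}$ and hence $\mathcal{P}=\tilde{\mathcal{P}}$; your extra remarks (the symmetric divisibility and the $\mathcal{P}=0$ case) are harmless but not needed.
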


\begin{proof}
Suppose that $\mathcal{P}_1/\mathcal{Q}_1$ and $\mathcal{P}_2/\mathcal{Q}_2$ both satisfy the assumptions and
agree at $1,\ldots,2H$. Since neither denominator vanishes at these
points, the polynomial $\mathcal{D}=\mathcal{P}_1\mathcal{Q}_2-\mathcal{P}_2\mathcal{Q}_1$
vanishes at all $2H$ points. Moreover,
$\deg \mathcal{D}\le (H-1)+H=2H-1$. Hence $\mathcal{D}$ is identically zero, so $\mathcal{P}_1\mathcal{Q}_2=\mathcal{P}_2\mathcal{Q}_1$. Since $\mathcal{P}_1$ and
$\mathcal{Q}_1$ have no common factor, $\mathcal{Q}_1$ divides $\mathcal{Q}_2$. Both denominators
are monic and have degree $H$, so $\mathcal{Q}_1=\mathcal{Q}_2$, and consequently
$\mathcal{P}_1=\mathcal{P}_2$.
\end{proof}




Thus, when $H$ is known, $\mathcal{R}(1),\ldots,\mathcal{R}(2H)$ determine $\mathcal{P}$ and $\mathcal{Q}$
uniquely by the method of undetermined coefficients. Factoring $\mathcal{Q}$
recovers the poles, hence the unordered set $\{r_1,\ldots,r_H\}$, and
taking logarithms gives each $s_h$. Dividing each residue by the
corresponding $r_h$ recovers $c_h$. We write
\[
 D_t(u,q)\coloneqq\{(s_h,c_h):h\in[H]\}
\]
for the resulting unordered set. We call the procedure that recovers
$D_t(u,q)$ from the repeated-token responses a local decoder. It recovers
one pair $(s_h,c_h)$ for each head, while leaving the ordering of the heads
unknown. Given the one-token output $F_M([q^\top])$, each local decoder uses
$2H$ repeated-token queries.

\sunyeop{we should determine how we treat $H$ and $H_0$ differently}
\insung{Handled it this way for now. The main flow assumes $H$ is known, matching the sharp bound in Thm.~\ref{thm:main}, and the paragraph above covers the case where only $H_0$ is available via the least-degree search, which is what Cor.~\ref{cor:h0} relies on. Let me know if you'd rather lead with the $H_0$ version instead.}

\sunyeop{I think we might explain how it is actually implemented this in the appendix and work only on the exact model}

Under our computation assumptions, the denominator polynomial is factored
exactly. Our numerical implementation instead solves the interpolation
system in multiprecision and computes the denominator roots as the
eigenvalues of a companion matrix.

\paragraph{Companion-matrix realization.}
In the main setting, $\deg\mathcal Q=H$ and $\deg\mathcal P<H$. We write
$\mathcal P(z)=\sum_{\ell=0}^{H-1}p_\ell z^\ell$ and
$\mathcal Q(z)=z^H+\sum_{\ell=0}^{H-1}q_\ell z^\ell$.
The observations satisfy
$\mathcal P(m)=\mathcal R(m)\mathcal Q(m)$ for $m=1,\ldots,2H$,
giving a linear system for the $2H$ coefficients. The standard
$H\times H$ companion matrix $C_{\mathcal Q}$ has last column
$(-q_0,\ldots,-q_{H-1})^\top$, subdiagonal entries equal to one, and
satisfies $\det(zI-C_{\mathcal Q})=\mathcal Q(z)$. Its eigenvalues are
therefore $-r_1,\ldots,-r_H$, from which
$s_h=\log r_h$ and
$c_h=\mathcal P(-r_h)/(r_h\mathcal Q'(-r_h))$ are recovered.
Our numerical implementation uses multiprecision arithmetic. Using
conventional linear-system and eigenvalue algorithms, both steps require
$O(H^3)$ arithmetic operations per local decoder.

\insung{Cleaned up this paragraph: "the model structure certifies" overstated things, since the roots being negative reals follows immediately from $r_h=e^{s_h}>0$, so I stated it that way. Also switched the remaining $^T$ to $^\top$ for consistency.}

%% file: sections/global_additive_head_matching.tex
\subsection{Head Matching Across Queries}
\label{sec:global}

\sunyeop{this section was previously named additive matching. was it better?}
\insung{I think the current name is more intuitive and works well as it is.}
\insung{I think it would be good to have at least one figure illustrating the workflow of our method. In addition, it would be nice to include some simple examples and figures in the supplementary material.}

Because each $D_t(u,q)$ is unordered, the learner cannot directly
determine which outputs from different local decoders were produced by the
same head.  To align them, let $O_{\mathbf U}$ and $O_{\mathbf Q}$ be sampled
independently and uniformly from the set of orthogonal matrices. Independently, let $\Lambda_{\mathbf U}$ and $\Lambda_{\mathbf Q}$ be diagonal matrices
with independent entries uniform on $[1,2]$. Set\footnote{Although the reconstruction formulas permit any invertible $\mathbf U$ and $\mathbf Q$ satisfying the required nondegeneracy conditions, we use this construction so that the matrix inversions remain well conditioned, reducing error amplification in finite-precision computation.}
\begin{equation}
 \mathbf U=\Lambda_{\mathbf U}O_{\mathbf U},
 \qquad
 \mathbf Q=O_{\mathbf Q}\Lambda_{\mathbf Q}.
 \label{eq:conditioned-bases}
\end{equation}
and write
$\mathbf U=[u_1,\ldots,u_d]^\top$ and
$\mathbf Q=[q_1,\ldots,q_d]$.
All singular values of $\mathbf U$ and $\mathbf Q$ lie in $[1,2]$.
The learner applies the local decoder to the following pairs:
\begin{align}
 &D_t(u_i,q_j)
 &&i,j\in[d],\label{eq:grid}\\
 &D_t(u_1+u_i,q_1)
 &&i=2,\ldots,d,\label{eq:ubridge}\\
 &D_t(u_i,q_1+q_j)
 &&i\in[d],\quad j=2,\ldots,d.
 \label{eq:qbridge}
\end{align}
Since $s_h(u,q)=tu^\top W_hq$ is bilinear in $(u,q)$,
\begin{align}
 s_h(u_1+u_i,q_1)
 &=s_h(u_1,q_1)+s_h(u_i,q_1),\label{eq:add-u}\\
 s_h(u_i,q_1+q_j)
 &=s_h(u_i,q_1)+s_h(u_i,q_j).\label{eq:add-q}
\end{align}
The following lemma uses these identities to match entries across the sets and recover every $(W_h,v_h)$.

\begin{lemma}[Consistent head labeling]\label{lem:labeling}
Suppose every set in \eqref{eq:grid}--\eqref{eq:qbridge} is recovered
exactly, and that for all $h_1,h_2,h_3\in[H]$ and all $i,j$
in the ranges specified in \eqref{eq:ubridge} and \eqref{eq:qbridge},
\begin{align*}
 &s_{h_1}(u_1+u_i,q_1)=s_{h_2}(u_1,q_1)+s_{h_3}(u_i,q_1)\\
 &\qquad\Longrightarrow\ h_1=h_2=h_3,\\
 &s_{h_1}(u_i,q_1+q_j)=s_{h_2}(u_i,q_1)+s_{h_3}(u_i,q_j)\\
 &\qquad\Longrightarrow\ h_1=h_2=h_3.
\end{align*}
Then every pair $(W_h,v_h)$ is recovered up to permutation.
\end{lemma}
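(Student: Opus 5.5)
The plan is to fix a labeling on the reference set $D_t(u_1,q_1)$, propagate it to every other recovered set through the bridge queries, and then solve linear systems for $W_h$ and $v_h$.

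\textbf{Step 1: distinctness within a set.} The hypothesis gives that the first coordinates within any single recovered set are pairwise distinct. For example, take the sets in \eqref{eq:ubridge} and \eqref{eq:qbridge}. Suppose $s_{h}(u_i,q_1)=s_{g}(u_i,q_1)$ with $h\ne g$. Adding $s_{h}(u_1,q_1)$ to both sides and using \eqref{eq:add-u} gives
\[
s_h(u_1+u_i,q_1)=s_h(u_1,q_1)+s_g(u_i,q_1),
\]
which the hypothesis forbids. Analogous arguments apply to $D_t(u_1,q_1)$, $D_t(u_i,q_1)$ and $D_t(u_i,q_j)$. (The case $i=1$ of the $u$-bridge is not in the stated range, but $D_t(u_1,q_1)$ is handled via \eqref{eq:qbridge} with $i=1$.) Hence each recovered set has $H$ distinct first coordinates. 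I label the pairs of $D_t(u_1,q_1)$ arbitrarily by $h\in[H]$; this is the one global permutation ambiguity.

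\textbf{Step 2: propagate labels.} For each $i\ge2$ and each labeled element $(s,c)\in D_t(u_1,q_1)$, I search for elements $(s',c')\in D_t(u_i,q_1)$ and $(s'',c'')\in D_t(u_1+u_i,q_1)$ satisfying $s''=s+s'$.

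By \eqref{eq:add-u}, the true same-head triple $(h,h,h)$ always satisfies this relation. By the hypothesis, it is the only triple that does. So the match is unique, and it transfers label $h$ to an element of $D_t(u_i,q_1)$. This labels every set $D_t(u_i,q_1)$ with $i\in[d]$.

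Next, for each $i\in[d]$ and $j\ge2$, I use \eqref{eq:add-q} and the second implication in the same way. Starting from the labeled $D_t(u_i,q_1)$, matching against $D_t(u_i,q_j)$ and $D_t(u_i,q_1+q_j)$ labels every grid set $D_t(u_i,q_j)$. Because each transfer is forced, all labels agree with a single permutation of the true heads.

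\textbf{Step 3: linear solve.} For label $h$, I assemble the $d\times d$ matrix $S_h$ with entries $(S_h)_{ij}=s_h(u_i,q_j)/t=u_i^\top W_hq_j$. This gives $S_h=\mathbf U W_h\mathbf Q$, so
\[
W_h=\mathbf U^{-1}S_h\mathbf Q^{-1}.
\]
Both inverses exist because all singular values of $\mathbf U$ and $\mathbf Q$ lie in $[1,2]$.

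Similarly, the values $c_h(u_i,q_1)/t=u_i^\top v_h$ for $i\in[d]$ form the vector $\mathbf U v_h$, so $v_h=\mathbf U^{-1}(\mathbf U v_h)$. The $c$-coordinates in the other grid sets give redundant checks.

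\textbf{Main obstacle.} The delicate point is Step 2. One must make sure that a forced match really pins down a single element in each of the two target sets simultaneously. This needs the within-set distinctness from Step 1, which prevents two elements of $D_t(u_i,q_1)$ from sharing an $s$-value. It also needs care that the reference set used to derive that distinctness lies inside the ranges where the hypothesis is assumed. Everything after that is bookkeeping.
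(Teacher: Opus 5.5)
Your proof is correct and follows the paper's route. You assign arbitrary labels on $D_t(u_1,q_1)$, propagate them through \eqref{eq:add-u} to the $q_1$-column and then through \eqref{eq:add-q} to every grid set, and invert $\mathbf U$ and $\mathbf Q$; your Step~1 also makes explicit the within-set distinctness that the paper's ``no other combination does'' leaves implicit. The paper additionally orders the steps so that $v_h$ is recovered from the labeled $q_1$-column \emph{before} the remaining sets are decoded, which lets the baselines $F_M([q_j^\top])$ and $F_M([(q_1+q_j)^\top])$ be computed as $q^\top v_{\mathrm{sum}}$ instead of queried. The lemma's hypothesis that all sets are given makes this unnecessary here, but it is what justifies the single one-token query counted in Theorem~\ref{thm:main}.
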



\begin{proof}
The learner first queries the one-token output $F_M([q_1^\top])$, which suffices to decode $D_t(u_i,q_1)$ for $i\in[d]$ and $D_t(u_1+u_i,q_1)$ for $i=2,\ldots,d$. Then the learner assigns arbitrary labels \(1,\ldots,H\) to the elements of \(D_t(u_1,q_1)\). For each \(i=2,\ldots,d\), \eqref{eq:add-u} propagates these labels to \(D_t(u_i,q_1)\).
Entries of the same head satisfy
\[
 s_h(u_1+u_i,q_1)=s_h(u_1,q_1)+s_h(u_i,q_1),
\]
and by the hypothesis of the lemma no other combination does.
For each label $h$, collect the vector
\[
 \mathbf c_h\coloneqq
 \bigl[c_h(u_1,q_1),\ldots,c_h(u_d,q_1)\bigr]^\top
 =t\mathbf Uv_h.
\]
Since $u_1,\ldots,u_d$ are linearly independent, $\mathbf U$ is
invertible, and hence
$v_h=t^{-1}\mathbf U^{-1}\mathbf c_h$.

Let $v_{\mathrm{sum}}\coloneqq \sum_{h=1}^H v_h$. Since every head places weight one on the single token of a one-token sequence,
\[
 F_M([q^\top])=\sum_{h=1}^H q^\top v_h=q^\top v_{\mathrm{sum}}
 \qquad\text{for every }q\in\R^d.
\]
The learner therefore computes the one-token outputs at $q_j$ and $q_1+q_j$ without further value queries, and decodes the remaining sets in \eqref{eq:grid} and \eqref{eq:qbridge}.

For each $i\in[d]$ and $j=2,\ldots,d$, the set $D_t(u_i,q_1)$ is already labeled, and \eqref{eq:add-q} determines the labels in
$D_t(u_i,q_j)$ through the equality $ s_h(u_i,q_1+q_j)=s_h(u_i,q_1)+s_h(u_i,q_j),$
and every set $D_t(u_i,q_j)$ thus receives the same labels. Collecting $(S_h)_{ij}\coloneqq s_h(u_i,q_j)$ gives $S_h=t\mathbf{U}W_h\mathbf{Q}$. Since $\mathbf{U}$ and $\mathbf{Q}$ are invertible,
\begin{equation}
 W_h=t^{-1}\mathbf{U}^{-1}S_h\mathbf{Q}^{-1},
 \qquad
 v_h=t^{-1}\mathbf{U}^{-1}\mathbf c_h.
 \label{eq:reconstruct}
\end{equation}
Thus every $(W_h,v_h)$ is recovered, up to the arbitrary labeling of $D_t(u_1,q_1)$.
\end{proof}

\sunyeop{The choice of $U$,$Q$ and $t$ bellow is just for numerical precision. We might mention this.}

%% file: sections/main_result.tex
\section{Recovery Guarantees and Extensions}
\subsection{Exact Recovery and Query Complexity}
\label{sec:main-result}

Throughout, we use the fixed public value $t=1$, independently of the target parameters. Exact recovery holds for every fixed $t\ne0$. The effect of other choices on numerical conditioning is discussed in \suppsec{supp:numerical}.

\begin{theorem}[Exact multi-head recovery] \label{thm:main}
Let $d,H\ge1$, and let $M$ be a model with $H$ canonical heads,
enumerated as $\operatorname{Can}(M)=\{(W_h,v_h):h\in[H]\}$, satisfying
\eqref{eq:id}. Assuming that every computation is exact, for any fixed $t\ne0$, Algorithm~\ref{alg:main} recovers every pair $(W_h,v_h)$ up to permutation with probability one. It uses exactly \begin{equation} 4Hd^2-2H+1 \label{eq:count} \end{equation} value queries, each of length at most $2H+1$. 
\end{theorem}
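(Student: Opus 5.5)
The plan is to assemble Theorem~\ref{thm:main} from the two lemmas of Section~\ref{sec:local} and Lemma~\ref{lem:labeling}. There are three separate tasks: (i) show that every local decoder invoked by Algorithm~\ref{alg:main} returns the exact unordered set $D_t(u,q)$; (ii) show that the matching hypotheses of Lemma~\ref{lem:labeling} hold almost surely; (iii) count the queries and their lengths. Once (i) and (ii) are in place, Lemma~\ref{lem:labeling} together with the reconstruction formula \eqref{eq:reconstruct} yields every $(W_h,v_h)$ up to the arbitrary labeling of $D_t(u_1,q_1)$.

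For (i), fix one of the $2d^2-1$ pairs $(u,q)$ in \eqref{eq:grid}--\eqref{eq:qbridge}. The rational-interpolation lemma requires three things: the $r_h$ are distinct, each $c_h\ne0$, and $\mathcal Q(k)\ne0$ for $k=1,\ldots,2H$. The last holds automatically because $r_h>0$. Distinctness of the $r_h$ means $t\,u^\top(W_h-W_g)q\ne0$ for all $h\ne g$, and $c_h\ne0$ means $t\,u^\top v_h\ne0$. By \eqref{eq:id}, $W_h-W_g\ne0$ and $v_h\ne0$, so each condition is the nonvanishing of a nonzero polynomial in the entries of $(\mathbf U,\mathbf Q)$. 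Here $u$ is a row of $\mathbf U$ or a sum of two rows, and $q$ is a column of $\mathbf Q$ or a sum of two columns.

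For (ii), the first hypothesis of Lemma~\ref{lem:labeling} fails only if
\[
u_1^\top(W_{h_1}-W_{h_2})q_1+u_i^\top(W_{h_1}-W_{h_3})q_1=0 .
\]
Suppose $(h_1,h_2,h_3)$ is not constant. Then at least one of $W_{h_1}-W_{h_2}$ and $W_{h_1}-W_{h_3}$ is nonzero. Since $u_1$ and $u_i$ are distinct rows of $\mathbf U$, the left-hand side is therefore a nonzero polynomial. The second hypothesis is handled symmetrically using $q_1$ and $q_j$.

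The step I expect to be the main obstacle is turning ``nonzero polynomial'' into ``probability one''. The law of $(\mathbf U,\mathbf Q)$ from \eqref{eq:conditioned-bases} is not absolutely continuous on $\R^{d\times d}\times\R^{d\times d}$, since it lives on the image of $[1,2]^d\times O(d)$. I would treat each bad event as the zero set of a real-analytic function on the manifold $[1,2]^d\times O(d)$ (and likewise for $\mathbf Q$). That function is either identically zero on a connected component or vanishes only on a null set for the product of the Lebesgue and Haar measures. So it suffices to exhibit one point in each of the two components $\det O=\pm1$ where it is nonzero.

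To find such a point, I would use $O(d)$-invariance. For a nonzero matrix $A$, a suitable orthonormal pair of directions makes $u_1^\top Aq_1$ nonzero. With $q_1$ generic, $Aq_1\ne0$, so I would choose $u_1$ aligned with it and then pick $u_i$ so that the second term does not cancel the first. Moving between the two components is done by flipping the sign of a row that the particular condition does not involve. When every row is involved, as can happen for small $d$, I would instead check that the sign flip keeps the nonvanishing. The positive diagonal scalings in $\Lambda_{\mathbf U}$ and $\Lambda_{\mathbf Q}$ do not affect any of this. A union bound over the finitely many conditions then gives probability one, and invertibility of $\mathbf U$ and $\mathbf Q$ is automatic because their singular values lie in $[1,2]$.

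For (iii), the number of local decoders is
\[
d^2+(d-1)+d(d-1)=2d^2-1 .
\]
Each decoder uses the $2H$ queries $X_m^{(t)}$ for $m=1,\ldots,2H$, of length $m+1\le 2H+1$, which gives $2H(2d^2-1)=4Hd^2-2H$ queries. Besides these, the proof of Lemma~\ref{lem:labeling} needs exactly one one-token query, $F_M([q_1^\top])$. All other one-token outputs needed at $q_j$ and $q_1+q_j$ are computed as $q^\top v_{\mathrm{sum}}$, with $v_{\mathrm{sum}}=\sum_h v_h$ known once the $v_h$ have been recovered from the $q_1$-column. This gives exactly $4Hd^2-2H+1$ queries, matching \eqref{eq:count}.
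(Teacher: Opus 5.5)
Your decomposition into local decoding, head matching and query counting is sound, and so is the count $2H(2d^2-1)+1$. Reducing each probability-one claim to finding a nonvanishing point on every connected component of $(1,2)^d\times O(d)\times(1,2)^d\times O(d)$ is also a legitimate alternative to the paper's argument, which conditions on the orthogonal factors and uses independence of $\mathbf U$ and $\mathbf Q$. The gap is in the witness step for the head-matching conditions when $d=2$. It comes from your claim that the scalings $\Lambda_{\mathbf U},\Lambda_{\mathbf Q}$ ``do not affect any of this,'' which is false.

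In $d=2$ the direction of $u_2$ is fixed by $u_1$ and by the component of $O_{\mathbf U}$. So you have no freedom to ``pick $u_i$ so that the second term does not cancel the first,'' and the cancellation can be forced identically. Concretely, take $H\ge3$ with $W_{h_2}=W_{h_1}-I$ and $W_{h_3}=W_{h_1}+J$, where $J=\begin{pmatrix}0&-1\\1&0\end{pmatrix}$. Then $A=I$ and $B=-J$. On the component $O_{\mathbf U}\in SO(2)$ the rows satisfy $a_2=Ja_1$, hence
\[
 u_1^\top Aq_1+u_2^\top Bq_1
 =\lambda_1a_1^\top q_1-\lambda_2a_1^\top J^\top Jq_1
 =(\lambda_1-\lambda_2)\,a_1^\top q_1 .
\]
If the scalings are treated as irrelevant (for example $\lambda_1=\lambda_2$), this vanishes on the entire component. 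Your sign-flip fallback also fails: the other component gives $(\lambda_1+\lambda_2)a_1^\top q_1\neq0$, but flipping back to $SO(2)$ gives $0$. In fact, for this model, if $\mathbf U$ were Haar-orthogonal without $\Lambda_{\mathbf U}$, the false match would occur with probability $1/2$. The scalings are therefore not merely a conditioning device. The $\mathbf Q$-side condition degenerates the same way when $A=-BJ$ and $O_{\mathbf Q}\in SO(2)$, where $\mu_1Ab_1+\mu_2Bb_2=(\mu_2-\mu_1)BJb_1$.

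The repair is to use the scalings exactly where you discarded them. For fixed directions and fixed $\lambda_i$, the matching expression is affine in $\lambda_1$.
\begin{itemize}
\item If $A\ne0$, choose $b_1$ with $Ab_1\ne0$ and take $a_1\parallel Ab_1$. This is possible on either component of $O_{\mathbf U}$, since $a_1$ is arbitrary and $a_2=\pm Ja_1$ selects the component. The coefficient of $\lambda_1$ is then $\mu_1\|Ab_1\|_2\neq0$, so all but one $\lambda_1\in(1,2)$ give a witness.
\item If $A=0$, align $a_i$ with $Bq_1$ instead.
\item On the $\mathbf Q$ side, argue symmetrically using $\mu_1$ and $\mu_j$.
\end{itemize}
This is the mechanism in the third item of Lemma~\ref{lem:supp-conditioned-null}: conditional on the orthogonal factor, $\lambda_iA^\top a_i+\lambda_jB^\top a_j=0$ determines at most one value of $\lambda_i$, and independence of $\mathbf U$ and $\mathbf Q$ then finishes the argument. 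With this fix your proof goes through.
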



\begin{proof}
For every fixed model satisfying \eqref{eq:id}, the choice in \eqref{eq:conditioned-bases} satisfies the local-decoding and head-matching conditions of Lemma~\ref{lem:labeling} with probability one (see \suppsec{supp:exact}). Applying Lemma~\ref{lem:labeling} then yields recovery of every $(W_h,v_h)$ up to permutation. The algorithm evaluates $d^2+(d-1)+d(d-1)=2d^2-1$ sets $D_t(u,q)$, each using $2H$ repeated-token queries, together with the single one-token query $F_M([q_1^\top])$. Hence the total query count is $2H(2d^2-1)+1=4Hd^2-2H+1$. The maximum query length is $2H+1$.
\end{proof}

\begin{algorithm}[t]
\caption{Exact recovery from value queries}
\label{alg:main}
\begin{algorithmic}[1]
\REQUIRE Dimensions $d,H\ge1$, a fixed $t\ne0$ and value-query access to $F_M$
\STATE Choose $\mathbf U,\mathbf Q$ according to
\eqref{eq:conditioned-bases}.
\STATE Query $F_M([q_1^\top])$ and, for every pair in
       \eqref{eq:grid}--\eqref{eq:qbridge} and every
       $m=1,\ldots,2H$, query the corresponding repeated-token sequence
\STATE Using $F_M([q_1^\top])$, recover $D_t(u_i,q_1)$ for all
       $i\in[d]$ and $D_t(u_1+u_i,q_1)$ for $i=2,\ldots,d$
\STATE Assign arbitrary labels to the elements of $D_t(u_1,q_1)$
\STATE Use \eqref{eq:add-u} to assign the same labels to every
       $D_t(u_i,q_1)$
\STATE Recover every $v_h$ from the labeled $c_h(u_i,q_1)$-values
\STATE Compute $F_M([q_j^\top])$ and $F_M([(q_1+q_j)^\top])$ from
       $\sum_{h=1}^H v_h$
\STATE Recover all remaining sets in \eqref{eq:grid} and
       \eqref{eq:qbridge}
\STATE Use \eqref{eq:add-q} to assign the same labels to every
       $D_t(u_i,q_j)$
\STATE Recover every $W_h$ and return the pairs $(W_h,v_h)$ using
       \eqref{eq:reconstruct}
\end{algorithmic}
\end{algorithm}

Note that all value queries in Algorithm~\ref{alg:main} are fixed before any oracle response is observed. The recovered pairs contain $H(d^2+d)$ real entries, while the algorithm uses $4Hd^2-2H+1$ value queries. Both quantities are of order $Hd^2$.

\paragraph{Further variants.} Theorem~\ref{thm:main} assumes that the canonical head count $H$ is known. If only the raw head count $H_0$ is known, then $H\le H_0$. The learner can use $2H_0$ repeated-token queries for each $(u,q)$ and determine $H$ as the smallest denominator degree consistent with the observed values. This uses $4H_0d^2-2H_0+1$ queries with maximum query length $2H_0+1$. Complete constructions and proofs, including a low-rank variant that
uses $O(Hrd)$ value queries when $\operatorname{rank}(W_h)\le r$ under
additional matrix-recovery assumptions, are given in
\suppsec{supp:variants}.

\paragraph{Offline computational complexity.}
The exact reconstruction performs $2d^2-1$ local decodings. Each local decoding solves a rational-interpolation system of dimension $2H$ and factors one degree-$H$ denominator polynomial. Hence the exact algorithm makes $2d^2-1$ calls to degree-$H$ univariate polynomial factorization. Under our computational assumptions, each such factorization is treated as an exact primitive operation.

Apart from these factorization calls, rational interpolation, coefficient recovery, and exhaustive head matching require $O(d^2H^3)$ exact arithmetic operations. The inverses $\mathbf U^{-1}$ and $\mathbf Q^{-1}$ are computed once in $O(d^3)$ operations, and forming $\mathbf U^{-1}S_h\mathbf Q^{-1}$ for all $H$ heads requires $O(d^3H)$ operations. The algorithm additionally uses $O(d^2H)$ exact logarithm evaluations and polynomially many comparison and equality tests.

In the numerical implementation, each degree-$H$ polynomial factorization is replaced by an eigenvalue computation for an $H\times H$ companion matrix. Using a conventional dense eigensolver, this requires $O(H^3)$ arithmetic operations per local decoding and therefore $O(d^2H^3)$ operations in total. Thus the numerical realization has overall arithmetic complexity $O(d^2H^3+d^3H)$.


%% file: sections/finite_precision_boundary.tex
\subsection{Recovery from Approximate Outputs}
\label{sec:finite}

We next consider an oracle whose returned value satisfies
\begin{equation}
 \bigl|\widetilde F(X)-F_M(X)\bigr|\le\tau
 \qquad\text{for every queried }X.
 \label{eq:approx-oracle}
\end{equation}
All subsequent computations on the returned values are assumed exact. Thus, the result concerns errors in the scalar oracle outputs and excludes offline rounding errors.

In the approximate-output variant, we directly query all $2d-1$ one-token outputs. Subtracting the corresponding one-token output from a repeated-token response produces error at most $2\tau$. After recovering the approximate poles, we estimate the coefficients $c_h$ by least squares and replace exact additive equalities with closest-candidate matching.

\begin{theorem}[Conditional recovery from approximate outputs]
\label{thm:robust-recovery}
Fix a model with $H\ge2$ canonical heads and fix $\mathbf U$ and
$\mathbf Q$ as in \eqref{eq:conditioned-bases}. Assume that, for every
pair $(u,q)$ appearing in \eqref{eq:grid}--\eqref{eq:qbridge}, the values
$r_1,\ldots,r_H$ are pairwise distinct, $c_h\ne0$ for every $h$, and the
equalities in \eqref{eq:add-u} and \eqref{eq:add-q} hold only when all
terms belong to the same head. There exist explicit constants
\[
 \tau_0>0
 \qquad\text{and}\qquad
 C_{\rm stab}<\infty,
\]
depending on the target model and the chosen $\mathbf U$ and $\mathbf Q$, such that whenever $0\le\tau<\tau_0$, the recovery algorithm returns
estimates $\{(\widehat W_h,\widehat v_h):h\in[H]\}$ and a permutation
$\pi$ satisfying
\[
 \max_{h\in[H]}
 \left(
  \|\widehat W_h-W_{\pi(h)}\|_F+
  \|\widehat v_h-v_{\pi(h)}\|_2
 \right)
 \le C_{\rm stab}\tau.
\]
The algorithm uses $4Hd^2-2H+2d-1$ value queries, each of length at
most $2H+1$.
\end{theorem}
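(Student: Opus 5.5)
The plan is a perturbation argument that follows the exact pipeline of Lemma~\ref{lem:labeling} stage by stage. At each stage we show the exact output is a locally Lipschitz function of the exact inputs, with explicit constants. We also show that the discrete choices made along the way (pole pairing, label matching) coincide with the exact ones once $\tau$ is below a margin determined by the model and by $\mathbf U,\mathbf Q$.

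\emph{Query count and local decoder.} The query count is immediate. There are $2d^2-1$ local decoders, each using $2H$ repeated-token queries, plus $2d-1$ directly queried one-token outputs (at $q_1,\ldots,q_d$ and at $q_1+q_j$ for $j=2,\ldots,d$). This gives $2H(2d^2-1)+2d-1=4Hd^2-2H+2d-1$, with maximal length $2H+1$.

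For a fixed pair $(u,q)$, the observed differences $\widetilde{\mathcal R}(m)$ satisfy $|\widetilde{\mathcal R}(m)-\mathcal R(m)|\le2\tau$. Write the undetermined-coefficients system $\mathcal P(m)-\mathcal R(m)\mathcal Q(m)=0$ for $m=1,\ldots,2H$ as $A(\mathcal R)\theta=b(\mathcal R)$ in the $2H$ unknowns $\theta=(p_0,\ldots,p_{H-1},q_0,\ldots,q_{H-1})$. By the uniqueness in the rational interpolation lemma, $A(\mathcal R)$ is invertible. Concretely, its determinant is, up to sign, a resultant-type quantity that is nonzero because $\mathcal P,\mathcal Q$ are coprime; this coprimality holds by distinct $r_h$ and $c_h\ne0$.

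Standard perturbation of linear systems then gives $\|\widehat\theta-\theta\|\le C_1\tau$ whenever $2\tau\|A^{-1}\|\cdot\|\partial A\|<1/2$. The roots $-r_h$ of $\mathcal Q$ are simple, so the implicit function theorem, or an explicit bound via $|\mathcal Q'(-r_h)|\ge\prod_{g\ne h}|r_h-r_g|$ together with Rouché on disjoint discs of radius $\tfrac12\min_{g\ne h}|r_h-r_g|$, shows the following. Each perturbed root lies in a unique disc, is real (complex roots come in conjugate pairs and the disc contains exactly one root), and is within $C_2\tau$ of $-r_h$. Taking logarithms is Lipschitz because $r_h\ge\min_h r_h>0$, so $|\widehat s_h-s_h|\le C_3\tau$. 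Given the poles, $c_h$ is estimated by least squares on the Cauchy-type matrix $[r_g/(m+r_g)]_{m,g}$. This matrix has full column rank $H$ for distinct $r_g$, and its least-squares solution depends Lipschitz-continuously on both the poles and the data. Hence $|\widehat c_h-c_h|\le C_4\tau$.

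\emph{Matching.} Define the margin
\[
\delta\coloneqq\min\bigl|s_{h_1}(u_1+u_i,q_1)-s_{h_2}(u_1,q_1)-s_{h_3}(u_i,q_1)\bigr|,
\]
taken over all non-diagonal triples, together with the analogous minimum for \eqref{eq:add-q}. By hypothesis, $\delta>0$. With all decoded $s$-values within $C_3\tau$ of the truth, true triples have residual at most $3C_3\tau$ and false triples have residual at least $\delta-3C_3\tau$. Thus closest-candidate matching reproduces the exact labeling whenever $\tau<\delta/(6C_3)$.

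The one-token outputs used to form $\mathcal R$ are now queried directly, not computed from the recovered $v_{\rm sum}$. Errors therefore do not cascade through the $q$-bridge stage, and every decoder enjoys the same $2\tau$ input error. After labeling, \eqref{eq:reconstruct} gives $\|\widehat W_h-W_{\pi(h)}\|_F\le |t|^{-1}\|\mathbf U^{-1}\|_2\|\mathbf Q^{-1}\|_2\, d\,C_3\tau$ and a similar bound for $\widehat v_h$. The singular values of $\mathbf U$ and $\mathbf Q$ lie in $[1,2]$, so the inverse norms are at most $1$.

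\emph{Constants and main obstacle.} Set $\tau_0$ to the minimum of the root-separation radius condition, the linear-system invertibility condition and $\delta/(6C_3)$, each taken over all $2d^2-1$ pairs. Set $C_{\rm stab}$ to the resulting product of constants. The main obstacle is the first stage: bounding $\|A(\mathcal R)^{-1}\|$ explicitly in terms of the model, and certifying that the perturbed denominator still has $H$ real simple negative roots so that the logarithm is defined. I would handle the invertibility through the coprimality/resultant route. I would handle the roots through the Rouché disc argument combined with conjugate-root symmetry, accepting constants that depend on $\min_{g\ne h}|r_h-r_g|$, $\min_h|c_h|$ and $\min_h r_h$ for each pair.
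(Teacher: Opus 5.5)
Your proposal is correct and follows essentially the same route as the paper's proof. The shared steps are: nonsingularity of the interpolation matrix via coprimality of $\mathcal P$ and $\mathcal Q$, a Weyl-type perturbation bound on the coefficient vector, Rouch\'e discs with conjugate symmetry to get simple real negative roots, least squares on the Cauchy-type matrix $G(\widehat r)$, closest-candidate matching under $6\epsilon_s<\Delta_{\rm match}$, and reconstruction through $\mathbf U^{-1}$ and $\mathbf Q^{-1}$ using directly queried one-token outputs. The differences are cosmetic: the paper applies Rouch\'e directly on circles of radius $C_r\tau$, and it lists a few extra explicit thresholds in $\tau_0$ such as $\gamma_c/(2C_G)$, which correspond to conditions you invoke implicitly.
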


\begin{proof}[Proof sketch]
For the finitely many pairs in
\eqref{eq:grid}--\eqref{eq:qbridge}, the assumed inequalities have
positive minimum gaps. Since the recovered roots are simple and the
relevant linear systems are nonsingular, rational interpolation and
coefficient estimation are locally Lipschitz near the exact responses.
Closest-candidate matching therefore remains unchanged for sufficiently
small $\tau$. Applying \eqref{eq:reconstruct} gives the stated bound. The complete proof, precise definitions, and explicit constants are given in \suppsec{supp:stability}.
\end{proof}

The same appendix gives a two-head construction showing that no uniform bound on $C_{\rm stab}$ follows from \eqref{eq:id} alone.

%% file: sections/ffn_extension.tex
\subsection{Extension to a One-Layer ReLU Transformer}
\label{sec:relu-extension}

Consider
\[
\begin{aligned}
 \mathrm{TF}(X)
 &=w_o^\top\operatorname{ReLU}\!\left(
   \sum_{h=1}^H A_h^\top y_h(X)\right),\\
 y_h(X)
 &=X^\top\softmax(XW_hq),
\end{aligned}
\]
where $A_h\in\mathbb R^{d\times m}$, $w_o\in\mathbb R^m$, and $q$ is
the final token. We assume that the learner may query both $X$ and
$-X$. The odd-component reduction of
\citet[Section~4.1]{bhattamishra2026attention} gives
\[
 \mathrm{TF}(X)-\mathrm{TF}(-X)
 =\sum_{h=1}^H v_h^\top y_h(X),
 \qquad v_h=A_hw_o.
\]
Thus, two Transformer queries simulate one value query to the scalar-output
multi-head attention model. Theorem~\ref{thm:main} therefore recovers the
pairs $(W_h,v_h)$ up to permutation using
$2(4Hd^2-2H+1)$ Transformer queries.

To recover the remaining ReLU part, first observe that length-one
queries give
\[
 \mathrm{TF}([x^\top])
 =\sum_{j=1}^m w_j\operatorname{ReLU}(\bar b_j^\top x),
 \qquad
 \bar b_j=\sum_{h=1}^H b_{hj},
\]
where $A_h=[b_{h1},\ldots,b_{hm}]$ and
$w_o=(w_1,\ldots,w_m)^\top$. For each nonzero $\bar b_j$, let $n_j$ be the
unique element of $\{\pm\bar b_j/\|\bar b_j\|_2\}$ whose first nonzero
coordinate is positive, and set $\gamma_j=w_j\|\bar b_j\|_2$. For
$\sigma\in\{-1,+1\}^m$, define
$Z(\sigma)=\sum_{j=1}^m\gamma_j\sigma_jn_j$.

\begin{theorem}[Functional recovery]
\label{thm:full-transformer}
Assume that (i) $H,d,m$ are known, (ii) the $W_h$ are pairwise distinct
and $v_h\ne0$, (iii) every $w_j$ and $\bar b_j$ is nonzero and no two
$\bar b_j$ are scalar multiples of one another, and (iv) $Z$ is
one-to-one. Then there is an algorithm that, with probability one over
its sampled query directions and scales, halts after finitely many exact
Transformer queries, each of length at most $2H+1$, and returns a
bias-free one-layer ReLU Transformer of width at most $2m+2$ that has
the same output as $\mathrm{TF}$ for every input $X$.
\end{theorem}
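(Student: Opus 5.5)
The plan splits $\mathrm{TF}$ into its odd and even parts and rebuilds each part with explicit ReLU units. The odd part, $\mathrm{TF}(X)-\mathrm{TF}(-X)=\sum_h v_h^\top y_h(X)$, is handled by the reduction above: applying Algorithm~\ref{alg:main} to the simulated value queries recovers all $(W_h,v_h)$ by Theorem~\ref{thm:main}. From that point on, every $y_h(X)=X^\top\softmax(XW_hq)$ is a known function of $X$.

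Write $\operatorname{ReLU}(z)=\tfrac12(z+|z|)$ and $\beta_j(X)\coloneqq\sum_h b_{hj}^\top y_h(X)$. Then
\[
 \mathrm{TF}(X)=\tfrac12\sum_h v_h^\top y_h(X)+\tfrac12\sum_{j=1}^m w_j\,|\beta_j(X)| .
\]
\emph{Linear term.} Two hidden units realize it: one unit whose per-head input columns are $v_h/2$ with output weight $+1$, and one whose columns are $-v_h/2$ with output weight $-1$. This works because $\operatorname{ReLU}(z)-\operatorname{ReLU}(-z)=z$.

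\emph{Absolute-value terms.} Each term $w_j|\beta_j|$ needs two units, via $|z|=\operatorname{ReLU}(z)+\operatorname{ReLU}(-z)$. That gives width $2m+2$ in total. It therefore suffices to recover, for every $j$, the stacked vector $\theta_j\coloneqq(b_{1j},\ldots,b_{Hj})\in\R^{Hd}$ up to sign, together with the correct scale of $w_j\theta_j$. The sign of $\theta_j$ is irrelevant because $|\cdot|$ is even.

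\textbf{Step 1 (one-token data).} Length-one queries give the even function $x\mapsto\tfrac12\sum_j\gamma_j|n_j^\top x|$, which is piecewise linear with kinks exactly on the hyperplanes $n_j^\perp$. These hyperplanes are distinct by (iii). Restricting to random lines in $x$-space locates the kinks, hence recovers the normals $n_j$. The jump of the directional derivative across $n_j^\perp$ equals $\gamma_j|n_j^\top\text{dir}|$, which recovers $\gamma_j$.

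\textbf{Step 2 (per-head vectors).} For multi-token inputs, the even part $\tfrac12[\mathrm{TF}(X)+\mathrm{TF}(-X)]$ is nonsmooth exactly on the hypersurfaces $\{\beta_j(X)=0\}$. I would probe sequences of length at most $2H+1$ along random one-parameter families $X(\lambda)$ that start at a repeated-token sequence $[x^\top;\ldots;x^\top]$. There $y_h=x$ for all $h$, so $\beta_j(X(0))=\bar b_j^\top x$. Locating the nonsmooth points $\lambda^\ast$ of the observed even part gives linear equations $\langle\theta_j,(y_1(X(\lambda^\ast)),\ldots,y_H(X(\lambda^\ast)))\rangle=0$ in the unknown $\theta_j$. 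The coefficients are known because $W_h$ is known. Collecting $Hd-1$ generic such equations for a fixed neuron determines $\theta_j$ up to scale. The scale and sign are then fixed by $\sum_h b_{hj}=\bar b_j=\pm(\gamma_j/w_j)n_j$, so $w_j\theta_j$ is determined up to sign, which is all the construction needs. Genericity of the equations should follow with probability one from analyticity of $y_h$ in the sampled directions and scales, using that the $W_h$ are distinct.

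\textbf{Main obstacle: attribution.} The hardest step is deciding which neuron $j$ a detected kink belongs to. My first approach is continuation from the one-token limit: as $\lambda$ moves away from $0$, the kink set deforms continuously from the known hyperplane arrangement $\{n_j^\perp\}$. In each linear region, the gradient of the even part at $\lambda=0$ equals $\tfrac12\sum_j\gamma_j\sigma_jn_j=\tfrac12 Z(\sigma)$ for the region's sign pattern $\sigma$. Injectivity (iv) lets me read $\sigma$ off the observed gradient, and hence identify which coordinate of $\sigma$ flips across each kink. This labels every kink by its neuron without combinatorial ambiguity.

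\textbf{Finiteness and equivalence.} The number of queries is finite because each kink is localized by finitely many exact evaluations of a function that is piecewise analytic on the line. The assembled $(2m+2)$-unit network agrees with $\mathrm{TF}$ for every $X$ by the identity displayed above.
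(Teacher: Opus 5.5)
Your odd-part reduction, the identity $\mathrm{TF}(X)=\tfrac12\sum_h v_h^\top y_h(X)+\tfrac12\sum_j w_j|\beta_j(X)|$, the normalization of $w_j\theta_j$ through $\sum_h b_{hj}=\bar b_j$ with $\operatorname{sign}(w_j)=\operatorname{sign}(\gamma_j)$, and the width-$(2m+2)$ assembly all match the paper. In Step~1, note that the kinks on one random line give one point on each hyperplane, not $n_j$ itself. You also need shifted lines and a matching of kinks across them, as the paper does.

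The genuine gap is Step~2, and it has three parts.
\begin{enumerate}
\item \textbf{Exact kink location.} Along a family $X(\lambda)$ with non-identical tokens, the even part is piecewise analytic, not piecewise linear. Your proposal only says each kink is ``localized''. With finitely many queries that means bracketed, and a bracket does not give the exact equation $\langle\theta_j,Y(\lambda^\ast)\rangle=0$, where $Y=(y_1,\ldots,y_H)$. An exact version would have to fit the known finite-dimensional form $\langle c,Y(\lambda)\rangle$ on each side of the kink, which you do not set up.
\item \textbf{Attribution.} This is the more serious problem. The observed gradient equals $\tfrac12Z(\sigma)$ only at the repeated-token point $\lambda=0$. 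There $\sigma=(\operatorname{sign}(n_j^\top x))_j$ is already computable from the recovered $n_j$, so (iv) adds nothing. At a kink with $\lambda^\ast\neq0$, the local form of the even part is $\tfrac12\sum_h\bigl(\sum_j\tau_jp_{hj}\bigr)^\top y_h(X)$, with $p_{hj}=\xi_jw_jb_{hj}$, $\xi_j\bar b_j=\|\bar b_j\|_2n_j$, and $\tau_j=\operatorname{sign}(\xi_j\beta_j(X))$. The attention weights depend on the query token, so no perturbation you describe moves all $y_h$ by a common vector. Hence nothing observable equals $Z(\tau)$ for the current region. ``Continuation'' from $\lambda=0$ is not a finite procedure, and nothing certifies that $\lambda$ is small enough to attribute kinks by proximity to $n_j^\perp$.
\item \textbf{Spanning.} Getting $Hd-1$ independent equations for neuron $j$ requires $\{Y(X):\langle\theta_j,Y(X)\rangle=0\}$ to span $\theta_j^\perp$. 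This does not follow from analyticity, or from linear independence of the heads, without a further argument.
\end{enumerate}

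The paper never locates kinks of the multi-token even part. It first picks $m$ points $q_\ell$ whose realized sign patterns form a nonsingular matrix $\Sigma$ (Lemma~\ref{lem:supp-sign-pattern-span}, valid also for $m>d$). It then queries $[(Tq_\ell+u_{\ell r}/T)^\top;(Tq_\ell)^\top;\ldots;(Tq_\ell)^\top]$. The logit differences are independent of $T$, while every $y_h$ equals $Tq_\ell+O(1/T)$. So for large $T$ every preactivation has the known sign $\Sigma_{\ell j}$. Then $T\{E-Tg_1(q_\ell)\}$, with $E$ the even response, is linear in the unknown $p_{hj}$, which are recovered by inverting a Cauchy matrix and then $\Sigma$.

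The unknown threshold on $T$ is handled by a geometric scale search that stops when two consecutive reconstructions agree. This is where (iv) enters. A wrong sign pattern $\tau$ adds a $T^2$ term with coefficient $q_\ell^\top\{Z(\tau)-Z(\sigma(q_\ell))\}\neq0$, and the randomized scales rule out premature agreement. This scaling is exactly the mechanism that makes $Z(\tau)$ observable away from the one-token limit, and it is what your attribution step is missing.
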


\begin{proof}[Proof sketch]
For length-one inputs,
$\mathrm{TF}([x^\top])=\sum_jw_j\operatorname{ReLU}(\bar b_j^\top x)$.
Slope changes along the queried lines recover all $(n_j,\gamma_j)$.
Crossing each hyperplane away from the others yields sign-vector
differences $\pm2e_j$, so the recovered sign vectors span $\mathbb R^m$
even when $m>d$. Queries from $m$ linearly independent sign vectors at
increasing scales then separate the ReLU units and attention heads. Each
preactivation changes sign at most once, and the one-to-one property of
$Z$ distinguishes different sign configurations, yielding eventual
recovery. Finally, $\operatorname{ReLU}(z)=(z+|z|)/2$ gives an equivalent
representation of width at most $2m+2$. Complete details are given in
\suppsec{supp:transformer}.
\end{proof}

%% file: sections/experiments.tex
\section{Experiments}
\label{sec:experiments}

\paragraph{Setup.}
Unless otherwise stated, each tested $(d,H)$ configuration contains
100 independently sampled target models. For each target and each
$h\in[H]$, the entries of $W_h\in\mathbb R^{d\times d}$ and
$v_h\in\mathbb R^d$ are drawn independently from
$\mathcal N(0,1/d)$. This scaling keeps their typical norms comparable
as $d$ varies. We use the same fixed scale $t=1$ for every target,
independently of its parameters. Let
$\{(\widehat W_h,\widehat v_h):h\in[H]\}$ denote the estimates returned
by the recovery algorithm. We compute 
\[ 
E_{\rm param}= \min_{\pi\in \Pi_H}\max_{h\in[H]} \left( \|\widehat W_h-W_{\pi(h)}\|_F+ \|\widehat v_h-v_{\pi(h)}\|_2 \right), 
\] 
where $\Pi_H$ is the set of permutations of $[H]$.

\paragraph{High-precision recovery.} We approximate the exact setting of Theorem~\ref{thm:main} using 180-digit oracle responses and offline computation. Across all tested configurations, every run recovers all heads with $E_{\rm param}<10^{-100}$. Table~\ref{tab:exact-recovery} shows representative settings. Complete results are given in \suppsec{supp:high-precision}.

\begin{table}[t]
\centering
{\small
\setlength{\tabcolsep}{2.5pt}
\begin{tabular}{@{}c r r c c c@{}}
\toprule
& & & \multicolumn{3}{c}{$E_{\rm param}$}\\
\cmidrule(lr){4-6}
$(d,H)$ & Params. & Queries & Min. & Median & Max.\\
\midrule
$(3,8)$
& 96 & 273
& $3.67\mathrm{e}{-142}$
& $1.46\mathrm{e}{-133}$
& $3.60\mathrm{e}{-124}$\\
$(8,8)$
& 576 & 2,033
& $9.55\mathrm{e}{-133}$
& $7.63\mathrm{e}{-128}$
& $9.72\mathrm{e}{-121}$\\
$(16,8)$
& 2,176 & 8,177
& $2.47\mathrm{e}{-128}$
& $7.18\mathrm{e}{-125}$
& $5.95\mathrm{e}{-118}$\\
$(32,8)$
& 8,448 & 32,753
& $1.58\mathrm{e}{-125}$
& $1.54\mathrm{e}{-121}$
& $1.22\mathrm{e}{-115}$\\
$(64,4)$
& 16,640 & 65,529
& $2.03\mathrm{e}{-153}$
& $3.45\mathrm{e}{-151}$
& $1.20\mathrm{e}{-145}$\\
$(64,8)$
& 33,280 & 131,057
& $8.29\mathrm{e}{-121}$
& $5.12\mathrm{e}{-118}$
& $7.70\mathrm{e}{-113}$\\
$(128,4)$
& 66,048 & 262,137
& $2.02\mathrm{e}{-152}$
& $3.24\mathrm{e}{-149}$
& $1.23\mathrm{e}{-144}$\\
$(128,8)$
& 132,096 & 524,273
& $8.23\mathrm{e}{-117}$
& $3.38\mathrm{e}{-114}$
& $9.97\mathrm{e}{-110}$\\
\bottomrule
\end{tabular}
}
\caption{Representative 180-digit recovery results with $t=1$.
Each setting contains 100 targets, and the algorithm recovers all $H$
heads for every target.
\textnormal{Params.} denotes $H(d^2+d)$.}
\label{tab:exact-recovery}
\end{table}

\paragraph{Approximate oracle outputs.}
We first test the local stability prediction by adding errors bounded by $\tau$ to high-precision oracle responses while using sufficiently accurate offline arithmetic to make rounding errors negligible. For each of $(d,H)=(3,2),(3,3),(4,4)$, we sample 10 models and run 10 independent perturbations per model at each value of $\tau$. The median slope of $\log E_{\rm param}$ against $\log\tau$ is $1.00$ in every setting, while the empirical amplification $E_{\rm param}/\tau$ varies by more than eight orders of magnitude, supporting local linear scaling with a highly variable proportionality factor.

As a practical instance, we round only the scalar oracle responses to IEEE 754 binary64~\citep{ieee754} while keeping the target models, queried inputs, and offline arithmetic fixed. The algorithm returns all $H$ heads with $E_{\rm param}<10^{-2}$ in 99/100 runs for $(d,H)=(3,2)$, 1/100 for $(3,4)$, and 0/100 for $(3,8)$ and $(8,4)$. All paired high-precision runs satisfy the same criterion. Thus, the failures arise from oracle rounding and persist despite high-precision offline computation. Full results are given in \suppsec{supp:approx-experiments}.

%% file: sections/limitations_and_conclusion.tex
\section{Conclusion} We present a randomized algorithm for recovering the canonical representation of multi-head softmax attention from scalar final-token outputs without orthogonal-subspace assumptions or known subspace bases. Under exact oracle responses and the stated computational assumptions, the algorithm recovers this representation up to permutation with probability one. Repeated-token rational interpolation separates the head contributions. Queries formed by adding selected directions establish a consistent head labeling across local decoders, after which linear systems recover every canonical pair $(W_h,v_h)$. When $H$ is known, the algorithm uses exactly $4Hd^2-2H+1$ value queries of maximum length $2H+1$. If only an upper bound $H_0$ is known, the bounds become $4H_0d^2-2H_0+1$ and $2H_0+1$.

For approximate outputs, we give conditions under which queries distinguish the heads and reconstruction is well conditioned. Sufficiently small output errors then yield proportional parameter errors with a model- and query-dependent constant. A two-head construction rules out a uniform bound over all identifiable models. Experiments confirm end-to-end recovery with high-precision outputs and show sensitivity to IEEE~754 binary64 outputs. Finally, the odd-component reduction recovers the effective attention heads through a bias-free ReLU feed-forward network. Under additional assumptions, we construct a functionally equivalent one-layer ReLU Transformer without a separate feed-forward-network learner or the restriction $m\le d$.

%% file: supplementary/supplementary_material.tex
\setcounter{secnumdepth}{2}
\input{supplementary/sections/01_canonical_identifiability}
\input{supplementary/sections/02_genericity_and_exact_recovery}
\input{supplementary/sections/03_further_exact_recovery_variants}
\input{supplementary/sections/03_01_known_upper_bound_h}
\input{supplementary/sections/03_02_low_rank_recovery}
\input{supplementary/sections/04_stability_under_approximate_outputs}
\input{supplementary/sections/04_01_proof_explicit_constants}
\input{supplementary/sections/04_02_two_head_lower_bound}
\input{supplementary/sections/05_binary_membership_queries}
\input{supplementary/sections/06_one_layer_relu_transformer}
\input{supplementary/sections/06_01_breakpoint_recovery}
\input{supplementary/sections/06_02_sign_pattern_spanning_scale_search}
\input{supplementary/sections/07_numerical_implementation_experiments}
\input{supplementary/sections/07_01_numerical_realization_choice_t}
\input{supplementary/sections/07_02_experimental_protocol}
\input{supplementary/sections/07_03_high_precision_recovery}
\input{supplementary/sections/07_04_approximate_binary64_outputs}

%% file: supplementary/sections/01_canonical_identifiability.tex
\section{Canonical Identifiability}
\label{supp:canonical}

This section proves Proposition~\ref{prop:canonical-identifiability}.  We
first record that canonicalization preserves the scalar input--output
function.  For every \(A\in\R^{d\times d}\), the attention weights of all
raw heads with \(W_h=A\) are identical. Since a head output
is linear in its value vector,
\[
 \sum_{\substack{h\in[H_0]\\ W_h=A}} f_{A,v_h}(X)
 =f_{A,C_M(A)}(X).
\]
Terms for which \(C_M(A)=0\) vanish.  Consequently, for every input
sequence \(X\),
\begin{equation}
 F_M(X)
 =
 \sum_{(A,c)\in\operatorname{Can}(M)} f_{A,c}(X).
 \label{eq:supp-canonical-reduction}
\end{equation}

It remains to show that heads with distinct matrices \(W_h\) are linearly
independent even when only the scalar output at the final token is
observed.  For the three-token input family used below, the part of a head
output that depends on the perturbation parameter \(t\) is proportional to
\[
 p_s(t)\coloneqq\frac{\exp(st)}{2+\exp(st)},
 \qquad s,t\in\R,
\]
where \(s=u^\top A q\) for the corresponding matrix \(A\).
The following lemma shows that these functions cannot cancel when their
scalar parameters are pairwise distinct.

\begin{lemma}[Linear independence of three-token curves]
\label{lem:supp-three-token-curves}
If \(s_1,\ldots,s_k\) are pairwise distinct real numbers, then the
functions \(p_{s_1},\ldots,p_{s_k}\) are linearly independent over
\(\R\).
\end{lemma}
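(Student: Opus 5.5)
We need to show that if $\sum_{i=1}^k a_i p_{s_i}(t)=0$ for all real $t$, then every $a_i=0$. The plan is to extend each $p_{s_i}$ to a meromorphic function of a complex variable and compare poles. Write
\[
 p_s(t)=1-\frac{2}{2+e^{st}}.
\]
For $s=0$ this is the constant $1/3$. For $s\ne0$ it is meromorphic in $t\in\mathbb C$, with simple poles exactly where $e^{st}=-2$, that is, at
\[
 t=\frac{\log 2+i\pi(2n+1)}{s},\qquad n\in\mathbb Z.
\]
A relation holding on $\R$ extends to all of $\mathbb C$ minus the poles by the identity theorem. The extended relation is between meromorphic functions on $\mathbb C$, which is connected, so the extension is legitimate.

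Next we separate the poles. For $n=0$, the pole of $p_s$ is $(\log 2+i\pi)/s$. The map $s\mapsto(\log2+i\pi)/s$ is injective on $\R\setminus\{0\}$. We still need to rule out that this pole of $p_{s_i}$ coincides with some pole of $p_{s_j}$ for $j\ne i$. Suppose
\[
 \frac{\log2+i\pi}{s_i}=\frac{\log2+i\pi(2n+1)}{s_j}.
\]
Then $s_j/s_i=(\log2+i\pi(2n+1))/(\log2+i\pi)$ must be real. The two complex numbers are therefore real multiples of one another, and comparing real parts forces the ratio to be $1$. This gives $n=0$ and $s_j=s_i$, contradicting distinctness. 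So each $p_{s_i}$ with $s_i\ne0$ has a pole that no other $p_{s_j}$ shares.

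The residue of $p_s$ at that pole is nonzero: it equals $-2/(s e^{st})=1/s$ there. Taking residues of the relation at that point then gives $a_i/s_i=0$, so $a_i=0$ for every $i$ with $s_i\ne0$.

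At most one index has $s_i=0$, and for it the relation reduces to $a_i/3=0$, so that coefficient vanishes as well. The only delicate step is the pole-disjointness argument. The realness comparison handles it, and alternatively one can use the injectivity of $s\mapsto(\log 2+i\pi)/s$ combined with the fact that all poles of $p_s$ lie on the ray through $(\log2+i\pi(2n+1))/s$ only for real $s$.
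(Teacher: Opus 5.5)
Your proof is correct and follows essentially the same route as the paper: you extend $p_s$ meromorphically, apply the identity theorem off the discrete pole set, show that pole sets are disjoint, read off the residue $a_i/s_i$ at an unshared pole, and treat the $s_i=0$ term separately. The only cosmetic difference is that the paper compares real parts directly (every pole of $p_s$ has real part $\log 2/s$, so a shared pole forces $s_i=s_j$), which makes your detour through the real ratio, and your closing ``alternatively'' remark, unnecessary.
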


\begin{proof}
For \(s\ne0\), define the complex-valued function
\[
 P_s(z)\coloneqq\frac{\exp(sz)}{2+\exp(sz)},
 \qquad z\in\mathbb C,
\]
and set \(P_0(z)\coloneqq 1/3\).  The function \(P_s\) agrees with \(p_s\)
on the real line.  Since its numerator and denominator are entire and the
denominator has only isolated zeros, \(P_s\) is meromorphic.

For \(s\ne0\), the poles of \(P_s\) are
\[
 z=\frac{\log 2+\mathrm{i}(2n+1)\pi}{s},
 \qquad n\in\mathbb Z.
\]
Every such pole is simple with residue \(1/s\).  Moreover, the pole sets
corresponding to two distinct nonzero real numbers \(s\) and \(s'\) are
disjoint.  Indeed, equality of one pole from each set implies, by comparing
real parts,
\[
 \frac{\log 2}{s}=\frac{\log 2}{s'},
\]
and hence \(s=s'\).

Now suppose that
\[
 \sum_{\ell=1}^k a_\ell p_{s_\ell}(t)=0
\]
for every real \(t\).  Let \(\mathcal P\) be the union of the poles of
\(P_{s_1},\ldots,P_{s_k}\), and define
\[
 D\coloneqq\mathbb C\setminus\mathcal P.
\]
The set \(\mathcal P\) is discrete, so \(D\) is connected.  The function
\[
 F(z)\coloneqq\sum_{\ell=1}^k a_\ell P_{s_\ell}(z)
\]
is holomorphic on \(D\) and vanishes on the real line.  The identity
theorem therefore gives \(F=0\) throughout \(D\).

Consider a pole belonging to \(P_{s_\ell}\) for some \(s_\ell\ne0\).
Since the pole sets are disjoint, the residue of \(F\) at this pole is
\(a_\ell/s_\ell\).  Because \(F\) vanishes on the corresponding punctured
neighborhood, this residue must be zero.  Hence \(a_\ell=0\) for every
\(s_\ell\ne0\).  If one of the \(s_\ell\) is zero, the only remaining term
is the constant \(a_\ell P_0=a_\ell/3\), so its coefficient is also zero.
\end{proof}

Lemma~\ref{lem:supp-three-token-curves} applies once the scalars
\(s_\ell=u^\top A_\ell q\) are pairwise distinct.  We next show that,
for pairwise distinct matrices \(A_1,\ldots,A_k\), one can choose \(u\)
and \(q\) so that this condition holds simultaneously for every
\(\ell\).

\begin{lemma}[Linear independence of distinct scalar-output heads]
\label{lem:supp-distinct-heads}
Let \(A_1,\ldots,A_k\in\R^{d\times d}\) be pairwise distinct.  If
\[
 \sum_{\ell=1}^k f_{A_\ell,c_\ell}(X)=0
 \qquad\text{for every }X\in\R^{3\times d},
\]
then \(c_\ell=0\) for every \(\ell\in[k]\).
\end{lemma}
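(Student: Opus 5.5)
We will use a three-token input family on which the head outputs reduce to the curves $p_s$, and then apply Lemma~\ref{lem:supp-three-token-curves}. Fix $u,q\in\R^d$ and $t\in\R$, and consider
\[
 X(t)\coloneqq[(q+tu)^\top;q^\top;q^\top]\in\R^{3\times d}.
\]
This is the repeated-token sequence $X_1^{(t)}(u,q)$ of \eqref{eq:repeat} with $m=2$ copies of $q$. The computation in the proof of the repeated-token identity, applied to a single head $f_{A,c}$ with $s=u^\top Aq$, gives
\[
 f_{A,c}(X(t))=q^\top c+\frac{e^{st}}{e^{st}+2}\,t\,u^\top c
 =q^\top c+t\,(u^\top c)\,p_s(t).
\]

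\textbf{Step 1: choose $u$ and $q$ so that the scalars $s_\ell$ are distinct.} For each pair $\ell\ne\ell'$, the map $(u,q)\mapsto u^\top(A_\ell-A_{\ell'})q$ is a nonzero bilinear form, since $A_\ell\ne A_{\ell'}$. Its zero set is therefore a proper algebraic subset of $\R^{2d}$. A finite union of such sets has measure zero, so generic $(u,q)$ make all $s_\ell=u^\top A_\ell q$ pairwise distinct. We may additionally require $u^\top c_\ell\ne0$ for every $\ell$ with $c_\ell\ne0$, since this again excludes only finitely many hyperplanes in $u$.

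\textbf{Step 2: reduce to a linear relation among the $p_{s_\ell}$.} Summing the head identity over $\ell$, the hypothesis gives, for all real $t$,
\[
 \sum_\ell q^\top c_\ell+t\sum_\ell (u^\top c_\ell)\,p_{s_\ell}(t)=0 .
\]
Setting $t=0$ forces $\sum_\ell q^\top c_\ell=0$. Dividing by $t$ for $t\ne0$ and extending to $t=0$ by continuity yields $\sum_\ell (u^\top c_\ell)p_{s_\ell}\equiv0$ on $\R$. Lemma~\ref{lem:supp-three-token-curves} then gives $u^\top c_\ell=0$ for every $\ell$.

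\textbf{Step 3: conclude $c_\ell=0$.} The step needing the most care is that a single choice of $u$ only controls one linear functional of each $c_\ell$. There are two ways to finish. In the first, suppose some $c_\ell\ne0$; the generic choice of $u$ in Step 1 then gives $u^\top c_\ell\ne0$, contradicting Step 2. In the second, avoid coupling the choices: the set of $(u,q)$ with distinct $s_\ell$ is open and dense, so $u^\top c_\ell=0$ holds on a nonempty open set of $u$. By continuity and linearity this forces $c_\ell=0$. Either route completes the proof.
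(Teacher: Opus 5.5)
Your proposal is correct and follows essentially the same route as the paper. It uses the same three-token family $[(q+tu)^\top;q^\top;q^\top]$, the same reduction to $\sum_\ell (u^\top c_\ell)\,p_{s_\ell}\equiv0$ by subtracting the $t=0$ value and using continuity, and, in your second finish, the same conclusion from $u\mapsto u^\top c_\ell$ vanishing on a nonempty open set of $u$. One small slip: with two copies of $q$ this sequence is $X_2^{(t)}(u,q)$, not $X_1^{(t)}(u,q)$.
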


\begin{proof}
For \(u,q\in\R^d\) and \(t\in\R\), consider
\[
 X(t)=[(q+tu)^\top;q^\top;q^\top].
\]
The final token is \(q\). For head \(\ell\), the quantities inside the
exponential for the first token and either copy of \(q\) differ by
\(t u^\top A_\ell q\).
Writing
\[
 s_\ell=u^\top A_\ell q,
 \qquad
 b_\ell=u^\top c_\ell,
\]
gives
\begin{equation}
 f_{A_\ell,c_\ell}(X(t))
 =q^\top c_\ell+t b_\ell p_{s_\ell}(t).
 \label{eq:supp-three-token-head}
\end{equation}

Define
\[
 \mathcal U
 \coloneqq
 \R^d\setminus
 \bigcup_{1\le\ell<j\le k}
 \ker\!\bigl((A_\ell-A_j)^\top\bigr).
\]
Every kernel removed in this definition is a proper linear subspace
because \(A_\ell\ne A_j\).  Hence \(\mathcal U\) is a nonempty open set.
Fix \(u\in\mathcal U\).  For every \(\ell\ne j\), the linear functional
\[
 q\longmapsto u^\top(A_\ell-A_j)q
\]
is nonzero.  We may therefore choose \(q\) outside the finite union of its
zero hyperplanes.  For this choice, \(s_1,\ldots,s_k\) are pairwise
distinct.

Apply the assumed head identity to \(X(t)\), subtract its value at \(t=0\),
and use \eqref{eq:supp-three-token-head}.  For every \(t\ne0\),
\[
 \sum_{\ell=1}^k b_\ell p_{s_\ell}(t)=0.
\]
Continuity gives the same identity at \(t=0\).  By
Lemma~\ref{lem:supp-three-token-curves}, \(b_\ell=u^\top c_\ell=0\) for
every \(\ell\).  This conclusion holds for every \(u\) in the nonempty
open set \(\mathcal U\).  A linear functional that vanishes on a nonempty
open set vanishes everywhere, so \(c_\ell=0\) for all \(\ell\).
\end{proof}

\begin{proof}[Proof of Proposition~\ref{prop:canonical-identifiability}\unskip]
If \(\operatorname{Can}(M)=\operatorname{Can}(M')\), then
\eqref{eq:supp-canonical-reduction} immediately gives
\(F_M(X)=F_{M'}(X)\) for every sequence \(X\), and in particular for every
\(X\in\R^{3\times d}\).

Conversely, suppose that \(F_M(X)=F_{M'}(X)\) for every
\(X\in\R^{3\times d}\).  Enumerate the distinct matrices appearing as head parameters \(W_h\)
in either raw model as \(A_1,\ldots,A_k\), and define
\[
 c_\ell\coloneqq C_M(A_\ell)-C_{M'}(A_\ell).
\]
Applying \eqref{eq:supp-canonical-reduction} to both models yields
\[
 \sum_{\ell=1}^k f_{A_\ell,c_\ell}(X)=0
 \qquad\text{for every }X\in\R^{3\times d}.
\]
Lemma~\ref{lem:supp-distinct-heads} gives \(c_\ell=0\) for all \(\ell\).
Thus \(C_M(A)=C_{M'}(A)\) for every \(A\in\R^{d\times d}\).  Removing the
zero aggregates from these identical collections proves
\(\operatorname{Can}(M)=\operatorname{Can}(M')\).
\end{proof}

%% file: supplementary/sections/02_genericity_and_exact_recovery.tex
\section{Probability-One Local Recovery and Consistent Head Labeling}
\label{supp:exact}

This section proves that the randomized construction in
\eqref{eq:conditioned-bases} satisfies all local-recovery and consistent
head-labeling conditions simultaneously with probability one. Throughout,
the model is fixed, \(t\ne0\) is fixed, and the probability is only
over the learner's draw of \(\mathbf U\) and \(\mathbf Q\).

Write
\[
 \Lambda_{\mathbf U}=\operatorname{diag}(\lambda_1,\ldots,\lambda_d),
 \qquad
 \Lambda_{\mathbf Q}=\operatorname{diag}(\mu_1,\ldots,\mu_d),
\]
where the \(\lambda_i\) and \(\mu_j\) are independent and uniform on
\([1,2]\).  Let \(a_i^\top\) be row \(i\) of \(O_{\mathbf U}\), and let
\(b_j\) be column \(j\) of \(O_{\mathbf Q}\).  Then
\[
 u_i=\lambda_i a_i,
 \qquad
 q_j=\mu_j b_j.
\]
The two orthonormal systems \((a_i)_{i=1}^d\) and
\((b_j)_{j=1}^d\) are independent.  Let \(g,h\in\R^d\) be independent
random vectors with i.i.d.\ standard Gaussian entries.  For every fixed
\(i\) and \(j\), the pair \((a_i,b_j)\) has the same distribution as
\[
 \left(\frac{g}{\|g\|_2},\frac{h}{\|h\|_2}\right).
\]
The two orthonormal systems are also independent of all
\(\lambda_i\) and \(\mu_j\).

\begin{lemma}[Probability-zero identities for the sampled vectors]
\label{lem:supp-conditioned-null}
The following statements hold.
\begin{enumerate}
 \item For every fixed nonzero \(x\in\R^d\) and every \(i\),
 \(\Pr(u_i^\top x=0)=0\).
 \item For every fixed nonzero \(A\in\R^{d\times d}\) and every \(i\),
 \(\Pr(A^\top u_i=0)=0\).
 \item For \(i\ne j\) and fixed matrices \(A,B\in\R^{d\times d}\) that
 are not both zero,
 \[
  \Pr(A^\top u_i+B^\top u_j=0)=0.
 \]
 \item The analogous statements hold for \(q_i,q_j\).
\end{enumerate}
\end{lemma}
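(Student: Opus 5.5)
The plan is to reduce every statement to the fact that a uniformly random unit vector $a$ (the row $a_i$, distributed as $g/\|g\|_2$) lies in a fixed proper linear subspace with probability zero. The scale factors $\lambda_i\in[1,2]$ are strictly positive and independent of the orthonormal systems, so they never affect whether a linear expression vanishes. Multiplying by $\lambda_i^{-1}$ or $\|g\|_2$ reduces each event to one about the Gaussian vector $g$. For such a vector, $\Pr(g\in S)=0$ for any proper subspace $S$, since $S$ has Lebesgue measure zero and $g$ has a density.

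For item 1, note that $u_i^\top x=0$ holds if and only if $g^\top x=0$. The set $x^\perp$ is a hyperplane because $x\ne0$, so this event has probability zero. For item 2, the event $A^\top u_i=0$ means $g\in\ker A^\top$. Since $A\ne0$, we also have $A^\top\ne0$, so $\ker A^\top$ is a proper subspace. Alternatively, we can pick a row $x$ of $A^\top$ that is nonzero and apply item 1.

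Item 3 is the main step, because $u_i$ and $u_j$ are dependent: they are orthonormal rows of the same Haar matrix, rescaled. We treat it by cases. If $B=0$, the claim reduces to item 2 for $A$. If $A=0$, it reduces to item 2 for $B$ and index $j$. If both are nonzero, we condition on $a_j$ and $\lambda_j$, and write $y=\lambda_j B^\top a_j$, which is then fixed. The event becomes $\lambda_iA^\top a_i=-y$. Conditioned on $a_j$, the row $a_i$ is uniform on the unit sphere of the $(d-1)$-dimensional space $a_j^\perp$, and $\lambda_i$ is independent and uniform on $[1,2]$. The pair $(a_i,\lambda_i)$ therefore determines $w=\lambda_i a_i$, which has an absolutely continuous distribution on the subspace $a_j^\perp$: the map (sphere in $a_j^\perp$) $\times[1,2]\to a_j^\perp$ is a diffeomorphism onto a shell. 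The event is that $w$ lies in the affine set $\{w\in a_j^\perp: A^\top w=-y\}$, which is contained in a translate of $\ker A^\top\cap a_j^\perp$.

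This set is null in $a_j^\perp$ unless $a_j^\perp\subseteq\ker A^\top$, that is, unless $A^\top$ vanishes on $a_j^\perp$. In that case $A^\top = z a_j^\top$ for some $z\ne0$, which forces $\operatorname{rank}A=1$ with row space direction fixed by $a_j$. This degenerate case only occurs if $a_j$ lies in the row space of $A^\top$ with $A^\top$ of rank one. The row space of $A^\top$ is a proper subspace when $d\ge2$, so by item 2 type reasoning this has probability zero. For $d=1$ there is no $i\ne j$, so the case does not arise.

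Integrating the conditional probability, which is zero almost surely, gives probability zero overall. The one point that needs care is the absolute continuity of $\lambda_i a_i$ on $a_j^\perp$. Item 4 is identical, using columns $b_j$ of $O_{\mathbf Q}$ and the scales $\mu_j$, because $O_{\mathbf Q}^\top$ is also Haar-distributed.
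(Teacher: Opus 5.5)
Your proof is correct. Items 1, 2 and 4 match the paper; item 3 is where you take a different route.

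\textbf{The paper's route for item 3.} It conditions on the entire orthogonal matrix $O_{\mathbf U}$ and on $\lambda_j$, and then uses only the scalar randomness of $\lambda_i$. By item 2, almost surely at least one of $A^\top a_i$ and $B^\top a_j$ is nonzero. If $A^\top a_i\neq0$, the vector equation $\lambda_iA^\top a_i=-\lambda_jB^\top a_j$ holds for at most one value of $\lambda_i$. This is a null event, because $\lambda_i$ is uniform on $[1,2]$ and independent of $(O_{\mathbf U},\lambda_j)$. If $A^\top a_i=0$, then $B^\top a_j\neq0$ and $\lambda_j>0$ make the equality impossible.

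\textbf{Your route.} You condition only on $(a_j,\lambda_j)$ and use the joint randomness of $(a_i,\lambda_i)$. This costs you three extra ingredients: the conditional Haar law of $a_i$ given $a_j$, absolute continuity of $\lambda_ia_i$ on $a_j^\perp$, and a separate rank-one degenerate case.

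\textbf{What each buys.} The paper's version is shorter. It needs nothing about the orthogonal factor beyond item 2 for individual rows and the independence of the scales, so it would survive any distribution of $O_{\mathbf U}$ with those properties. Yours relies on the Haar structure, but its geometric viewpoint shows where the scales actually matter. For $d\ge3$, the random direction of $a_i$ on the unit sphere of $a_j^\perp$ would already make the bad set null. For $d=2$, that sphere consists of only two points. Without scales, taking $A=I$ and $B$ a quarter-turn rotation then gives $A^\top a_i+B^\top a_j=0$ with probability $1/2$. Including $\lambda_i$ in $w=\lambda_ia_i$, as you do, is exactly what keeps your argument valid in that case.
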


\begin{proof}
For the first statement, \(u_i^\top x=0\) if and only if
\(g^\top x=0\), because \(u_i=\lambda_i a_i\), \(\lambda_i\ne0\), and
\(a_i\) has the same distribution as \(g/\|g\|_2\).  For every fixed
\(x\ne0\), the scalar \(g^\top x\) is Gaussian with variance
\(\|x\|_2^2>0\).  Hence
\[
 \Pr(u_i^\top x=0)=0.
\]

For the second statement, choose \(y\in\R^d\) such that \(Ay\ne0\).
The event \(A^\top u_i=0\) implies \(u_i^\top Ay=0\), so the first
statement gives
\[
 \Pr(A^\top u_i=0)=0.
\]

For the third statement, the second statement implies that, with
probability one over the orthogonal matrix, at least one of
\(A^\top a_i\) and \(B^\top a_j\) is nonzero.  On this event, condition
on the orthogonal matrix and on \(\lambda_j\).
If \(A^\top a_i\ne0\), the equation
\[
 \lambda_i A^\top a_i+\lambda_j B^\top a_j=0
\]
determines at most one value of \(\lambda_i\).  Since \(\lambda_i\) is
uniform on \([1,2]\), this occurs with probability zero.  If
\(A^\top a_i=0\), then \(B^\top a_j\ne0\), so the equality is
impossible.  The proof for \(q_i,q_j\) is identical.
\end{proof}

We will also use the following consequence.  Let \(Z\in\R^d\) be determined by \(\mathbf U\), and suppose that
\(\Pr(Z\ne0)=1\). Conditional on \(Z\),
the independence of \(\mathbf Q\) and \(\mathbf U\) and the Gaussian
representation above give
\[
 \Pr(q_j^\top Z=0\mid Z)=0.
\]
Therefore
\begin{equation}
 \Pr(q_j^\top Z=0)=0.
 \label{eq:supp-independent-inner-product}
\end{equation}
The same statement holds with the roles of \(\mathbf U\) and
\(\mathbf Q\) interchanged.

\begin{proposition}[Local recovery and consistent head labeling succeed
with probability one]
\label{prop:supp-local-decoding-matching}
For every fixed canonical model satisfying \eqref{eq:id}, the following
events occur simultaneously with probability one.
\begin{enumerate}
 \item For every pair \((u,q)\) in
 \eqref{eq:grid}--\eqref{eq:qbridge}, the values
 \(s_1(u,q),\ldots,s_H(u,q)\) are pairwise distinct and every
 \(c_h(u,q)\) is nonzero.
 \item For all indices in Lemma~\ref{lem:labeling}, each identity in
 \eqref{eq:add-u} and \eqref{eq:add-q} can hold only
 when the entries belong to the same head.
\end{enumerate}
\end{proposition}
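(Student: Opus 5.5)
The plan is to reduce everything to finitely many "bad" events, each of probability zero, and then take a union bound. There are only finitely many pairs $(u,q)$ in \eqref{eq:grid}--\eqref{eq:qbridge}, finitely many head pairs, and finitely many triples $(h_1,h_2,h_3)$. Each bad event has the form "a fixed nonzero linear or bilinear expression in the random vectors vanishes." Lemma~\ref{lem:supp-conditioned-null} and \eqref{eq:supp-independent-inner-product} should show that each such event is null. The main tool is to condition on one of $\mathbf U,\mathbf Q$ and use independence for the other, exploiting $s_h(u,q)=t\,u^\top W_h q$ and $c_h=t\,u^\top v_h$ with $t\ne0$.

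\textbf{Item 1.} For a grid pair $(u_i,q_j)$, consider first $c_h=0$, which means $u_i^\top v_h=0$. Since $v_h\ne0$, part~1 of Lemma~\ref{lem:supp-conditioned-null} makes this null. Next, $s_h=s_g$ for $h\ne g$ means $u_i^\top(W_h-W_g)q_j=0$. Set $Z=(W_h-W_g)^\top u_i$. Part~2 with $A=W_h-W_g\ne0$ gives $\Pr(Z\ne0)=1$, and \eqref{eq:supp-independent-inner-product} then gives a null event.

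For the bridge pairs, the $u$-bridge $u_1+u_i$ with $i\ne1$ is handled by part~3 with $A=B=W_h-W_g$ (respectively $A=B$ equal to the matrix whose columns make $v_h$ appear, e.g.\ $x\mapsto$ the rank-one map $v_h e^\top$). This gives $(W_h-W_g)^\top(u_1+u_i)\ne0$ and $(u_1+u_i)^\top v_h\ne0$ almost surely, after which the independence step applies. The $q$-bridge $q_1+q_j$ is symmetric, using part~4 for $q$ and conditioning on $\mathbf Q$ with $Z$ built from $u_i$. For $c_h$ at $q$-bridges, the value $c_h$ depends only on $u_i$, so it is already covered.

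\textbf{Item 2.} Consider the $u$-identity
\[
s_{h_1}(u_1+u_i,q_1)=s_{h_2}(u_1,q_1)+s_{h_3}(u_i,q_1)
\]
with $(h_1,h_2,h_3)$ not all equal. It rewrites as $q_1^\top Z=0$, where
\[
Z=(W_{h_1}-W_{h_2})^\top u_1+(W_{h_1}-W_{h_3})^\top u_i.
\]
If the triple is not constant, then $A=W_{h_1}-W_{h_2}$ and $B=W_{h_1}-W_{h_3}$ are not both zero. This holds because $A=B=0$ forces $W_{h_1}=W_{h_2}=W_{h_3}$ and hence, by \eqref{eq:id}, $h_1=h_2=h_3$. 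Part~3 (with $i\ne1$) then gives $\Pr(Z\ne0)=1$, and \eqref{eq:supp-independent-inner-product} concludes.

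The $q$-identity is analogous with roles swapped. It reads $u_i^\top\bigl[(W_{h_1}-W_{h_2})q_1+(W_{h_1}-W_{h_3})q_j\bigr]=0$. Apply part~4/part~3 to $A^\top=W_{h_1}-W_{h_2}$ and $B^\top=W_{h_1}-W_{h_3}$ on $q_1,q_j$ with $j\ne1$, then condition on $\mathbf Q$ and use the $\mathbf U$-version of \eqref{eq:supp-independent-inner-product}.

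\textbf{Main obstacle.} The delicate point is the case where the same random vector appears on both sides: the diagonal index $i=1$ in the grid, the terms $u_i$ versus $u_1+u_i$, and the fact that $q_1$ appears in both the $u$-bridge and grid identities. Such dependence could break the naive independence argument. The resolution I would check is that every bad event separates into "a vector built from $\mathbf U$ alone is nonzero," which follows from parts~1--3, followed by "an independent $q_j$ is not orthogonal to it." No event needs two vectors from the same family beyond the two-vector case $i\ne j$ covered by part~3.

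The final step is a finite union bound over all pairs, heads and triples, which yields simultaneous occurrence with probability one.
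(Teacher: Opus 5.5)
Your proposal is correct and follows essentially the same argument as the paper. Lemma~\ref{lem:supp-conditioned-null} makes the relevant one-sided vector nonzero almost surely, including the choice $A=B=v_he_1^\top$ for $(u_1+u_i)^\top v_h$; then the independence step \eqref{eq:supp-independent-inner-product} (or its $\mathbf U$--$\mathbf Q$ swap) and a finite union bound finish the proof. The only slip is the phrase ``$Z$ built from $u_i$'' for the $q$-bridge in item~1: there $Z$ should be $(W_h-W_g)(q_1+q_j)$, built from $\mathbf Q$, with $u_i$ as the independent vector, which is exactly how you handle the $q$-identity in item~2.
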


\begin{proof}
Fix two heads \(h\ne g\) and write
\(\Delta_{hg}=W_h-W_g\ne0\).  For \((u,q)=(u_i,q_j)\), the equality
\(s_h(u,q)=s_g(u,q)\) is equivalent to
\[
 u_i^\top\Delta_{hg}q_j=0.
\]
By Lemma~\ref{lem:supp-conditioned-null},
\(\Delta_{hg}^\top u_i\ne0\) with probability one.  Equation
\eqref{eq:supp-independent-inner-product} shows that this equality has
probability zero.

For \((u,q)=(u_1+u_i,q_1)\), the corresponding equality is
\[
 (u_1+u_i)^\top\Delta_{hg}q_1=0.
\]
Lemma~\ref{lem:supp-conditioned-null}, applied with
\(A=B=\Delta_{hg}\), gives
\(\Delta_{hg}^\top(u_1+u_i)\ne0\) with probability one, after which
\eqref{eq:supp-independent-inner-product} applies.

For \((u,q)=(u_i,q_1+q_j)\), the same argument with the roles of
\(\mathbf U\) and \(\mathbf Q\) interchanged excludes
\[
 u_i^\top\Delta_{hg}(q_1+q_j)=0.
\]

The possible choices of \(u\) are \(u_i\) and
\(u_1+u_i\). For every \(v_h\ne0\),
the first part of Lemma~\ref{lem:supp-conditioned-null} gives
\(u_i^\top v_h\ne0\) with probability one.  For the sum, let \(e_1\) be the
first standard basis vector and apply the third part of the lemma with
\(A=B=v_he_1^\top\).  Since
\[
 A^\top(u_1+u_i)=e_1v_h^\top(u_1+u_i),
\]
we also obtain
\[
 (u_1+u_i)^\top v_h\ne0
\]
with probability one.  Since \(t\ne0\), all corresponding
\(c_h(u,q)=t u^\top v_h\) are nonzero.  The exponential function is
injective on \(\R\), so pairwise distinct \(s_h\)-values also give
pairwise distinct \(r_h=\exp(s_h)\)-values.

It remains to exclude incorrect head-labeling equalities. For heads
\(h_1,h_2,h_3\), the equality in \eqref{eq:add-u} is
\[
 s_{h_1}(u_1+u_i,q_1)
 =
 s_{h_2}(u_1,q_1)+s_{h_3}(u_i,q_1).
\]
After division by \(t\) and rearrangement, this is
\begin{equation}
 q_1^\top\!\left[
  (W_{h_1}-W_{h_2})^\top u_1
  +(W_{h_1}-W_{h_3})^\top u_i
 \right]=0.
 \label{eq:supp-false-left-match}
\end{equation}
Unless \(h_1=h_2=h_3\), pairwise distinctness of the \(W_h\) implies that
the two matrix differences appearing in the brackets are not both zero.
Lemma~\ref{lem:supp-conditioned-null} makes the bracketed vector nonzero
with probability one, and independence of \(q_1\) then makes
\eqref{eq:supp-false-left-match} a probability-zero event.

Similarly, the equality in \eqref{eq:add-q} is equivalent to
\begin{equation}
 u_i^\top\!\left[
  (W_{h_1}-W_{h_2})q_1
  +(W_{h_1}-W_{h_3})q_j
 \right]=0.
 \label{eq:supp-false-right-match}
\end{equation}
If the three indices are not all equal, the bracketed vector is nonzero with probability one by the version of
Lemma~\ref{lem:supp-conditioned-null} for \(\mathbf Q\).  Independence of
\(u_i\) excludes \eqref{eq:supp-false-right-match}.  When all three
indices agree, both additive identities hold deterministically by
bilinearity.

There are finitely many choices of \((u,q)\) and finitely many head and
index combinations. The union of their probability-zero failure events still
has probability zero, proving both assertions simultaneously.  When \(d=1\), no consistent head labeling is required, and the arguments for
\((u_1,q_1)\) prove the proposition.
\end{proof}

\paragraph{Completion of exact recovery.}
On the event of Proposition~\ref{prop:supp-local-decoding-matching}, every
local response has the reduced rational representation
\[
 \mathcal R(z)
 =
 \sum_{h=1}^H\frac{c_h r_h}{z+r_h}.
\]
Indeed, the \(r_h\) are pairwise distinct and the residue \(c_hr_h\) at
the pole \(-r_h\) is nonzero.  Thus no pole cancels, the reduced
denominator has degree \(H\), and the rational interpolation lemma of
Section~\ref{sec:local} recovers precisely
\[
 D_t(u,q)=\{(t u^\top W_hq,t u^\top v_h):h\in[H]\}.
\]
Proposition~\ref{prop:supp-local-decoding-matching} also verifies exactly
the hypotheses of Lemma~\ref{lem:labeling}.  That lemma assigns one common head labeling to all sets
\(D_t(u_i,q_j)\) and then applies \eqref{eq:reconstruct}.  Since every singular value of
\(\mathbf U\) and \(\mathbf Q\) lies in \([1,2]\), both matrices are
invertible for every draw.  This proves the probability-one recovery claim
in Theorem~\ref{thm:main}.

For completeness, the algorithm uses
\[
 d^2+(d-1)+d(d-1)=2d^2-1
\]
local decoders.  Each uses the \(2H\) repeated-token responses with
\(m=1,\ldots,2H\). All local decoders using $q_1$ use the same queried one-token output,
$F_M([q_1^\top])$. After they recover the \(v_h\), every other
one-token output is computed exactly from
\[
 F_M([q^\top])=q^\top\sum_{h=1}^H v_h.
\]
The resulting query count is
\[
 2H(2d^2-1)+1=4Hd^2-2H+1,
\]
and the largest value of \(m\) gives maximum sequence length \(2H+1\).
All query directions and repetition counts are chosen before any oracle
response is observed, so the algorithm is nonadaptive.

%% file: supplementary/sections/03_further_exact_recovery_variants.tex
\section{Further Exact-Recovery Variants}
\label{supp:variants}

This section proves two exact-recovery variants.
Section~\ref{supp:unknown-h} considers the case where the learner knows
\(H_0\), but not \(H\).  Section~\ref{supp:low-rank} considers the case
where each \(W_h\) satisfies the known rank bound
\(\operatorname{rank}(W_h)\le r\).

%% file: supplementary/sections/03_01_known_upper_bound_h.tex
\subsection{Recovery with a Known Upper Bound on $H$}
\label{supp:unknown-h}

Let \(H_0\) be the raw head count.  Canonicalization gives
\(0\le H\le H_0\), but \(H\) need not be known to the learner.  We extend the rational interpolation argument in Section~\ref{sec:local} to the case where only \(H_0\) is known.

\begin{lemma}[Least-degree interpolation]
\label{lem:supp-least-degree}
Let \(\mathcal R=\mathcal P/\mathcal Q\) be a reduced proper rational
function whose monic denominator has degree \(H\le H_0\), and assume that
\(\mathcal Q(m)\ne0\) for \(m=1,\ldots,2H_0\).  Then the values
\(\mathcal R(1),\ldots,\mathcal R(2H_0)\) uniquely determine \(\mathcal R\) among all proper rational functions
with denominator degree at most \(H_0\) whose denominators are nonzero at
\(1,\ldots,2H_0\). Moreover, \(H\) is the smallest integer
\(k\in\{0,\ldots,H_0\}\) for which the samples admit such a rational
representation of denominator degree \(k\).
\end{lemma}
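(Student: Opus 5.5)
The plan is to reuse the cross-multiplication argument from the rational interpolation lemma of Section~\ref{sec:local}, with the sample count $2H_0$ now bounding the degree of the difference polynomial. The uniqueness part comes first. Let $\mathcal P_1/\mathcal Q_1$ and $\mathcal P_2/\mathcal Q_2$ be proper rational functions with monic denominators of degrees $k_1,k_2\le H_0$. Assume both denominators are nonzero at $1,\ldots,2H_0$ and both functions agree with the samples there. Set $\mathcal D=\mathcal P_1\mathcal Q_2-\mathcal P_2\mathcal Q_1$. Properness gives $\deg\mathcal P_i\le k_i-1$, so $\deg\mathcal D\le k_1+k_2-1\le 2H_0-1$.

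Since neither denominator vanishes at the sample points, $\mathcal D(m)=\mathcal Q_1(m)\mathcal Q_2(m)\bigl(\mathcal R_1(m)-\mathcal R_2(m)\bigr)=0$ for $m=1,\ldots,2H_0$. A polynomial of degree at most $2H_0-1$ with $2H_0$ zeros is identically zero. Hence $\mathcal P_1/\mathcal Q_1=\mathcal P_2/\mathcal Q_2$ as rational functions, and the two representations need not be reduced. Applying this with one of the functions equal to the true $\mathcal R$ shows that $\mathcal R$ is the unique such rational function consistent with the samples. If a reduced form is wanted, note that a reduced representation with a monic denominator is unique. So once two candidates are equal as rational functions, their reduced forms coincide, and that common reduced form is the given $\mathcal P/\mathcal Q$.

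The minimal-degree claim comes next. The true $\mathcal P/\mathcal Q$ is an admissible representation of degree $H$, so the smallest admissible $k$ is at most $H$. Conversely, suppose some representation $\mathcal P'/\mathcal Q'$ of degree $k$ is admissible. By the uniqueness step, $\mathcal P'/\mathcal Q'=\mathcal P/\mathcal Q$, so $\mathcal P'\mathcal Q=\mathcal P\mathcal Q'$. Because $\gcd(\mathcal P,\mathcal Q)=1$, $\mathcal Q$ divides $\mathcal Q'$, which forces $k=\deg\mathcal Q'\ge H$. The edge case $H=0$, where $\mathcal R\equiv0$ and $\mathcal P=0$ with $\mathcal Q=1$, is covered by the same argument with the convention $\deg 0=-\infty$. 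In that case the minimal $k$ is $0$, since the samples are all zero.

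The main obstacle is bookkeeping rather than a hard estimate. The degree bound $k_1+k_2-1\le 2H_0-1$ must hold for candidates of unequal degrees. It also must not require candidate representations to be reduced, because the algorithm tests consistency at degree $k$ through the linear system $\mathcal P(m)=\mathcal R(m)\mathcal Q(m)$, which may produce non-reduced solutions. Carrying the hypothesis $\mathcal Q'(m)\ne0$ at the sample points throughout handles this. It is what justifies dividing the linearized equations to get genuine equality of values, and so it excludes spurious solutions where both $\mathcal P'$ and $\mathcal Q'$ vanish at a sample point.
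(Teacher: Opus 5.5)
Your proposal is correct and follows the paper's proof: the same cross-multiplication bound $\deg(\mathcal P_1\mathcal Q_2-\mathcal P_2\mathcal Q_1)\le 2H_0-1$ gives uniqueness, and feasibility of the true reduced fraction at $k=H$ combined with uniqueness gives minimality. Your argument that $\gcd(\mathcal P,\mathcal Q)=1$ forces $\mathcal Q\mid\mathcal Q'$ simply spells out what the paper compresses into ``contradicting the definition of $H$,'' and your treatment of the $H=0$ case matches the paper's remark on $k=0$.
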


\begin{proof}
Let \(\mathcal P_1/\mathcal Q_1\) and
\(\mathcal P_2/\mathcal Q_2\) satisfy the stated conditions and agree at
all \(2H_0\) sample points.  The polynomial
\[
 \mathcal P_1\mathcal Q_2-\mathcal P_2\mathcal Q_1
\]
has degree at most \(2H_0-1\) and vanishes at \(2H_0\) distinct points.
It is therefore zero, so the two rational functions are identical.

The true reduced fraction is feasible for \(k=H\).  If some \(k<H\)
were feasible, uniqueness would give a representation of
\(\mathcal R\) with denominator degree less than \(H\), contradicting
the definition of \(H\).  For \(k=0\), properness permits only the zero
function, so \(k=0\) is feasible exactly when all samples are zero.
\end{proof}

The learner determines \(H\) by considering
\(k=0,\ldots,H_0\) in increasing order. The case \(k=0\) is feasible
exactly when all samples are zero. For \(k\ge1\), the numerator and
denominator take the form
\[
 \mathcal Q_k(z)=z^k+\sum_{\ell=0}^{k-1}q_\ell z^\ell,
 \qquad
 \mathcal P_k(z)=\sum_{\ell=0}^{k-1}p_\ell z^\ell,
\]
and their coefficients must satisfy
\begin{equation}
 \mathcal P_k(m)=\mathcal R(m)\mathcal Q_k(m),
 \qquad m=1,\ldots,2H_0.
 \label{eq:supp-upper-bound-system}
\end{equation}
The value \(k\) is discarded if
\eqref{eq:supp-upper-bound-system} has no solution. It is also discarded
if, for some \(m\in\{1,\ldots,2H_0\}\), every solution satisfies
\(\mathcal Q_k(m)=0\); otherwise it is retained.
Whether \(\mathcal Q_k(m)=0\) holds for every solution can be determined
by exact linear algebra.  Over \(\R\), finitely many such equalities
cannot cover the nonempty solution set unless one of them holds for every
solution. Thus a retained \(k\) has a solution satisfying
\(\mathcal Q_k(m)\ne0\) for every \(m\).
By Lemma~\ref{lem:supp-least-degree}, the smallest retained \(k\) is \(H\).

\begin{theorem}[Exact recovery from a known head-count upper bound]
\label{thm:supp-upper-bound}
Let \(M\) be a raw \(H_0\)-head model with canonical representation
\(\operatorname{Can}(M)=\{(W_h,v_h):h\in[H]\}\), where
\(0\le H\le H_0\).  Suppose the learner knows \(d,H_0\), but not \(H\).
For any fixed \(t\ne0\), a nonadaptive randomized algorithm recovers
\(\operatorname{Can}(M)\) exactly with probability one.  It uses exactly
\[
 2H_0(2d^2-1)+1
 =
 4H_0d^2-2H_0+1
\]
value queries, each of length at most \(2H_0+1\).
\end{theorem}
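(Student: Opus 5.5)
The algorithm is Algorithm~\ref{alg:main} with $H$ replaced by $H_0$ in the number of repeated-token queries, followed by the least-degree search described after Lemma~\ref{lem:supp-least-degree}. For the same random $\mathbf U,\mathbf Q$ from \eqref{eq:conditioned-bases}, the learner queries $F_M([q_1^\top])$ and, for each of the $2d^2-1$ pairs in \eqref{eq:grid}--\eqref{eq:qbridge}, the repeated-token sequences $X_m^{(t)}(u,q)$ for $m=1,\ldots,2H_0$. This gives $2H_0(2d^2-1)+1$ queries of length at most $2H_0+1$, all fixed in advance, so the procedure is nonadaptive. As in Algorithm~\ref{alg:main}, the one-token outputs at $q_j$ and $q_1+q_j$ are not queried but computed from $\sum_h v_h$ once the $v_h$ are known.

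\textbf{Degenerate case $H=0$.} Then $F_M$ reduces to the zero function by \eqref{eq:supp-canonical-reduction}, so every response $\mathcal R(m)$ vanishes. The least-degree search returns $k=0$ at the first local decoder, and the learner outputs the empty set, which equals $\operatorname{Can}(M)$. Conversely, if $H\ge1$ then the local responses are nonzero rational functions (shown next), so this output cannot occur wrongly.

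\textbf{Case $H\ge1$.} Apply Proposition~\ref{prop:supp-local-decoding-matching} to the canonical heads, which satisfy \eqref{eq:id}. With probability one, for every pair $(u,q)$ the $r_h$ are distinct and every $c_h\neq0$. Hence each $\mathcal R$ is reduced and proper with monic denominator of degree exactly $H\le H_0$. Since $r_h>0$, the denominator does not vanish at $1,\ldots,2H_0$.

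The learner first decodes the local decoders that use $q_1$. For each of them, Lemma~\ref{lem:supp-least-degree} and the retention procedure show that the smallest retained $k$ equals $H$. It also shows that any retained solution at $k=H$ coincides with the true $\mathcal P/\mathcal Q$, because uniqueness holds among denominators of degree at most $H_0$ that are nonzero at the sample points. So the learner can pick any retained solution, factor $\mathcal Q$, and recover exactly the unordered set $D_t(u,q)$.

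After that, the hypotheses of Lemma~\ref{lem:labeling} hold on the same probability-one event. Its proof then runs verbatim: label $D_t(u_1,q_1)$, propagate the labels by \eqref{eq:add-u}, recover the $v_h$, compute the remaining one-token outputs, decode the remaining sets in the same way, propagate the labels by \eqref{eq:add-q}, and reconstruct every pair via \eqref{eq:reconstruct}.

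\textbf{Main obstacle.} The delicate step is ensuring that the least-degree search returns the true reduced fraction rather than a spurious interpolant. A spurious one could either have a lower degree or have a denominator vanishing at some sample point. Lemma~\ref{lem:supp-least-degree} handles the first issue. The retention rule handles the second: a retained $k$ always admits a solution whose denominator is nonzero at every $m$, because a nonempty affine solution set cannot be covered by finitely many proper affine hyperplanes. Once uniqueness is in hand, the rest follows from the known-$H$ proof, and the probability-one statement is the same finite union of null events used in Proposition~\ref{prop:supp-local-decoding-matching}.
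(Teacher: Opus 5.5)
Your proposal is correct and follows essentially the same route as the paper. Both arguments use the same $2H_0(2d^2-1)+1$ nonadaptive queries and handle $H=0$ via \eqref{eq:supp-canonical-reduction}. Both invoke Proposition~\ref{prop:supp-local-decoding-matching} to make each response reduced, with denominator degree $H$ and no poles at the sample points, then apply the least-degree search of Lemma~\ref{lem:supp-least-degree} with its retention rule, and finish with Lemma~\ref{lem:labeling} verbatim. The only cosmetic difference is that the paper determines $H$ once from the $(u_1,q_1)$ response and decodes every other pair with that known $H$, whereas you rerun the search for each decoder; this is harmless.
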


\begin{proof}
The algorithm samples \(\mathbf U,\mathbf Q\) according to
\eqref{eq:conditioned-bases}. It first queries
\(F_M([q_1^\top])\).
It then makes the corresponding repeated-token query for every pair
listed in \eqref{eq:grid}--\eqref{eq:qbridge} and every
\(m=1,\ldots,2H_0\). If $H=0$, then \eqref{eq:supp-canonical-reduction} gives
$F_M\equiv0$. Hence every queried output difference is zero, and the
algorithm returns the empty canonical set.

For \(H\ge1\), Proposition~\ref{prop:supp-local-decoding-matching}
shows that, with probability one, for every pair listed in
\eqref{eq:grid}--\eqref{eq:qbridge}, the values \(s_1,\ldots,s_H\) are pairwise
distinct and every \(c_h\) is nonzero.  Hence the values
\(r_h=\exp(s_h)\) are pairwise distinct and every residue \(c_hr_h\) is
nonzero.  Therefore, \(\mathcal R\) is reduced and
\(\deg\mathcal Q=H\).  Since its poles are \(-r_h<0\),
\(\mathcal Q(m)\ne0\) for \(m=1,\ldots,2H_0\).

Applying Lemma~\ref{lem:supp-least-degree} to the response for
\((u_1,q_1)\) determines \(H\).  Applying the same interpolation and the
local-decoding procedure in Section~\ref{sec:local} to every pair
listed in \eqref{eq:grid}--\eqref{eq:qbridge} recovers every corresponding set
\(D_t(u,q)\).  Consistent head labeling and reconstruction then follow
from Lemma~\ref{lem:labeling}.

There are $2d^2-1$ local decoders, each using $2H_0$
repeated-token responses, and the only one-token output queried directly
is $F_M([q_1^\top])$. Thus the stated query count follows.  The largest repetition count is
\(2H_0\), giving maximum sequence length \(2H_0+1\).
\end{proof}

%% file: supplementary/sections/03_02_low_rank_recovery.tex
\subsection{Low-Rank Recovery}
\label{supp:low-rank}

We extend the single-head low-rank recovery construction of
\citet[Theorem~5.1]{bhattamishra2026attention} to canonical multi-head
models. Suppose that \(H\) is known and that
\(\operatorname{rank}(W_h)\le r\) for every \(h\in[H]\).
Let \(u_0,\ldots,u_{L-1},q_0,\ldots,q_{L-1}\in\R^d\) be independent
random vectors with i.i.d.\ standard Gaussian entries.  Define
\[
 \mathcal A_L(Z)
 \coloneqq
 \bigl(u_k^\top Zq_k\bigr)_{k=0}^{L-1}.
\]
For \(Z\in\R^{d\times d}\), let \(\|Z\|_*\) denote the sum of its
singular values.

By Corollary~2.1 of \citet{caizhang2015rop}, there are universal constants
\(C_{\rm rop},c_{\rm rop}>0\) such that, if
\begin{equation}
 L\ge 2C_{\rm rop}rd,
 \label{eq:supp-rop-count}
\end{equation}
then, with probability at least \(1-\exp(-c_{\rm rop}L)\), every
\(Z\in\R^{d\times d}\) satisfying \(\operatorname{rank}(Z)\le r\) is
the unique solution of
\begin{equation}
 \min_{X\in\R^{d\times d}}\|X\|_*
 \quad\text{subject to}\quad
 \mathcal A_L(X)=\mathcal A_L(Z).
 \label{eq:supp-nuclear-program}
\end{equation}
This statement holds simultaneously for every such \(Z\), so the same
operator \(\mathcal A_L\) recovers all \(W_h\).  We assume that
\eqref{eq:supp-nuclear-program} can be solved exactly.

\paragraph{Query directions.}
The construction uses \(D_t(u_0,q_0)\) and, for every
\(k=1,\ldots,L-1\), the four sets
\begin{equation}
 D_t(u_k,q_0),\quad D_t(u_0+u_k,q_0),\quad
 D_t(u_k,q_k),\quad D_t(u_k,q_0+q_k).
 \label{eq:supp-low-rank-schedule}
\end{equation}
Together with \(D_t(u_0,q_0)\), the sets in
\eqref{eq:supp-low-rank-schedule} give
\[
 1+4(L-1)=4L-3
\]
local decoders.

\begin{lemma}[Probability-one local recovery and consistent head labeling]
\label{lem:supp-low-rank-decoding-matching}
Fix pairwise distinct \(W_1,\ldots,W_H\) and nonzero
\(v_1,\ldots,v_H\).  With probability one, the following statements hold
simultaneously.
\begin{enumerate}
 \item For \(D_t(u_0,q_0)\) and every set in
 \eqref{eq:supp-low-rank-schedule}, the values
 \(s_1(u,q),\ldots,s_H(u,q)\) are pairwise distinct and every
 \(c_h(u,q)\) is nonzero.

 \item For every \(k=1,\ldots,L-1\) and
 \(h_1,h_2,h_3\in[H]\),
 \begin{equation}
 \begin{aligned}
 &s_{h_1}(u_0,q_0)+s_{h_2}(u_k,q_0)
   =s_{h_3}(u_0+u_k,q_0)\\
 &\hspace{25mm}\Longrightarrow h_1=h_2=h_3,
 \end{aligned}
 \label{eq:supp-low-rank-left-match}
 \end{equation}
 and
 \begin{equation}
 \begin{aligned}
 &s_{h_1}(u_k,q_0)+s_{h_2}(u_k,q_k)
   =s_{h_3}(u_k,q_0+q_k)\\
 &\hspace{25mm}\Longrightarrow h_1=h_2=h_3.
 \end{aligned}
 \label{eq:supp-low-rank-right-match}
 \end{equation}
\end{enumerate}
Consequently, every local decoder used above recovers all \(H\) elements
of \(D_t(u,q)\), and \eqref{eq:supp-low-rank-left-match} and
\eqref{eq:supp-low-rank-right-match} determine consistent head labels.
\end{lemma}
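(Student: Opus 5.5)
The plan is to follow the proof of Proposition~\ref{prop:supp-local-decoding-matching}, replacing the Haar/diagonal construction with i.i.d.\ Gaussian vectors. The needed probability-zero facts are easier here, because \(u_0,\ldots,u_{L-1},q_0,\ldots,q_{L-1}\) are mutually independent and absolutely continuous. The basic tool is the analogue of Lemma~\ref{lem:supp-conditioned-null}. If \(x\ne0\) is fixed, then \(u_k^\top x\) is a nondegenerate Gaussian, so \(\Pr(u_k^\top x=0)=0\), and likewise for \(q_k\). If \(A,B\) are fixed and not both zero and \(k\ne k'\), then \(\Pr(A^\top u_k+B^\top u_{k'}=0)=0\). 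To see this, suppose \(A\ne0\), condition on \(u_{k'}\), pick \(y\) with \(Ay\ne0\), and note that the event forces the nondegenerate Gaussian \(u_k^\top Ay\) to equal a fixed value. If \(A=0\), the event reduces to \(B^\top u_{k'}=0\), which also has probability zero. Finally, if a random vector \(Z\) is a function of the \(u\)'s, is almost surely nonzero, and is independent of \(q_k\), then conditioning on \(Z\) gives \(\Pr(q_k^\top Z=0)=0\). The same holds with the roles of the \(u\)'s and \(q\)'s exchanged.

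For part 1, fix heads \(h\ne g\) and set \(\Delta=W_h-W_g\ne0\). The pairs \((u,q)\) to check are \((u_0,q_0)\), \((u_k,q_0)\), \((u_0+u_k,q_0)\), \((u_k,q_k)\) and \((u_k,q_0+q_k)\). In each case the collision \(s_h=s_g\) reads \(u^\top\Delta q=0\).
\begin{itemize}
\item When \(u\) is \(u_k\) or \(u_0+u_k\) and \(q\) is \(q_0\) or \(q_k\), first show \(\Delta^\top u\ne0\) almost surely, using the two-vector statement with \(A=B=\Delta\) in the sum case. Then apply independence of \(q\).
\item For \(q=q_0+q_k\), swap roles: show \(\Delta(q_0+q_k)\ne0\) almost surely, then use independence of \(u_k\).
\end{itemize}
Nonvanishing of \(c_h=t\,u^\top v_h\) follows from the one-vector fact when \(u=u_k\). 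When \(u=u_0+u_k\), note that \(u_0+u_k\) is itself a nondegenerate Gaussian vector, so \(u^\top v_h\ne0\) almost surely; alternatively, use \(A=B=v_he_1^\top\) as in Proposition~\ref{prop:supp-local-decoding-matching}. Injectivity of \(\exp\) turns distinct \(s_h\) into distinct \(r_h\).

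For part 2, rewrite \eqref{eq:supp-low-rank-left-match} after dividing by \(t\) as
\[
q_0^\top\bigl[(W_{h_3}-W_{h_1})^\top u_0+(W_{h_3}-W_{h_2})^\top u_k\bigr]=0 .
\]
Similarly, \eqref{eq:supp-low-rank-right-match} becomes
\[
u_k^\top\bigl[(W_{h_3}-W_{h_1})q_0+(W_{h_3}-W_{h_2})q_k\bigr]=0 .
\]
Unless \(h_1=h_2=h_3\), the two matrix differences are not both zero, since the \(W_h\) are distinct. The bracketed vector then depends only on \((u_0,u_k)\) in the first identity, or only on \((q_0,q_k)\) in the second, and it is almost surely nonzero by the two-vector fact. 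Independence of \(q_0\) from the \(u\)'s, and of \(u_k\) from the \(q\)'s, makes each identity a null event. When \(h_1=h_2=h_3\), both identities hold by bilinearity.

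All of these events range over finitely many indices \(k\), heads and triples. A finite union of null events is null, so all conditions hold simultaneously with probability one. The consequences then follow as in the completion of exact recovery in Section~\ref{supp:exact}. Each local response is a reduced rational function of denominator degree exactly \(H\), with all poles negative, so the rational interpolation lemma recovers \(D_t(u,q)\) in full. Labels chosen on \(D_t(u_0,q_0)\) then propagate to \(D_t(u_k,q_0)\) through \eqref{eq:supp-low-rank-left-match}, and from there to \(D_t(u_k,q_k)\) through \eqref{eq:supp-low-rank-right-match}, exactly as in Lemma~\ref{lem:labeling}.

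The main obstacle is the independence bookkeeping in the mixed terms. Each bracketed vector must be shown nonzero using only one family of vectors, and then killed with an independent vector from the other family. Writing each identity so that the outer vector is \(q_0\) or \(u_k\) and the bracket uses only the other family is exactly what makes this work.
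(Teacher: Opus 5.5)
Your proof is correct, but it establishes the null events by a different mechanism than the paper uses.

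\textbf{The paper's route.} The paper handles everything in one stroke. Every bad event (a collision \(s_h=s_g\), a vanishing \(c_h\), or a prohibited additive match) is the zero set of a polynomial in the i.i.d.\ Gaussian coordinates, and such a set is null once the polynomial is not identically zero. Nonvanishing is then checked in two ways:
\begin{itemize}
\item by specialization: setting \(u_k=0\) reduces the \((u_0+u_k,q_0)\) polynomial to the \((u_0,q_0)\) one, and setting \(q_k=0\) reduces the \((u_k,q_0+q_k)\) polynomial to the \((u_k,q_0)\) one;
\item for the matches, by noting that \(u_0^\top(W_{h_1}-W_{h_3})q_0+u_k^\top(W_{h_2}-W_{h_3})q_0\) vanishes identically only when both differences are zero.
\end{itemize}

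\textbf{Your route.} You instead transplant the two-stage conditioning argument of Proposition~\ref{prop:supp-local-decoding-matching}. First you show that a bracket built from one family is almost surely nonzero, and then you kill the identity with an independent vector from the other family.

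\textbf{What each buys.} The paper's argument is shorter and uses only absolute continuity of the joint law of all coordinates. So it does not depend on how the identities are grouped, or on the \(u\)'s being independent of the \(q\)'s. Yours needs only one-dimensional Gaussian nondegeneracy plus conditioning, and avoids the fact that polynomial zero sets are Lebesgue-null. The cost is the independence bookkeeping you flag: it relies on \(k\neq0\) within a family and on the \(u\)-family being independent of the \(q\)-family. In exchange, your version is the argument that also works for the construction \(\mathbf U=\Lambda_{\mathbf U}O_{\mathbf U}\) of the main algorithm. There the entries of \(O_{\mathbf U}\) have no joint density on \(\R^{d\times d}\), so the polynomial shortcut is not directly available.
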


\begin{proof}
A nonzero polynomial in the independent standard Gaussian coordinates
defined above vanishes with probability zero. For \(h\ne g\), the equality
\(s_h(u,q)=s_g(u,q)\) is equivalent to
\[
 u^\top(W_h-W_g)q=0.
\]
This is a nonzero polynomial because \(W_h\ne W_g\).  Similarly,
\(c_h(u,q)=0\) is equivalent to \(u^\top v_h=0\), which is nonzero
because \(v_h\ne0\). For the polynomial expressing the equality at
\((u_0+u_k,q_0)\), setting \(u_k=0\) yields the nonzero polynomial
associated with \((u_0,q_0)\). For the polynomial expressing the equality at
\((u_k,q_0+q_k)\), setting \(q_k=0\) yields the nonzero polynomial
associated with \((u_k,q_0)\). For \(c_h(u,q)=0\), the only additional case is
\((u_0+u_k)^\top v_h=0\), whose polynomial is nonzero after setting
\(u_k=0\).  Hence item 1 holds with probability one.

For indices \(h_1,h_2,h_3\) that are not all equal, the equality in
\eqref{eq:supp-low-rank-left-match} is equivalent to
\[
 u_0^\top(W_{h_1}-W_{h_3})q_0
 +
 u_k^\top(W_{h_2}-W_{h_3})q_0
 =
 0.
\]
This polynomial is identically zero only if
\(W_{h_1}=W_{h_3}\) and \(W_{h_2}=W_{h_3}\), which would imply
\(h_1=h_2=h_3\).  Therefore, it is nonzero for every prohibited match.

Likewise, the equality in \eqref{eq:supp-low-rank-right-match} is
equivalent to
\[
 u_k^\top(W_{h_1}-W_{h_3})q_0
 +
 u_k^\top(W_{h_2}-W_{h_3})q_k
 =
 0.
\]
This polynomial is identically zero only when
\(h_1=h_2=h_3\). There are finitely many pairs and head
triples, so the probability that any prohibited equality holds is zero.
\end{proof}

\paragraph{Recovery of \(v_h\) and measurements of \(W_h\).}
The elements of \(D_t(u_0,q_0)\) are given arbitrary initial labels.
For every \(k=1,\ldots,L-1\),
\eqref{eq:supp-low-rank-left-match} assigns
consistent labels to \(D_t(u_k,q_0)\) using
\(D_t(u_0+u_k,q_0)\).
\[
 s_h(u_0+u_k,q_0)
 =
 s_h(u_0,q_0)+s_h(u_k,q_0).
\]
The local decoder keeps each \(c_h\)-value paired with its
\(s_h\)-value, so this step labels
\[
 z_{kh}\coloneqq c_h(u_k,q_0)=t u_k^\top v_h,
 \qquad k=0,\ldots,L-1.
\]
Let
\begin{align*}
 U_L&=[u_0^\top;\ldots;u_{L-1}^\top]\in\R^{L\times d},\quad
 z_h=(z_{0h},\ldots,z_{L-1,h})^\top.
\end{align*}
When \(L\ge d\), the Gaussian matrix \(U_L\) has full column rank with
probability one, and hence
\begin{equation}
 v_h
 =
 t^{-1}(U_L^\top U_L)^{-1}U_L^\top z_h.
 \label{eq:supp-low-rank-values}
\end{equation}
This argument does not require the \(v_h\) to be distinct or linearly
independent.

After recovering the \(v_h\), every one-token output follows from
\[
 F_M([q^\top])=q^\top\sum_{h=1}^H v_h.
\]
For every \(k=1,\ldots,L-1\), the algorithm next recovers
\(D_t(u_k,q_k)\) and \(D_t(u_k,q_0+q_k)\).
Since \(D_t(u_k,q_0)\) is already labeled, the implication in
\eqref{eq:supp-low-rank-right-match} uniquely determines the labels of
\(D_t(u_k,q_k)\).
Together with the measurement from \(D_t(u_0,q_0)\), this gives
\begin{equation}
 y_{kh}
 \coloneqq
 t^{-1}s_h(u_k,q_k)
 =
 u_k^\top W_hq_k,
 \qquad k=0,\ldots,L-1.
 \label{eq:supp-labeled-rop}
\end{equation}
Thus \((y_{kh})_{k=0}^{L-1}=\mathcal A_L(W_h)\), and
the simultaneous guarantee stated above implies that
\eqref{eq:supp-nuclear-program} recovers all \(W_h\) with probability at
least \(1-\exp(-c_{\rm rop}L)\).

\begin{theorem}[Exact low-rank multi-head recovery]
\label{thm:supp-low-rank}
Let \(1\le r\le d\). Suppose that the learner knows \(H\) and \(r\),
and that the canonical heads satisfy \eqref{eq:id} and
\(\operatorname{rank}(W_h)\le r\) for every \(h\).
Let \(t\ne0\), and let \(L\) be any integer satisfying
\[
 L\ge
 \max\{d,\lceil2C_{\rm rop}rd\rceil\}.
\]
Under the exact computation assumptions, augmented by exact solution of
\eqref{eq:supp-nuclear-program}, the construction in
\eqref{eq:supp-low-rank-schedule} recovers every \((W_h,v_h)\) up to one
common permutation with probability at least
\(1-\exp(-c_{\rm rop}L)\).  It uses exactly
\[
 2H(4L-3)+1=8HL-6H+1
\]
value queries, each of length at most \(2H+1\).  Choosing the smallest
admissible \(L\) gives \(O(Hrd)\) value queries.
\end{theorem}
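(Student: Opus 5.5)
The proof assembles the pieces already established in this subsection, in four steps: probability-one decoding and labeling, recovery of the value vectors, recovery of the matrices, and the query count.

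\textbf{Step 1: the good event.} Let \(E_1\) be the event of Lemma~\ref{lem:supp-low-rank-decoding-matching}. On \(E_1\), every local response for \(D_t(u_0,q_0)\) and for each set in \eqref{eq:supp-low-rank-schedule} is the reduced rational function \(\sum_h c_hr_h/(z+r_h)\), with distinct poles and nonzero residues. Since \(r_h>0\), the denominator does not vanish at \(1,\ldots,2H\). The rational interpolation lemma of Section~\ref{sec:local} therefore returns each unordered set \(D_t(u,q)\) exactly.

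Let \(E_2\) be the event that \(U_L\) has full column rank. Since \(L\ge d\) and the entries of \(U_L\) are Gaussian, \(E_2\) also has probability one.

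Let \(E_3\) be the event of the Cai--Zhang uniform guarantee. Since \(L\ge\lceil 2C_{\rm rop}rd\rceil\), condition \eqref{eq:supp-rop-count} holds, so \(E_3\) has probability at least \(1-\exp(-c_{\rm rop}L)\). Because \(E_1\) and \(E_2\) have probability one, a union bound gives \(\Pr(E_1\cap E_2\cap E_3)\ge1-\exp(-c_{\rm rop}L)\).

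\textbf{Step 2: labeling and the value vectors.} On \(E_1\cap E_2\), fix arbitrary labels on \(D_t(u_0,q_0)\). Implication \eqref{eq:supp-low-rank-left-match} propagates these labels uniquely to every \(D_t(u_k,q_0)\), because exactly one triple satisfies the additive identity. Each \(c_h\) stays paired with its \(s_h\), so this labels \(z_h\), and \eqref{eq:supp-low-rank-values} returns \(v_h\). The vectors \(v_h\) determine \(\sum_h v_h\), and hence every one-token output \(F_M([q^\top])\) without further queries. With these outputs we decode \(D_t(u_k,q_k)\) and \(D_t(u_k,q_0+q_k)\). Implication \eqref{eq:supp-low-rank-right-match} then labels \(D_t(u_k,q_k)\) consistently with the labels already fixed on \(D_t(u_k,q_0)\).

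\textbf{Step 3: the matrices.} Step 2 yields the measurements \(\mathcal A_L(W_h)\) under a single labeling common to all \(h\). On \(E_3\), every matrix of rank at most \(r\) is the unique minimizer of \eqref{eq:supp-nuclear-program}. Since the same operator \(\mathcal A_L\) is used for every head, the program returns every \(W_h\) exactly, and all pairs \((W_h,v_h)\) carry the one common permutation coming from the initial labels.

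\textbf{Step 4: query count.} There are \(4L-3\) local decoders, each using \(2H\) repeated-token queries with \(m\le 2H\), so every query has length at most \(2H+1\). The only one-token output queried directly is \(F_M([q_0^\top])\); all other one-token outputs are computed from \(\sum_h v_h\). The total is therefore \(2H(4L-3)+1=8HL-6H+1\). Choosing \(L=\max\{d,\lceil2C_{\rm rop}rd\rceil\}=O(rd)\) gives \(O(Hrd)\) queries.

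\textbf{Main obstacle.} The delicate point is sequencing the events correctly. The Cai--Zhang guarantee only holds with high probability, and it must hold uniformly for all \(H\) matrices under one draw of the Gaussian directions; this is why the uniform-over-rank-\(r\) form of the guarantee is needed, rather than a per-matrix statement. The probability-zero exclusions, by contrast, are over the same Gaussian draw and cost nothing in the union bound. One must also check that the one-token outputs used for the \(q_k\)- and \((q_0+q_k)\)-decoders become available only after the \(v_h\) are recovered. This ordering is consistent, because those decoders are not needed to recover the \(v_h\).
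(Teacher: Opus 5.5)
Your proposal is correct and is essentially the paper's proof. You intersect the probability-one event of Lemma~\ref{lem:supp-low-rank-decoding-matching} and the full-column-rank event for \(U_L\) with the uniform Cai--Zhang event, which keeps the bound \(1-\exp(-c_{\rm rop}L)\); you label via \eqref{eq:supp-low-rank-left-match} and \eqref{eq:supp-low-rank-right-match}, recover \(v_h\) by \eqref{eq:supp-low-rank-values} and \(W_h\) from \(\mathcal A_L(W_h)\) via \eqref{eq:supp-nuclear-program}, and count \(2H(4L-3)+1\) queries of length at most \(2H+1\), including the same ordering of computed one-token outputs used in the paper's construction.
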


\begin{proof}
The probability-one event in
Lemma~\ref{lem:supp-low-rank-decoding-matching} makes every local decoder and
head-labeling step valid.  Since \(L\ge d\), it also holds with probability one
that \(U_L\) has full column rank, so
\eqref{eq:supp-low-rank-values} recovers all value vectors. Equation~\eqref{eq:supp-labeled-rop} gives
\(\mathcal A_L(W_h)\) with consistent head labels for every \(h\).
With probability at least \(1-\exp(-c_{\rm rop}L)\), every rank-\(r\)
matrix is the unique solution of \eqref{eq:supp-nuclear-program}; on
this event, the program recovers every \(W_h\) simultaneously.
Intersecting this event with the probability-one event in
Lemma~\ref{lem:supp-low-rank-decoding-matching} and the probability-one
full-column-rank event for \(U_L\) preserves the lower bound
\(1-\exp(-c_{\rm rop}L)\).

There are \(4L-3\) local decoders, each using \(2H\) repeated-token value
queries, and only the one-token output at \(q_0\) is queried.  This gives
\(2H(4L-3)+1\) queries.  The largest repeated-token query has \(2H\)
copies of its base token and one perturbed token, so its length is
\(2H+1\).
\end{proof}

%% file: supplementary/sections/04_stability_under_approximate_outputs.tex
\section{Stability under Approximate Outputs}
\label{supp:stability}

We prove Theorem~\ref{thm:robust-recovery} under the quantitative
separation and conditioning assumptions defined below. Throughout this
section, the model, the matrices $\mathbf U$ and $\mathbf Q$, and the
nonzero scale $t$ are fixed.

\paragraph{Approximate outputs.}
The approximate oracle satisfies
\begin{equation}
 \left|\widetilde F(X)-F_M(X)\right|\leq\tau
 \qquad\text{for every queried }X.
 \label{eq:supp-approx-oracle}
\end{equation}
We assume that all subsequent computations on the returned values are exact.
Let
\begin{equation}
 \begin{aligned}
  \mathcal A
  ={}&
  \{(u_i,q_j):i,j\in[d]\}{}\cup
  \{(u_1+u_i,q_1):i=2,\ldots,d\}{}\cup
  \{(u_i,q_1+q_j):i\in[d],\ j=2,\ldots,d\}.
 \end{aligned}
 \label{eq:supp-stab-schedule}
\end{equation}
be the $2d^2-1$ pairs in \eqref{eq:grid}--\eqref{eq:qbridge}. In addition to
the repeated-token queries, the approximate-output algorithm queries
\begin{equation}
 \begin{aligned}
  &\{[q_1^\top]\}\cup
  \{[q_j^\top],[(q_1+q_j)^\top]:j=2,\ldots,d\}.
 \end{aligned}
 \label{eq:supp-stab-baselines}
\end{equation}
Consequently, the required one-token output is observed directly for every
$(u,q)\in\mathcal A$. For $(u,q)\in\mathcal A$, $m\in[2H]$, and
$h\in[H]$, define, consistently with Section~\ref{sec:local},
\begin{equation}
 \begin{aligned}
  s_h(u,q)&=t u^\top W_hq,\qquad
  c_h(u,q)=t u^\top v_h,\qquad
  r_h(u,q)=\exp(s_h(u,q)),\\
  y_m(u,q)
  &=F_M(X_m^{(t)}(u,q))-F_M([q^\top])
    =\sum_{h=1}^H
     \frac{c_h(u,q)r_h(u,q)}{m+r_h(u,q)}.
 \end{aligned}
 \label{eq:supp-stab-local-quantities}
\end{equation}
The algorithm observes the corresponding output difference
\begin{equation}
 \widetilde y_m(u,q)
 =
 \widetilde F(X_m^{(t)}(u,q))-\widetilde F([q^\top]).
 \label{eq:supp-stab-noisy-sample}
\end{equation}
Since each of the two queried outputs has error at most $\tau$,
\begin{equation}
 \left|\widetilde y_m(u,q)-y_m(u,q)\right|\leq 2\tau.
 \label{eq:supp-stab-sample-error}
\end{equation}
Directly querying the required one-token outputs prevents errors in the
recovered $v_h$ from propagating to later local recoveries.
\paragraph{Quantitative separation and conditioning.}
We first record the ranges and separations of the quantities recovered at
each pair. All extrema below range over $(u,q)\in\mathcal A$ and,
where applicable, $h,g\in[H]$.  Define
\begin{align}
 r_{\min}
 &=
 \min r_h(u,q),
 \qquad
 r_{\max}
 =
 \max r_h(u,q),
 \label{eq:supp-stab-r-range}\\
 \delta_r
 &=
 \min_{(u,q)\in\mathcal A}\min_{h\neq g}
 |r_h(u,q)-r_g(u,q)|,
 \qquad
 c_{\min}
 =
 \min |c_h(u,q)|,
 \label{eq:supp-stab-primitive-margins}\\
 B_c
 &=
 \max_{(u,q)\in\mathcal A}
 \left\|(c_1(u,q),\ldots,c_H(u,q))\right\|_2.
 \label{eq:supp-stab-c-bound}
\end{align}
For the random construction used in the exact-recovery theorem,
$r_{\min}>0$, $\delta_r>0$, and $c_{\min}>0$ with probability one.  Once
$\mathbf U$ and $\mathbf Q$ have been drawn, however, these are fixed
instance-dependent quantities and may be arbitrarily small.

We next quantify the conditioning of the two linear systems used in local
recovery.  For $m\in[2H]$ and $a=0,\ldots,H-1$, let
\begin{equation}
 V_{m,a}=m^a,
 \qquad
 w_m=m^H.
 \label{eq:supp-stab-vandermonde}
\end{equation}
For $y=(y_1,\ldots,y_{2H})^\top$, define
\begin{equation}
 A(y)=
 \left[
  V,\ -\operatorname{diag}(y)V
 \right],
 \qquad
 b(y)=\operatorname{diag}(y)w.
 \label{eq:supp-stab-interpolation-system}
\end{equation}
If
\[
 \mathcal P(z)=\sum_{a=0}^{H-1}p_az^a,
 \qquad
 \mathcal Q(z)=z^H+\sum_{a=0}^{H-1}q_az^a,
\]
then the coefficient vector
$\theta=(p_0,\ldots,p_{H-1},q_0,\ldots,q_{H-1})^\top$
satisfies $A(y)\theta=b(y)$.

For $r=(r_1,\ldots,r_H)$, define the $2H\times H$ coefficient-recovery
matrix
\begin{equation}
 G(r)_{m,h}=\frac{r_h}{m+r_h},
 \qquad m\in[2H],\ h\in[H].
 \label{eq:supp-stab-G}
\end{equation}
To obtain bounds that hold for every pair in $\mathcal A$, define
\begin{equation}
 \gamma_{\rm int}
 =
 \min_{(u,q)\in\mathcal A}
 \sigma_{\min}(A(y(u,q))),
 \qquad
 \gamma_c
 =
 \min_{(u,q)\in\mathcal A}
 \sigma_{\min}(G(r(u,q))).
 \label{eq:supp-stab-conditioning}
\end{equation}
Appendix~\ref{supp:stability-proof} shows that distinct poles and nonzero
coefficients make $A(y)$ nonsingular. Distinct positive poles also make
$G(r)$ have full column rank. Since $\mathcal A$ is finite,
$\gamma_{\rm int}$ and $\gamma_c$ are positive whenever these conditions
hold for every $(u,q)\in\mathcal A$.

\paragraph{Consistent head labeling.}
Recovery at each pair determines the heads only up to a permutation. To
ensure that selecting the candidate with the smallest absolute residual
preserves a common labeling, define, for $d\geq2$,
\begin{align}
 \Delta_u
 &=
 \min_{\substack{i=2,\ldots,d\\
                  (h_1,h_2,h_3)\in[H]^3\\
                  h_1,h_2,h_3\ {\rm not\ all\ equal}}}
 \left|
  s_{h_1}(u_1+u_i,q_1)
  -s_{h_2}(u_1,q_1)
  -s_{h_3}(u_i,q_1)
 \right|,
 \label{eq:supp-stab-u-margin}\\
 \Delta_q
 &=
 \min_{\substack{i\in[d],\ j=2,\ldots,d\\
                  (h_1,h_2,h_3)\in[H]^3\\
                  h_1,h_2,h_3\ {\rm not\ all\ equal}}}
 \left|
  s_{h_1}(u_i,q_1+q_j)
  -s_{h_2}(u_i,q_1)
  -s_{h_3}(u_i,q_j)
 \right|,
 \label{eq:supp-stab-q-margin}\\
 \Delta_{\rm match}
 &=
 \min\{\Delta_u,\Delta_q\}.
 \label{eq:supp-stab-match-margin}
\end{align}
The identities hold only for triples from the same head precisely when
$\Delta_{\rm match}>0$. When $d=1$, no head labeling is required, and
we set $\Delta_{\rm match}=+\infty$.

The final linear reconstruction also depends on the conditioning of the
two basis matrices.  Write
\begin{equation}
 \begin{aligned}
  \mathbf U&=[u_1,\ldots,u_d]^\top,
  &\mathbf Q&=[q_1,\ldots,q_d],\\
  \gamma_U&=\sigma_{\min}(\mathbf U),
  &\gamma_Q&=\sigma_{\min}(\mathbf Q).
 \end{aligned}
 \label{eq:supp-stab-basis-margins}
\end{equation}
The construction in \eqref{eq:conditioned-bases} gives
$\gamma_U,\gamma_Q\geq1$.  The proof uses only their positivity and
therefore applies to any fixed invertible $\mathbf U$ and $\mathbf Q$
satisfying the preceding assumptions.

\paragraph{Recovery from approximate outputs.}
For each $(u,q)\in\mathcal A$, reconstruction begins by solving
\[
 A(\widetilde y)\widehat\theta=b(\widetilde y).
\]
The values $\widehat r_1,\ldots,\widehat r_H$ are then obtained as the
negatives of the roots of $\widehat{\mathcal Q}$, and
$\widehat s_h=\log\widehat r_h$. For this pair, write
\[
 c=(c_1(u,q),\ldots,c_H(u,q))^\top.
\]
The coefficient estimate is the least-squares solution
\begin{equation}
 \widehat c
 \coloneqq
 \arg\min_{z\in\R^H}
 \|G(\widehat r)z-\widetilde y\|_2.
 \label{eq:supp-stab-least-squares}
\end{equation}
This solution is unique whenever $G(\widehat r)$ has full column rank.
To obtain a common head labeling across different choices of $(u,q)$,
the algorithm selects the candidate with the smallest absolute residual
for each identity in \eqref{eq:add-u} and
\eqref{eq:add-q}.

%% file: supplementary/sections/04_01_proof_explicit_constants.tex
\subsection{Proof and Explicit Constants}
\label{supp:stability-proof}

We now derive explicit values of $\tau_0$ and $C_{\rm stab}$.  The bounds
are conservative, but display how the final error depends on the
conditioning quantities defined above.

\paragraph{Interpolation coefficients.}
Let
\begin{equation}
 V_H=\|V\|_2,
 \qquad
 S_H=\|w\|_2,
 \label{eq:supp-stab-VS}
\end{equation}
where $V$ and $w$ are defined in
\eqref{eq:supp-stab-vandermonde}, and set
\begin{align}
 B_\theta
 &=
 (1+r_{\max})^H
 +\sqrt H\,B_c r_{\max}(1+r_{\max})^{H-1},
 \label{eq:supp-stab-Btheta}\\
 C_\theta
 &=
 \frac{4(S_H+V_HB_\theta)}{\gamma_{\rm int}}.
 \label{eq:supp-stab-Ctheta}
\end{align}

\begin{lemma}[Interpolation perturbation]
\label{lem:supp-stab-interpolation}
If
\begin{equation}
 \tau\leq\frac{\gamma_{\rm int}}{4V_H},
 \label{eq:supp-stab-interpolation-threshold}
\end{equation}
then every noisy interpolation system is nonsingular and
\begin{equation}
 \|\widehat\theta-\theta\|_2\leq C_\theta\tau.
 \label{eq:supp-stab-theta-error}
\end{equation}
\end{lemma}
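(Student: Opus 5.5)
The plan is a standard perturbation argument for the square linear system $A(y)\theta=b(y)$ of \eqref{eq:supp-stab-interpolation-system}. Fix a pair $(u,q)\in\mathcal A$, and write $y=y(u,q)$ for the exact samples and $\widetilde y$ for the observed ones. By \eqref{eq:supp-stab-sample-error}, $E\coloneqq\operatorname{diag}(\widetilde y-y)$ satisfies $\|E\|_2\le 2\tau$. Both $A(\cdot)$ and $b(\cdot)$ are affine in $y$, so
\[
 A(\widetilde y)-A(y)=\left[0,\ -EV\right],
 \qquad
 b(\widetilde y)-b(y)=Ew.
\]
It follows that $\|A(\widetilde y)-A(y)\|_2\le 2\tau V_H$ and $\|b(\widetilde y)-b(y)\|_2\le 2\tau S_H$.

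The first step is nonsingularity. By Weyl's inequality for singular values,
\[
 \sigma_{\min}(A(\widetilde y))\ge\gamma_{\rm int}-2\tau V_H,
\]
and the threshold \eqref{eq:supp-stab-interpolation-threshold} makes the right-hand side at least $\gamma_{\rm int}/2>0$. This uses $\gamma_{\rm int}>0$, which I take from the definition \eqref{eq:supp-stab-conditioning} together with the assumed distinct poles and nonzero coefficients. Every noisy system is therefore nonsingular, and $\widehat\theta$ is well defined.

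The second step is the error identity. Subtracting $A(y)\theta=b(y)$ from $A(\widetilde y)\widehat\theta=b(\widetilde y)$ gives
\[
 A(\widetilde y)(\widehat\theta-\theta)=Ew+E\,V\theta_{\mathcal Q},
\]
where $\theta_{\mathcal Q}=(q_0,\ldots,q_{H-1})^\top$. The right-hand side has norm at most $2\tau(S_H+V_H\|\theta\|_2)$. Dividing by $\gamma_{\rm int}/2$ yields
\[
 \|\widehat\theta-\theta\|_2\le\frac{4\tau(S_H+V_H\|\theta\|_2)}{\gamma_{\rm int}},
\]
so it remains only to show $\|\theta\|_2\le B_\theta$.

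The third step, bounding $\|\theta\|_2$ by $B_\theta$, is the only step with real content, and I expect it to be the main obstacle; still, I expect it to be routine once the coefficients are written out.
\begin{itemize}
\item The denominator $\mathcal Q(z)=\prod_h(z+r_h)$ has coefficients equal to the elementary symmetric polynomials of the $r_h>0$. Their $\ell_1$ norm equals $\prod_h(1+r_h)\le(1+r_{\max})^H$, and the $\ell_2$ norm is at most the $\ell_1$ norm.
\item The numerator is $\mathcal P(z)=\sum_h c_hr_h\prod_{g\ne h}(z+r_g)$. By the same reasoning, its coefficient vector has $\ell_1$ norm at most $\sum_h|c_h|\,r_{\max}(1+r_{\max})^{H-1}$.
\item By Cauchy--Schwarz, $\sum_h|c_h|\le\sqrt H\,\|c\|_2\le\sqrt H\,B_c$, using \eqref{eq:supp-stab-c-bound}.
\end{itemize}
Summing the two contributions gives $\|\theta\|_2\le B_\theta$ as defined in \eqref{eq:supp-stab-Btheta}. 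Therefore $\|\widehat\theta-\theta\|_2\le C_\theta\tau$, which is \eqref{eq:supp-stab-theta-error}.

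All quantities used above are maxima or minima over the finite set $\mathcal A$. The same threshold and the same constant therefore hold simultaneously for every pair.
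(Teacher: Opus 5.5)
Your argument follows the paper's proof and is correct as far as it goes: the expressions for $A(\widetilde y)-A(y)$ and $b(\widetilde y)-b(y)$, Weyl's inequality, the error identity, and the bound $\|\theta\|_2\le B_\theta$ via coefficient $\ell_1$ norms and Cauchy--Schwarz. The gap is the step you ``take from the definition''. Definition \eqref{eq:supp-stab-conditioning} only gives $\gamma_{\rm int}\ge 0$, and positivity requires every exact matrix $A(y(u,q))$ to be nonsingular. That is a claim needing proof, not a definition. Suppose two $r_h$ coincided or some $c_h$ vanished. Then the reduced form of $\mathcal R$ would have denominator degree $H'<H$. Multiplying its numerator and denominator by any monic polynomial of degree $H-H'$ would give a family of solutions of $A(y)\theta=b(y)$, so $\gamma_{\rm int}=0$. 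The threshold would then force $\tau=0$, and the conclusion that the (now exact) system is nonsingular would be false. Theorem~\ref{thm:robust-recovery} assumes only the qualitative conditions (distinct $r_h$, nonzero $c_h$), so this implication has to be proved. The paper proves it as the first step of this lemma.

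The missing argument is short. Suppose $A(y)(P_0,Q_0)=0$ with $\deg P_0,\deg Q_0<H$. Then $P_0(m)=y_mQ_0(m)$ for $m\in[2H]$. Using $\mathcal P(m)=y_m\mathcal Q(m)$, the polynomial $P_0\mathcal Q-\mathcal P Q_0$ vanishes at $1,\ldots,2H$ and has degree at most $2H-1$, so it is identically zero. Since $\mathcal P(-r_h)=c_hr_h\prod_{g\ne h}(r_g-r_h)\ne0$, the polynomials $\mathcal P$ and $\mathcal Q$ are coprime, so $\mathcal Q$ divides $Q_0$. Comparing degrees forces $Q_0=0$, and then $P_0=0$. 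With this inserted, your proof is complete. Incidentally, your identity $A(\widetilde y)(\widehat\theta-\theta)=Ew+EV\theta_{\mathcal Q}$ shows that only $\|\theta_{\mathcal Q}\|_2\le(1+r_{\max})^H$ is actually needed. The numerator bound therefore only serves to match the paper's stated $C_\theta$.
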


\begin{proof}
We first show that the exact interpolation matrix is nonsingular.
Suppose
$A(y)(P_0,Q_0)=0$, where both $P_0$ and $Q_0$ have degree less than $H$.
For the reduced rational function $\mathcal P/\mathcal Q$, the polynomial $P_0\mathcal Q-\mathcal P Q_0$ has degree at most $2H-1$ and vanishes at $1,\ldots,2H$.  It is therefore
the zero polynomial.  Distinct $r_h$ and nonzero $c_h$ imply that
$\mathcal P$ and $\mathcal Q$ are coprime.  Hence $\mathcal Q$ divides
$Q_0$.  Since $\deg Q_0<H=\deg\mathcal Q$, this gives $Q_0=0$, and then
$P_0=0$.  Thus $A(y)$ is nonsingular.

The coefficient $\ell_1$ norm of
$\mathcal Q(z)=\prod_h(z+r_h)$ is at most $(1+r_{\max})^H$.  Moreover,
\begin{equation}
 \mathcal P(z)
 =
 \sum_{h=1}^H
 c_hr_h\prod_{g\neq h}(z+r_g).
 \label{eq:supp-stab-P-expansion}
\end{equation}
Cauchy--Schwarz therefore gives
\[
 \|\mathcal P\|_{\mathrm{coeff},2}
 \leq
 \sqrt H\,B_cr_{\max}(1+r_{\max})^{H-1}.
\]
Consequently, $\|\theta\|_2\leq B_\theta$.

Let $e=\widetilde y-y$.  By
\eqref{eq:supp-stab-sample-error},
\begin{equation}
 \|A(\widetilde y)-A(y)\|_2\leq2\tau V_H,
 \qquad
 \|b(\widetilde y)-b(y)\|_2\leq2\tau S_H.
 \label{eq:supp-stab-Ab-perturbation}
\end{equation}
Weyl's inequality and
\eqref{eq:supp-stab-interpolation-threshold} give
\[
 \sigma_{\min}(A(\widetilde y))
 \geq
 \gamma_{\rm int}-2\tau V_H
 \geq\frac{\gamma_{\rm int}}2.
\]
Subtracting the exact and perturbed systems yields
\[
 A(\widetilde y)(\widehat\theta-\theta)
 =
 [b(\widetilde y)-b(y)]
 -
 [A(\widetilde y)-A(y)]\theta.
\]
Taking norms and using
$\|A(\widetilde y)^{-1}\|_2\leq2/\gamma_{\rm int}$
proves \eqref{eq:supp-stab-theta-error}.
\end{proof}

\paragraph{Perturbation of $r_h$ and $s_h$.}
Define
\begin{align}
 K_Q
 &=
 \left(
  \sum_{a=0}^{H-1}
  (r_{\max}+\delta_r/2)^{2a}
 \right)^{1/2},
 \label{eq:supp-stab-KQ}\\
 C_r
 &=
 \frac{2K_QC_\theta}{(\delta_r/2)^{H-1}},
 &
 C_s
 &=
 \frac{2C_r}{r_{\min}}.
 \label{eq:supp-stab-Cr-Cs}
\end{align}

\begin{lemma}[Perturbation bounds for $r_h$ and $s_h$]
\label{lem:supp-stab-roots}
Suppose Lemma~\ref{lem:supp-stab-interpolation} applies and
\begin{equation}
 C_r\tau<
 \min\left\{\frac{\delta_r}{2},\frac{r_{\min}}2\right\}.
 \label{eq:supp-stab-root-threshold}
\end{equation}
Then $\widehat{\mathcal Q}$ has $H$ distinct negative real roots.  The recovered values can be relabeled separately for each $(u,q)$ so that
\begin{equation}
 \max_h|\widehat r_h-r_h|\leq C_r\tau,
 \qquad
 \max_h|\widehat s_h-s_h|\leq C_s\tau.
 \label{eq:supp-stab-rs-error}
\end{equation}
\end{lemma}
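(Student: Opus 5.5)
The plan is a Rouché-type root-localization argument for the perturbed monic polynomial $\widehat{\mathcal Q}$, followed by a short argument that the roots are real and negative, and then a mean-value estimate for the logarithm.

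\emph{Setup.} Write $\widehat{\mathcal Q}-\mathcal Q=E(z)=\sum_{a=0}^{H-1}(\widehat q_a-q_a)z^a$. This has degree less than $H$, because both polynomials are monic of degree $H$. By Cauchy--Schwarz and Lemma~\ref{lem:supp-stab-interpolation}, for $|z|\le r_{\max}+\delta_r/2$,
\[
|E(z)|\le \|\widehat\theta-\theta\|_2\Bigl(\sum_{a=0}^{H-1}|z|^{2a}\Bigr)^{1/2}\le K_QC_\theta\tau.
\]

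\emph{Root localization.} Fix $h$ and let $\rho\coloneqq C_r\tau$. By \eqref{eq:supp-stab-root-threshold}, $\rho<\delta_r/2$. On the circle $|z+r_h|=\rho$ we have $|z|\le r_{\max}+\delta_r/2$. For $g\ne h$, $|z+r_g|\ge|r_g-r_h|-\rho\ge\delta_r/2$. Hence
\[
|\mathcal Q(z)|=\rho\prod_{g\ne h}|z+r_g|\ge\rho\,(\delta_r/2)^{H-1}=2K_QC_\theta\tau>|E(z)|,
\]
using the definition of $C_r$. When $\tau=0$ the claim is trivial, so assume $\tau>0$.

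Rouché's theorem then shows that $\widehat{\mathcal Q}$ has exactly one root in each open disk $\{|z+r_h|<\rho\}$. These $H$ disks are pairwise disjoint because $2\rho<\delta_r$. Since $\deg\widehat{\mathcal Q}=H$, these are all its roots, and they are simple.

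\emph{Reality and sign.} $\widehat{\mathcal Q}$ has real coefficients, so its nonreal roots come in conjugate pairs. Each disk is centered on the real axis, hence symmetric under conjugation. The conjugate of the unique root in a disk is therefore also a root in that disk, so it equals the root itself and the root is real. The root lies in $(-r_h-\rho,-r_h+\rho)$, and $\rho<r_{\min}/2\le r_h/2$, so it is negative. Label it $-\widehat r_h$. Then $|\widehat r_h-r_h|<C_r\tau$ and $\widehat r_h>r_h/2\ge r_{\min}/2$.

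\emph{Logarithm bound.} By the mean value theorem, $|\widehat s_h-s_h|=|\log\widehat r_h-\log r_h|\le|\widehat r_h-r_h|/\xi$ for some $\xi$ between $r_h$ and $\widehat r_h$. Since $\xi\ge r_{\min}/2$, this gives $|\widehat s_h-s_h|\le 2C_r\tau/r_{\min}=C_s\tau$.

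\emph{Main obstacle.} The main care is in the lower bound for $|\mathcal Q|$ on the circle, and in ensuring that the disk radius keeps $|z|$ inside the range where $K_Q$ controls $E$. Both are handled by the threshold $\rho<\delta_r/2$.
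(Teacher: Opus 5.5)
Your proposal is correct and follows essentially the same route as the paper: Rouch\'e's theorem on circles of radius $\rho=C_r\tau$ around each $-r_h$, comparing $|\mathcal Q(z)|\ge\rho(\delta_r/2)^{H-1}=2K_QC_\theta\tau$ with $|\widehat{\mathcal Q}(z)-\mathcal Q(z)|\le K_QC_\theta\tau$. As in the paper, you then use the conjugation symmetry of the disks to get reality, $\rho<r_{\min}/2$ to get negativity, and the mean-value theorem for $\log$ on $[r_{\min}/2,\infty)$, while also spelling out the Cauchy--Schwarz bound on the coefficient perturbation and the estimate $|z+r_g|\ge\delta_r-\rho$, which the paper leaves implicit.
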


\begin{proof}
The claim is immediate when $\tau=0$.  Assume $\tau>0$, put
$\epsilon_\theta=C_\theta\tau$, and define
\[
 \rho
 =
 \frac{2K_Q\epsilon_\theta}{(\delta_r/2)^{H-1}}
 =
 C_r\tau.
\]
On the circle $|z+r_h|=\rho$, the separation condition gives
\[
 |\mathcal Q(z)|
 =
 \prod_{g=1}^H|z+r_g|
 \geq
 \rho(\delta_r/2)^{H-1}
 =
 2K_Q\epsilon_\theta.
\]
Because $|z|\leq r_{\max}+\delta_r/2$ on these circles,
\eqref{eq:supp-stab-theta-error} gives
\[
 |\widehat{\mathcal Q}(z)-\mathcal Q(z)|
 \leq K_Q\epsilon_\theta.
\]
Rouch\'e's theorem places exactly one root of
$\widehat{\mathcal Q}$ in each disk centered at $-r_h$ with radius
$\rho$.  The disks are disjoint and account for all $H$ roots.  The
coefficients of $\widehat{\mathcal Q}$ are real, and each disk is invariant
under complex conjugation.  Uniqueness of the root in each disk therefore
forces that root to be real.  The bound $\rho<r_{\min}/2$ makes every root
negative.

It remains to take logarithms.  Both $r_h$ and $\widehat r_h$ are at least
$r_{\min}/2$, so the mean-value theorem for $\log r$ gives
\[
 |\log\widehat r_h-\log r_h|
 \leq\frac{2}{r_{\min}}|\widehat r_h-r_h|.
\]
This proves \eqref{eq:supp-stab-rs-error}.
\end{proof}

\paragraph{Perturbation of the recovered coefficients.}
Set
\begin{align}
 L_G
 &=
 \frac{1}{1+r_{\min}/2},
 &
 C_G
 &=
 \sqrt{2}\,H L_GC_r,
 \label{eq:supp-stab-CG}\\
 C_c
 &=
 \frac{4\sqrt{2H}+2C_GB_c}{\gamma_c}.
 \label{eq:supp-stab-Cc}
\end{align}

\begin{lemma}[Perturbation bound for $c$]
\label{lem:supp-stab-values}
Suppose Lemma~\ref{lem:supp-stab-roots} applies and
\begin{equation}
C_G\cdot\tau\leq\frac{\gamma_c}{2}.
 \label{eq:supp-stab-c-threshold}
\end{equation}
After relabeling the entries of $\widehat c$ in the same way as the
recovered $\widehat r_h$ in Lemma~\ref{lem:supp-stab-roots}, the
least-squares estimate in \eqref{eq:supp-stab-least-squares} satisfies
\begin{equation}
 \|\widehat c-c\|_2\leq C_c\cdot\tau.
 \label{eq:supp-stab-c-error}
\end{equation}
If additionally $C_c\cdot\tau<c_{\min}/2$, every recovered coefficient remains
nonzero and has the same sign as its corresponding exact coefficient.
\end{lemma}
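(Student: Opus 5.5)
The plan is a standard least-squares perturbation argument at a single pair $(u,q)\in\mathcal A$: bound the perturbation $G(\widehat r)-G(r)$ in norm, then propagate it together with the sample error through the pseudoinverse of $G(\widehat r)$. Write $G=G(r)$ and $\widehat G=G(\widehat r)$, where the entries of $\widehat r$ are relabeled as in Lemma~\ref{lem:supp-stab-roots}. Relabeling $\widehat r$ only permutes the columns of $\widehat G$, so the same permutation applied to $\widehat c$ keeps $\widehat G\widehat c$ unchanged. The exact samples satisfy $y=Gc$ by \eqref{eq:supp-stab-local-quantities}.

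\emph{Step 1: bound $\|\widehat G-G\|_2$.} For fixed $m\ge1$, consider the function $\phi_m(r)=r/(m+r)$, whose derivative is $\phi_m'(r)=m/(m+r)^2$. Since $m/(m+r)\le1$, we have $\phi_m'(r)\le 1/(m+r)\le 1/(1+r)$. Lemma~\ref{lem:supp-stab-roots} guarantees that both $r_h$ and $\widehat r_h$, and hence the whole segment between them, lie above $r_{\min}/2$. On that segment the mean-value theorem gives
\[
 |\widehat G_{m,h}-G_{m,h}|\le L_G|\widehat r_h-r_h|\le L_GC_r\tau .
\]
There are $2H\cdot H$ entries, so the Frobenius norm, and hence the spectral norm, satisfies $\|\widehat G-G\|_2\le\sqrt{2H^2}\,L_GC_r\tau=C_G\tau$.

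\emph{Step 2: keep $\widehat G$ well conditioned.} Weyl's inequality together with \eqref{eq:supp-stab-c-threshold} gives $\sigma_{\min}(\widehat G)\ge\gamma_c-C_G\tau\ge\gamma_c/2>0$. Hence $\widehat G$ has full column rank, the least-squares solution is unique, $\widehat c=\widehat G^{+}\widetilde y$, and $\|\widehat G^{+}\|_2\le 2/\gamma_c$.

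\emph{Step 3: error identity and final bound.} Since $\widehat G^{+}\widehat G=I$, we have $c=\widehat G^{+}\widehat Gc$. Subtracting this from $\widehat c=\widehat G^{+}\widetilde y$ gives
\[
 \widehat c-c=\widehat G^{+}\bigl[(\widetilde y-y)+(G-\widehat G)c\bigr].
\]
By \eqref{eq:supp-stab-sample-error}, $\|\widetilde y-y\|_2\le 2\tau\sqrt{2H}$. By Step~1 and the definition of $B_c$, $\|(G-\widehat G)c\|_2\le C_G\tau B_c$. Combining these with Step~2,
\[
 \|\widehat c-c\|_2\le\frac{2}{\gamma_c}\bigl(2\sqrt{2H}+C_GB_c\bigr)\tau=C_c\tau,
\]
which is \eqref{eq:supp-stab-c-error}.

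\emph{Step 4: signs.} Suppose in addition that $C_c\tau<c_{\min}/2$. Then for each $h$,
\[
 |\widehat c_h-c_h|\le\|\widehat c-c\|_2<c_{\min}/2\le|c_h|/2 .
\]
So $\widehat c_h$ lies in the open interval of radius $|c_h|/2$ around $c_h$. That interval excludes $0$, hence $\widehat c_h\ne0$ and $\operatorname{sign}\widehat c_h=\operatorname{sign}c_h$.

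The only delicate point is Step~1. The derivative bound for $\phi_m$ must be evaluated on the entire segment between $r_h$ and $\widehat r_h$, not only at the endpoints, and this is exactly where the root-localization conclusion $\widehat r_h,r_h\ge r_{\min}/2$ of Lemma~\ref{lem:supp-stab-roots} is needed. The remaining steps are routine.
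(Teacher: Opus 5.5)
Your proposal is correct and follows the paper's proof essentially step for step: the same derivative bound $m/(m+r)^2\le L_G$ for $r\ge r_{\min}/2$, the same Frobenius count over $2H^2$ entries giving $C_G\tau$, the same use of Weyl's inequality to get $\sigma_{\min}(G(\widehat r))\ge\gamma_c/2$, and the same final bound. The only cosmetic difference is that you write $\widehat c-c=\widehat G^{+}[(\widetilde y-y)+(G-\widehat G)c]$ with the pseudoinverse, whereas the paper says that $G(\widehat r)(\widehat c-c)$ is the orthogonal projection of that same vector onto the column space of $G(\widehat r)$; these are the same identity.
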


\begin{proof}
For $g_m(r)=r/(m+r)$ and $r\geq r_{\min}/2$,
\[
 |g_m'(r)|
 =
 \frac{m}{(m+r)^2}
 \leq
 \frac{1}{1+r_{\min}/2}
 =
 L_G.
\]
There are $2H^2$ entries in $G$.  Hence
\begin{equation}
 \begin{aligned}
  \|G(\widehat r)-G(r)\|_2
  &\leq
  \|G(\widehat r)-G(r)\|_F\leq
  \sqrt2\,H L_GC_r\tau
  =
  C_G\cdot\tau.
 \end{aligned}
 \label{eq:supp-stab-G-perturbation}
\end{equation}
Weyl's inequality gives
$\sigma_{\min}(G(\widehat r))\geq\gamma_c/2$.
Therefore, $G(\widehat r)$ has full column rank, so the least-squares
solution is unique. Since $y=G(r)c$, the vector
$G(\widehat r)(\widehat c-c)$ is the orthogonal projection of
\[
 (\widetilde y-y)
 -(G(\widehat r)-G(r))c
\]
onto the column space of $G(\widehat r)$. Its norm is therefore no larger
than the norm of this vector. Hence
\[
 \frac{\gamma_c}{2}\|\widehat c-c\|_2
 \leq
 \|\widetilde y-y\|_2
 +
 \|G(\widehat r)-G(r)\|_2\|c\|_2.
\]
Using $\|\widetilde y-y\|_2\leq2\sqrt{2H}\tau$,
\eqref{eq:supp-stab-G-perturbation}, and $\|c\|_2\leq B_c$ proves
\eqref{eq:supp-stab-c-error}. The final statement follows from
$|\widehat c_h-c_h|\leq\|\widehat c-c\|_2$.
\end{proof}

Pole separation also gives an explicit lower bound on $\gamma_c$.
Taking the first $H$ rows of $G(r)$ and applying the Cauchy determinant
formula yields
\begin{equation}
 \gamma_c
 \geq
 \frac{
  r_{\min}^H\delta_r^{H(H-1)/2}
 }{
  (H+r_{\max})^{H^2}H^{H-1}
 }.
 \label{eq:supp-stab-gamma-c-lower}
\end{equation}
To obtain this bound, consider the $H\times H$ matrix formed by the
first $H$ rows of $G(r)$. Its Cauchy determinant contains the factor
\[
 \prod_{1\le i<j\le H}(j-i),
\]
which is at least one, and the factor
\[
 \prod_{h<g}|r_h-r_g|,
\]
which is at least $\delta_r^{H(H-1)/2}$. Since every entry of this
submatrix is at most one, its spectral norm is at most its Frobenius
norm and hence at most $H$. The determinant identity relating the
singular values then gives \eqref{eq:supp-stab-gamma-c-lower}. Adding the
remaining rows cannot decrease the smallest singular value.

\paragraph{Stability of consistent head labeling.}

\begin{lemma}[Preservation of head labels]
\label{lem:supp-stab-matching}
Suppose that, after relabeling as in
Lemma~\ref{lem:supp-stab-roots}, the recovered values satisfy
\[
 |\widehat s_h(u,q)-s_h(u,q)|
 \leq \epsilon_s
\]
for every $(u,q)\in\mathcal A$ and $h\in[H]$.  If
\begin{equation}
 6\epsilon_s<\Delta_{\rm match},
 \label{eq:supp-stab-match-threshold}
\end{equation}
then selecting the candidate with the smallest absolute residual in every
instance of \eqref{eq:add-u} and \eqref{eq:add-q}
recovers the exact common head labeling.
\end{lemma}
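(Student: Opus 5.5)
The argument is a triangle-inequality margin check applied to each instance of \eqref{eq:add-u} and \eqref{eq:add-q} separately. I present it for an instance of \eqref{eq:add-u}; the case of \eqref{eq:add-q} is identical, with $\Delta_q$ in place of $\Delta_u$.

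Fix $i\in\{2,\ldots,d\}$. The labels of $D_t(u_1,q_1)$ are already fixed, and we want to label $D_t(u_i,q_1)$ using $D_t(u_1+u_i,q_1)$. Whatever the exact matching procedure, its outcome is determined by the residuals of all candidate triples, so I first bound those residuals. For a triple $(h_1,h_2,h_3)$, define the exact residual
\[
\rho(h_1,h_2,h_3)=s_{h_1}(u_1+u_i,q_1)-s_{h_2}(u_1,q_1)-s_{h_3}(u_i,q_1),
\]
where indices refer to the true heads. Let $\widehat\rho$ denote the same expression with each $s$ replaced by its recovered value $\widehat s$, using the per-pair relabeling of Lemma~\ref{lem:supp-stab-roots}. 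Each residual involves exactly three recovered values, each within $\epsilon_s$ of its exact counterpart. The triangle inequality therefore gives
\[
|\widehat\rho-\rho|\le3\epsilon_s
\]
for every triple.

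The two cases then separate cleanly.
\begin{itemize}
\item If $h_1=h_2=h_3$, then $\rho=0$ by bilinearity, so $|\widehat\rho|\le3\epsilon_s$.
\item Otherwise, by the definition \eqref{eq:supp-stab-u-margin}, $|\rho|\ge\Delta_u\ge\Delta_{\rm match}$, so $|\widehat\rho|\ge\Delta_{\rm match}-3\epsilon_s$.
\end{itemize}
The hypothesis $6\epsilon_s<\Delta_{\rm match}$ is equivalent to $3\epsilon_s<\Delta_{\rm match}-3\epsilon_s$. Hence every correct triple has strictly smaller absolute residual than every incorrect triple.

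It remains to specify what ``the candidate with the smallest absolute residual'' means, and to check that it yields the correct labeling. The left factor $h_2$ is a fixed label from $D_t(u_1,q_1)$. For each such $h_2$ and each element $h_1$ of $D_t(u_1+u_i,q_1)$, the procedure selects the element $h_3$ of $D_t(u_i,q_1)$ minimizing $|\widehat\rho(h_1,h_2,h_3)|$.

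By the strict separation above, the global minimizers are exactly the diagonal triples $h_1=h_2=h_3$. There are precisely $H$ of these, one per head. The same conclusion holds under any of the following readings of the procedure:
\begin{itemize}
\item minimizing over $h_3$ for fixed $h_2$;
\item minimizing over pairs $(h_1,h_3)$ for fixed $h_2$;
\item selecting the $H$ smallest residuals overall.
\end{itemize}
In each case the rule assigns to $h_3$ the same label as $h_2$, and does so bijectively. The only step needing care is showing that the selected assignment is a bijection rather than a many-to-one map. This follows because each true head occurs exactly once in every decoded set. That in turn holds because, by Lemma~\ref{lem:supp-stab-roots}, each recovered set contains $H$ distinct entries in one-to-one correspondence with the true heads.

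Applying the argument for every $i$ propagates the labels of $D_t(u_1,q_1)$ to every $D_t(u_i,q_1)$. Repeating it for \eqref{eq:add-q}, with $D_t(u_i,q_1)$ now labeled, labels every $D_t(u_i,q_j)$. When $d=1$ no matching is performed and the statement is vacuous.
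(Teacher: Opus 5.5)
Your proposal is correct and follows the paper's proof: a same-head triple has recovered residual at most $3\epsilon_s$, and every other triple has residual at least $\Delta_{\rm match}-3\epsilon_s>3\epsilon_s$, so applying this to each anchor label yields the common labeling. One point to tighten: for a fixed anchor $h_2$, the minimization must range jointly over $(h_1,h_3)$, as in your second reading and the paper's anchor-label formulation, because minimizing over $h_3$ alone with a fixed $h_1\neq h_2$ only ever compares incorrect triples.
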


\begin{proof}
For a triple from the same head, the exact residual is zero, so the
recovered residual is at most $3\epsilon_s$. For every incorrect triple,
the exact residual has magnitude at least $\Delta_{\rm match}$, so the
recovered residual is at least
$\Delta_{\rm match}-3\epsilon_s>3\epsilon_s$.  Thus the true triple is the
unique closest candidate. Applying this argument to every anchor label
gives a consistent labeling of all recovered sets.
\end{proof}

\paragraph{End-to-end constants.}
Define
\begin{equation}
 \tau_0
 =
 \min\left\{
  \frac{\gamma_{\rm int}}{4V_H},
  \frac{\delta_r}{2C_r},
  \frac{r_{\min}}{2C_r},
  \frac{\gamma_c}{2C_G},
  \frac{c_{\min}}{2C_c},
  \frac{\Delta_{\rm match}}{6C_s}
 \right\},
 \label{eq:supp-stab-tau0}
\end{equation}
where the last term is omitted when $d=1$, and define
\begin{equation}
 C_{\rm stab}
 =
 \frac{1}{|t|}
 \left(
  \frac{dC_s}{\gamma_U\gamma_Q}
  +
  \frac{\sqrt d\,C_c}{\gamma_U}
 \right).
 \label{eq:supp-stab-Cstab}
\end{equation}

\begin{theorem}[Explicit conditional stability]
\label{thm:supp-stability-explicit}
Fix a model with $H\geq2$ canonical heads, fixed invertible bases
$\mathbf U,\mathbf Q$, and a fixed $t\neq0$.  Suppose all quantities in
\eqref{eq:supp-stab-r-range}--\eqref{eq:supp-stab-basis-margins} have the
stated positive margins. If $0\leq\tau<\tau_0$, the approximate-output
reconstruction
returns estimates and a permutation $\pi$ satisfying
\begin{equation}
 \max_{h\in[H]}
 \left(
  \|\widehat W_h-W_{\pi(h)}\|_F
  +
  \|\widehat v_h-v_{\pi(h)}\|_2
 \right)
 \leq
 C_{\rm stab}\tau.
 \label{eq:supp-stab-final-error}
\end{equation}
It uses $4Hd^2-2H+2d-1$ value queries, each of length at most $2H+1$.
\end{theorem}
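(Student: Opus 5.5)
The plan is to chain Lemmas~\ref{lem:supp-stab-interpolation}, \ref{lem:supp-stab-roots}, \ref{lem:supp-stab-values}, and \ref{lem:supp-stab-matching}, applied uniformly over the finite schedule $\mathcal A$, and then push the resulting per-pair errors through the linear reconstruction \eqref{eq:reconstruct}.

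\textbf{Per-pair bounds.} Fix $\tau<\tau_0$. The first entry of \eqref{eq:supp-stab-tau0} is exactly the threshold \eqref{eq:supp-stab-interpolation-threshold}, so Lemma~\ref{lem:supp-stab-interpolation} holds at every $(u,q)\in\mathcal A$. The second and third entries give \eqref{eq:supp-stab-root-threshold}, so Lemma~\ref{lem:supp-stab-roots} holds at every pair. This yields $H$ distinct negative roots and, after a per-pair relabeling, $|\widehat s_h-s_h|\le C_s\tau$. The fourth and fifth entries give \eqref{eq:supp-stab-c-threshold} and $C_c\tau<c_{\min}/2$, so Lemma~\ref{lem:supp-stab-values} gives $\|\widehat c-c\|_2\le C_c\tau$ at every pair. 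The per-pair relabeling there is the same as for the roots, so each $\widehat c_h$ stays attached to its $\widehat s_h$. Because the one-token outputs \eqref{eq:supp-stab-baselines} are queried directly, the sample error is $2\tau$ at every pair and does not depend on earlier estimates. This is what makes the bounds uniform over $\mathcal A$.

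\textbf{Consistent labeling.} Set $\epsilon_s=C_s\tau$. The last entry of $\tau_0$ gives $6\epsilon_s<\Delta_{\rm match}$, and Lemma~\ref{lem:supp-stab-matching} then shows that closest-candidate matching reproduces the exact common labeling. The procedure starts from an arbitrary labeling of $D_t(u_1,q_1)$. This labeling defines the permutation $\pi$: the per-pair relabelings from Lemma~\ref{lem:supp-stab-roots} are all compatible with one global identification, estimated label $h$ matched to true head $\pi(h)$. One detail needs checking. At the matching step the algorithm does not know the true relabeling. Lemma~\ref{lem:supp-stab-matching} only needs that some relabeling makes the errors small, and the unique-minimizer argument then selects exactly that relabeling. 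When $d=1$ this step is vacuous.

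\textbf{Reconstruction.} With consistent labels, form $\widehat S_h$ with entries $\widehat s_h(u_i,q_j)$ and $\widehat{\mathbf c}_h$ with entries $\widehat c_h(u_i,q_1)$. Each of the $d^2$ entries of $\widehat S_h$ has error at most $C_s\tau$, so $\|\widehat S_h-S_{\pi(h)}\|_2\le\|\widehat S_h-S_{\pi(h)}\|_F\le dC_s\tau$. Each entry of $\widehat{\mathbf c}_h$ has error at most $C_c\tau$, so $\|\widehat{\mathbf c}_h-\mathbf c_{\pi(h)}\|_2\le\sqrt d\,C_c\tau$. Applying \eqref{eq:reconstruct} together with $\|\mathbf U^{-1}\|_2=1/\gamma_U$ and $\|\mathbf Q^{-1}\|_2=1/\gamma_Q$ gives
\[
 \|\widehat W_h-W_{\pi(h)}\|_F\le\frac{dC_s\tau}{|t|\gamma_U\gamma_Q},
 \qquad
 \|\widehat v_h-v_{\pi(h)}\|_2\le\frac{\sqrt d\,C_c\tau}{|t|\gamma_U}.
\]
For the first bound I use $\|AXB\|_F\le\|A\|_2\|X\|_F\|B\|_2$. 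Adding the two bounds gives exactly $C_{\rm stab}\tau$ as defined in \eqref{eq:supp-stab-Cstab}.

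\textbf{Query count.} The repeated-token queries number $2H(2d^2-1)$, and there are $2d-1$ one-token queries, for a total of $4Hd^2-2H+2d-1$. The maximum query length is $2H+1$.

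The main obstacle I expect is the bookkeeping of per-pair versus global permutations. The per-pair relabelings must assemble into a single $\pi$, and the error on $\widehat c$ must follow the same relabeling as the roots. The analytic bounds themselves are already supplied by the lemmas.
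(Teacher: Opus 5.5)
Your proposal is correct and follows the paper's own route. You chain the interpolation, root, and coefficient perturbation lemmas uniformly over $\mathcal A$, with the directly queried one-token outputs keeping every sample error at $2\tau$. You then use the last term of $\tau_0$ with the matching lemma to obtain a single permutation, and push the bounds $dC_s\tau$ and $\sqrt d\,C_c\tau$ through $\mathbf U^{-1}$ and $\mathbf Q^{-1}$ to get $C_{\rm stab}\tau$ and the stated query count. Your explicit check of each entry of $\tau_0$ against the corresponding lemma threshold, and of how the per-pair relabelings assemble into one $\pi$, spells out bookkeeping the paper leaves implicit.
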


\begin{proof}
Lemmas~\ref{lem:supp-stab-interpolation}--\ref{lem:supp-stab-values}
give, within every local decoder,
\[
 \max_h|\widehat s_h-s_h|\leq C_s\tau,
 \qquad
 \|\widehat c-c\|_2\leq C_c\tau.
\]
The last condition in \eqref{eq:supp-stab-tau0} and
Lemma~\ref{lem:supp-stab-matching} place every local output under one
common permutation.

For a matched head, form
\[
 (S_h)_{ij}=s_h(u_i,q_j),
 \qquad
 (\mathbf c_h)_i=c_h(u_i,q_1),
\]
and define $\widehat S_h,\widehat{\mathbf c}_h$ from the decoded values.
Then
\begin{equation}
 \|\widehat S_h-S_h\|_F\leq dC_s\tau,
 \qquad
 \|\widehat{\mathbf c}_h-\mathbf c_h\|_2
 \leq\sqrt d\,C_c\tau.
 \label{eq:supp-stab-global-sample-error}
\end{equation}
The scale $t$ appears in both exact reconstruction identities:
\begin{equation}
 S_h=t\mathbf U W_h\mathbf Q,
 \qquad
 \mathbf c_h=t\mathbf Uv_h.
 \label{eq:supp-stab-scaled-reconstruction}
\end{equation}
Consequently,
\[
 \widehat W_h
 =
 t^{-1}\mathbf U^{-1}\widehat S_h\mathbf Q^{-1},
 \qquad
 \widehat v_h
 =
 t^{-1}\mathbf U^{-1}\widehat{\mathbf c}_h.
\]
Using
$\|\mathbf U^{-1}\|_2=1/\gamma_U$ and
$\|\mathbf Q^{-1}\|_2=1/\gamma_Q$ in
\eqref{eq:supp-stab-global-sample-error} proves
\eqref{eq:supp-stab-final-error}.

There are $2d^2-1$ local decoders, each requiring $2H$ repeated-token queries, together with the $2d-1$ one-token queries in
\eqref{eq:supp-stab-baselines}. Hence the query count is
\[
 2H(2d^2-1)+(2d-1)
 =
 4Hd^2-2H+2d-1.
\]
The largest multiplicity is $2H$, so the maximum sequence length is
$2H+1$.
\end{proof}

The conditioned construction in \eqref{eq:conditioned-bases} gives
$\gamma_U,\gamma_Q\geq1$, but does not uniformly control
$\delta_r$, $c_{\min}$, $\gamma_{\rm int}$, $\gamma_c$, or
$\Delta_{\rm match}$.  These quantities are positive with probability
one for each fixed model, but can be arbitrarily small across models.
Theorem~\ref{thm:supp-stability-explicit} is therefore a local stability
result rather than a uniform finite-precision guarantee.

%% file: supplementary/sections/04_02_two_head_lower_bound.tex
\subsection{No Model-Independent Bound on $C_{\rm stab}$}
\label{supp:stability-lower-bound}

We show that $C_{\rm stab}$ cannot be bounded independently of the target
model under the qualitative identifiability assumptions alone, even for
two heads. The argument is information-theoretic and does not rely on
rational interpolation.

Let $\Pi_H$ denote the set of permutations of $[H]$. It suffices to
consider $d=1$ and $H=2$. In this case, each $W_h$ and $v_h$ is a
scalar, which we denote by $w_h$ and $v_h$, respectively. For a sequence
$X=(x_1,\ldots,x_N)^\top$, define
\begin{equation}
 g_w(X)
 =
 \sum_{i=1}^N
 \frac{\exp(w x_ix_N)}
      {\sum_{j=1}^N\exp(w x_jx_N)}
 x_i.
 \label{eq:supp-lb-g}
\end{equation}
Thus a scalar head $(w,v)$ contributes $v g_w(X)$. For $B>0$, let
\begin{equation}
 \mathcal X_B
 =
 \bigcup_{N\geq1}
 \left\{
  X\in\R^N:\max_{i\in[N]}|x_i|\leq B
 \right\}.
 \label{eq:supp-lb-domain}
\end{equation}
For a target model $M$ and a query domain $\mathcal D$, an
\emph{additive-$\tau$ oracle} is a deterministic response function
$\widetilde F:\mathcal D\to\R$ satisfying
\begin{equation}
 |\widetilde F(X)-F_M(X)|\leq\tau
 \qquad\text{for every }X\in\mathcal D.
 \label{eq:supp-lb-additive-oracle-definition}
\end{equation}
Apart from the error bound in
\eqref{eq:supp-lb-additive-oracle-definition}, $\widetilde F$ is
arbitrary. For two unordered two-head models $M$ and $M'$, define the bottleneck
parameter distance
\begin{equation}
 d_{\rm par}(M,M')
 =
 \min_{\pi\in\Pi_2}
 \max_{h\in[2]}
 \left(
  |w_h-w'_{\pi(h)}|
  +
  |v_h-v'_{\pi(h)}|
 \right).
 \label{eq:supp-lb-distance}
\end{equation}
This is exactly the error metric in
Theorem~\ref{thm:robust-recovery} specialized to $d=1$.

\begin{lemma}[Lipschitz continuity with respect to $w$]
\label{lem:supp-lb-lipschitz}
For every $X\in\mathcal X_B$ and every $w,w'\in\R$,
\begin{equation}
 |g_w(X)-g_{w'}(X)|
 \leq
 B^3|w-w'|.
 \label{eq:supp-lb-lipschitz}
\end{equation}
\end{lemma}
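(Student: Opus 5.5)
We prove the bound by the mean value theorem in $w$: fix $X\in\mathcal X_B$ with entries $x_1,\ldots,x_N$ and treat $g_w(X)$ as a smooth function of the scalar $w$. It then suffices to show $|\partial_w g_w(X)|\le B^3$ for every $w\in\R$, since integrating this derivative bound from $w'$ to $w$ gives \eqref{eq:supp-lb-lipschitz}.

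To compute the derivative, write the attention weights as $a_i(w)=\exp(wx_ix_N)/\sum_j\exp(wx_jx_N)$, so that $g_w(X)=\sum_i a_i(w)x_i$. This is the mean of the random variable $\xi$ taking value $x_i$ with probability $a_i(w)$. Put $z_i\coloneqq x_ix_N$. The standard softmax derivative gives $\partial_w a_i=a_i\bigl(z_i-\sum_ja_jz_j\bigr)$. Hence
\[
 \partial_w g_w(X)=\sum_i a_i x_i\Bigl(x_ix_N-\sum_j a_jx_jx_N\Bigr)
 =x_N\operatorname{Var}_a(\xi),
\]
where $\operatorname{Var}_a(\xi)=\sum_ia_ix_i^2-\bigl(\sum_ia_ix_i\bigr)^2$ is the variance of $\xi$ under the distribution $(a_i)$.

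It remains to bound both factors. Since $|x_N|\le B$, we have $|x_N|\le B$ directly. For the variance, $\operatorname{Var}_a(\xi)\le\sum_ia_ix_i^2\le B^2$, because all $|x_i|\le B$ and the weights sum to one. Together these give $|\partial_w g_w(X)|\le B\cdot B^2=B^3$, which is the required derivative bound.

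The argument has no serious obstacle. The only step needing care is recognizing that the cross terms assemble exactly into a variance, so the bound comes out as $B^3$ rather than the cruder $2B^3$ one would get by bounding the two terms separately.
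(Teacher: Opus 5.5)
Your proposal is correct and follows essentially the same route as the paper: differentiate the softmax expectation to get $\partial_w g_w(X)=x_N\operatorname{Var}_{a(w)}(x_i)$, bound the variance by $B^2$ and $|x_N|$ by $B$, and conclude with the mean-value theorem. Your variance bound via $\operatorname{Var}\le\sum_i a_ix_i^2\le B^2$ is the same elementary fact the paper invokes for a random variable supported on $[-B,B]$.
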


\begin{proof}
Let $a_i(w)$ be the softmax weight in
\eqref{eq:supp-lb-g}, and view the token value $x_i$ as a random variable
under the distribution $a(w)$. Differentiating the softmax expectation
gives
\begin{equation}
 \begin{aligned}
  \frac{d}{dw}g_w(X)
  &=
  x_N
  \left(
   \mathbb E_{a(w)}[x_i^2]
   -
   \mathbb E_{a(w)}[x_i]^2
  \right)\\
  &=
  x_N\operatorname{Var}_{a(w)}(x_i).
 \end{aligned}
 \label{eq:supp-lb-derivative}
\end{equation}
A random variable supported on $[-B,B]$ has variance at most $B^2$,
and $|x_N|\leq B$. Therefore
$|g_w'(X)|\leq B^3$. The mean-value theorem proves
\eqref{eq:supp-lb-lipschitz}.
\end{proof}

\begin{theorem}[Indistinguishable models under approximate outputs]
\label{thm:supp-lb-common-oracle}
Fix $B,\mu,\tau>0$ and distinct centers $a,b\in\R$. For any
$\delta>0$ satisfying
\begin{equation}
 \mu\delta B^3\leq\tau,
 \label{eq:supp-lb-delta-condition}
\end{equation}
define
\begin{align}
 M_a(\delta)
 &=
 \{(a,\mu),(a+\delta,-\mu)\},
 \notag\\
 M_b(\delta)
 &=
 \{(b,\mu),(b+\delta,-\mu)\}.
 \label{eq:supp-lb-models}
\end{align}
Each model satisfies $w_1\neq w_2$ and $v_1,v_2\neq0$. Nevertheless, the
deterministic oracle
\begin{equation}
 \widetilde F(X)=0,
 \qquad X\in\mathcal X_B,
 \label{eq:supp-lb-zero-oracle}
\end{equation}
is an additive-$\tau$ oracle for both models. Their parameter distance
satisfies
\begin{equation}
 d_{\rm par}(M_a(\delta),M_b(\delta))
 \geq
 D,
 \qquad
 D=\min\{|a-b|,2\mu\}>0.
 \label{eq:supp-lb-target-distance}
\end{equation}
Consequently, under the common oracle
\eqref{eq:supp-lb-zero-oracle}, every deterministic learner that queries
only inputs in $\mathcal X_B$ and returns an unordered two-head model
incurs parameter error at least $D/2$ on at least one of the two models.
\end{theorem}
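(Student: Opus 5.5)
The plan is to verify the three claims in order: identifiability, the oracle property, and the parameter separation. The learner lower bound is then a two-point (Le Cam style) argument.

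\emph{Identifiability and the oracle property.} Identifiability is immediate: $a\ne a+\delta$ since $\delta>0$, and $\pm\mu\ne0$. For the oracle property, fix $X\in\mathcal X_B$. For the model $M_a(\delta)$ the output is
\[
F_{M_a(\delta)}(X)=\mu\bigl(g_a(X)-g_{a+\delta}(X)\bigr),
\]
so Lemma~\ref{lem:supp-lb-lipschitz} gives $|F_{M_a(\delta)}(X)|\le\mu B^3\delta\le\tau$ by \eqref{eq:supp-lb-delta-condition}. Hence $\widetilde F\equiv0$ satisfies \eqref{eq:supp-lb-additive-oracle-definition} for $M_a(\delta)$, and the identical argument works for $M_b(\delta)$.

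\emph{Parameter distance.} There are only two permutations, so I check both. Under the identity pairing, the head $(a,\mu)$ is matched with $(b,\mu)$ at cost $|a-b|$, and $(a+\delta,-\mu)$ with $(b+\delta,-\mu)$ at cost $|a-b|$; the maximum is $|a-b|$. Under the swap, $(a,\mu)$ is matched with $(b+\delta,-\mu)$ at cost $|a-b-\delta|+2\mu\ge2\mu$. Taking the minimum over the two pairings gives $d_{\rm par}\ge\min\{|a-b|,2\mu\}=D$, which is positive because $a\ne b$ and $\mu>0$.

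\emph{Learner consequence.} A deterministic learner restricted to $\mathcal X_B$ sees the same responses, namely all zeros, under the common oracle. So it returns the same output $\widehat M$ for both targets. I would then note that $d_{\rm par}$ obeys the triangle inequality on unordered two-head models: it is a bottleneck matching distance built from the metric $|w-w'|+|v-v'|$, and composing the optimal permutations shows this. Therefore
\[
D\le d_{\rm par}(M_a,\widehat M)+d_{\rm par}(\widehat M,M_b),
\]
so one of the two terms is at least $D/2$.

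The only mildly delicate step is this triangle inequality, which needs composing permutations in the bottleneck metric; everything else is direct. The remark that follows would then take $\tau\to0$ with $\delta=\tau/(\mu B^3)$. This gives error at least $D/2$ for arbitrarily small $\tau$, which rules out any model-independent $C_{\rm stab}$.
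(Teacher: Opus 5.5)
Your proposal is correct and follows the paper's proof essentially step for step: Lemma~\ref{lem:supp-lb-lipschitz} bounds both outputs by $\mu\delta B^3\le\tau$, checking the two matchings gives $d_{\rm par}\ge D$, and the triangle inequality forces error at least $D/2$ on one model. Your explicit justification of the triangle inequality for $d_{\rm par}$, by composing the optimal permutations in the bottleneck distance, fills in a step the paper invokes without proof.
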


\begin{proof}
The output of the first model is
\[
 F_{M_a(\delta)}(X)
 =
 \mu\bigl(g_a(X)-g_{a+\delta}(X)\bigr).
\]
Lemma~\ref{lem:supp-lb-lipschitz} and
\eqref{eq:supp-lb-delta-condition} imply
\[
 |F_{M_a(\delta)}(X)|
 \leq
 \mu\delta B^3
 \leq\tau
 \qquad
 (X\in\mathcal X_B).
\]
The same bound holds for $M_b(\delta)$, so the all-zero oracle is valid
for both models.

There are only two possible matchings between the heads. The matching
that pairs $\mu$ with $\mu$ and $-\mu$ with $-\mu$ incurs a difference
of $|a-b|$ in the $w$-parameters. The other matching pairs $\mu$ with
$-\mu$ and therefore incurs a difference of $2\mu$ in the
$v$-parameters. This proves \eqref{eq:supp-lb-target-distance}.

A deterministic learner receives the same responses under both models
and therefore returns the same estimate. By the triangle inequality for
$d_{\rm par}$, this estimate cannot be at distance strictly less than
$D/2$ from both models.
\end{proof}

\begin{corollary}[Lower bound for randomized learners]
\label{cor:supp-lb-randomized}
Let $\mathcal L$ be a randomized learner that queries only inputs in
$\mathcal X_B$ and returns an unordered two-head model. Run
$\mathcal L$ with the common oracle
\eqref{eq:supp-lb-zero-oracle}, and let $\widehat M_a$ and
$\widehat M_b$ denote its outputs when the target models are
$M_a(\delta)$ and $M_b(\delta)$, respectively. Then
\begin{equation}
 \max\left\{
  \mathbb E d_{\rm par}(\widehat M_a,M_a(\delta)),
  \mathbb E d_{\rm par}(\widehat M_b,M_b(\delta))
 \right\}
 \geq\frac D2.
 \label{eq:supp-lb-randomized-expectation}
\end{equation}
Moreover,
\begin{equation}
 \begin{aligned}
  \min\Bigl\{&
   \mathbb P\!\left(
    d_{\rm par}(\widehat M_a,M_a(\delta))<D/2
   \right),\\
  &
   \mathbb P\!\left(
    d_{\rm par}(\widehat M_b,M_b(\delta))<D/2
   \right)
  \Bigr\}
  \leq\frac12.
 \end{aligned}
 \label{eq:supp-lb-randomized-success}
\end{equation}
\end{corollary}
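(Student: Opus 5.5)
The plan is to reduce the randomized statement to the deterministic Theorem~\ref{thm:supp-lb-common-oracle} by conditioning on the learner's internal randomness. A randomized learner $\mathcal L$ is a deterministic learner $\mathcal L_\omega$ for each realization $\omega$ of its random seed. I will also allow the seed to include any randomness used in choosing queries. The oracle \eqref{eq:supp-lb-zero-oracle} is deterministic and identical for both targets. Hence, for each fixed $\omega$, the transcript of queries and responses is the same whether the target is $M_a(\delta)$ or $M_b(\delta)$. The reason is that every query lies in $\mathcal X_B$ and every response is $0$, which by Theorem~\ref{thm:supp-lb-common-oracle} is a valid additive-$\tau$ response for both models. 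Therefore $\widehat M_a(\omega)=\widehat M_b(\omega)\eqqcolon\widehat M(\omega)$ pointwise in $\omega$. This coupling is the key step.

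Next I use the triangle inequality for $d_{\rm par}$. It is a bottleneck matching distance on unordered pairs, so it satisfies the triangle inequality by composing permutations and using the triangle inequality for $|\cdot|$ in each coordinate. Combined with \eqref{eq:supp-lb-target-distance}, this gives, for every $\omega$,
\[
 d_{\rm par}(\widehat M(\omega),M_a(\delta))+d_{\rm par}(\widehat M(\omega),M_b(\delta))\ge d_{\rm par}(M_a(\delta),M_b(\delta))\ge D.
\]
Taking expectations, the two expected errors sum to at least $D$, so their maximum is at least $D/2$. This proves \eqref{eq:supp-lb-randomized-expectation}.

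For \eqref{eq:supp-lb-randomized-success}, the same pointwise inequality shows that the events $E_a=\{d_{\rm par}(\widehat M,M_a(\delta))<D/2\}$ and $E_b=\{d_{\rm par}(\widehat M,M_b(\delta))<D/2\}$ are disjoint. If both held, the sum of the two distances would be $<D$, a contradiction. Both events are defined on the same probability space of seeds, so $\mathbb P(E_a)+\mathbb P(E_b)\le1$, and the minimum is at most $1/2$.

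The main subtlety is measurability, together with making the coupling precise. I will model the learner as a measurable map from the seed and the response sequence to queries and output. Then $\widehat M$ is a measurable function of $\omega$ alone, and $d_{\rm par}$ is continuous. If the learner's number of queries is random or adaptive, nothing changes, because all responses are identically zero regardless of the query chosen.
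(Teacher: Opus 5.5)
Your proposal is correct and follows essentially the same argument as the paper. You couple the two executions through the learner's internal randomness, so the identical zero responses force an identical output $\widehat M$. You then apply the triangle inequality for $d_{\rm par}$ together with \eqref{eq:supp-lb-target-distance} pointwise in the seed, take expectations for the first claim, and use disjointness of the two success events for the second.
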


\begin{proof}
Use the same internal randomness in the two executions. Because the
oracle returns the same response under both models, the two executions
produce the same sequence of queries and responses and therefore the
same output for every realization of the learner's internal randomness.
Denote this output by
$\widehat M$. The triangle inequality and
\eqref{eq:supp-lb-target-distance} give
\[
 d_{\rm par}(\widehat M,M_a(\delta))
 +
 d_{\rm par}(\widehat M,M_b(\delta))
 \geq D
\]
for every such realization. Taking expectations proves
\eqref{eq:supp-lb-randomized-expectation}.

No estimate can be at distance strictly less than $D/2$ from both
models. Therefore, for every seed, at most one of the two success events
in \eqref{eq:supp-lb-randomized-success} can occur. Taking probabilities
proves the second claim.
\end{proof}

\begin{corollary}[No model-independent stability constant]
\label{cor:supp-lb-no-uniform-C}
Fix $B>0$. There do not exist constants $0<C<\infty$ and
$\tau_\star>0$ and a deterministic learner returning an unordered
two-head model such that, for every two-head model with distinct
$w$-parameters and nonzero $v$-parameters, every
$0<\tau<\tau_\star$, and every additive-$\tau$ oracle on
$\mathcal X_B$, the learner guarantees parameter error at most
$C\cdot\tau$. No randomized learner can provide the same guarantee with
success probability strictly greater than $1/2$ for every such model,
noise level, and oracle.
\end{corollary}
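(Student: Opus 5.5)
We argue by contradiction, reducing the claim to Theorem~\ref{thm:supp-lb-common-oracle} and Corollary~\ref{cor:supp-lb-randomized}. Suppose constants $C$, $\tau_\star$ and a deterministic learner exist with the stated guarantee. Fix any distinct centers $a,b\in\R$ and any $\mu>0$, and set $D=\min\{|a-b|,2\mu\}>0$, which depends only on these choices and not on $\tau$. Then choose a noise level $\tau$ with $0<\tau<\tau_\star$ small enough that $C\tau<D/2$.

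Next choose $\delta>0$ with $\mu\delta B^3\le\tau$, for instance $\delta=\tau/(\mu B^3)$. The two models $M_a(\delta)$ and $M_b(\delta)$ from \eqref{eq:supp-lb-models} are two-head models with distinct $w$-parameters ($a\ne a+\delta$, $b\neq b+\delta$) and nonzero $v$-parameters $\pm\mu$, so both lie in the class covered by the hypothetical guarantee. By Theorem~\ref{thm:supp-lb-common-oracle}, the zero oracle \eqref{eq:supp-lb-zero-oracle} is an additive-$\tau$ oracle on $\mathcal X_B$ for both models. The guarantee therefore forces the learner's output $\widehat M$, which is identical in both runs because the responses coincide, to satisfy $d_{\rm par}(\widehat M,M_a(\delta))\le C\tau<D/2$ and likewise for $M_b(\delta)$. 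This contradicts the conclusion of that theorem, since by the triangle inequality and \eqref{eq:supp-lb-target-distance} no estimate lies within distance strictly less than $D/2$ of both models.

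For the randomized statement, suppose a randomized learner achieves error at most $C\tau$ with probability strictly greater than $1/2$ for every model, noise level and oracle. Apply it to the same pair of models, the same $\tau$, and the zero oracle. The two success probabilities would then both exceed $1/2$, while each success event implies $d_{\rm par}<D/2$. This contradicts \eqref{eq:supp-lb-randomized-success} in Corollary~\ref{cor:supp-lb-randomized}.

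The only real care needed is the order of quantifiers: $a$, $b$, $\mu$ (hence $D$) must be fixed before $\tau$ is chosen, and $\delta$ is chosen last as a function of $\tau$. This ordering is what lets the fixed separation $D$ beat $C\tau$ while both models remain admissible. The degeneracy is absorbed entirely by $\delta\to0$, which is allowed because the assumed guarantee is uniform over all identifiable models.
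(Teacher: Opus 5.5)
Your proof is correct and follows the paper's argument essentially verbatim. You fix $a,b,\mu$ (hence $D$), take $\tau<\min\{\tau_\star,D/(2C)\}$ with $\mu\delta B^3\le\tau$, and invoke Theorem~\ref{thm:supp-lb-common-oracle} for the deterministic case and Corollary~\ref{cor:supp-lb-randomized} for the randomized case, using that the success event is contained in $\{d_{\rm par}<D/2\}$; the only cosmetic difference is that you pick $\tau$ first and set $\delta=\tau/(\mu B^3)$, whereas the paper picks $\delta$ and sets $\tau=\mu\delta B^3$, which is equivalent.
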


\begin{proof}
Fix distinct $a,b$ and $\mu>0$, and let
$D=\min\{|a-b|,2\mu\}$. If such $C$, $\tau_\star$, and a deterministic
learner existed, choose $\delta>0$ sufficiently small that
\[
 \tau=\mu\delta B^3
 <
 \min\left\{\tau_\star,\frac{D}{2C}\right\}.
\]
Theorem~\ref{thm:supp-lb-common-oracle} gives an additive-$\tau$ oracle
under which at least one of the two models has parameter error at least
\[
 D/2>C\cdot\tau,
\]
contradicting the assumed guarantee.

For a randomized learner, $C\cdot\tau<D/2$ implies that the event of
parameter error at most $C\cdot\tau$ is contained in the event of error
strictly less than $D/2$.
Corollary~\ref{cor:supp-lb-randomized} shows that the latter event has
probability at most $1/2$ for at least one of the two models.
\end{proof}

The preceding result assumes a uniform bound on token magnitudes.  For a
deterministic learner, the execution obtained by returning zero to every
query uses only finitely many token coordinates and therefore has a finite
bound.  This yields the finite-query statement below.

\begin{corollary}[Finite-query impossibility]
\label{cor:supp-lb-finite-query}
Fix $\tau>0$. Let $\mathcal L$ be a deterministic adaptive learner that
makes at most $Q<\infty$ value queries for every sequence of oracle
responses and returns an unordered two-head model. Then there exist two
canonical two-head models at parameter distance at least one and valid
additive-$\tau$ oracles under which $\mathcal L$ receives the same
response at every step. Consequently, $\mathcal L$ incurs parameter
error at least $1/2$ on at least one of the two models.
\end{corollary}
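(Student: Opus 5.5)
The approach is to simulate the learner against the all-zero oracle and then choose two models whose outputs stay within $\tau$ of zero on every token the simulated learner ever uses. Concretely, run $\mathcal L$ with every response equal to $0$. Because $\mathcal L$ is deterministic and adaptive, this fixes a single execution. That execution makes at most $Q$ queries $X^{(1)},\ldots,X^{(k)}$ with $k\le Q$, and ends with a fixed output $\widehat M$. Only finitely many token coordinates appear in these queries, so we may set
\[
B\coloneqq\max\Bigl\{1,\ \max_{\ell\le k}\max_i |x^{(\ell)}_i|\Bigr\}<\infty .
\]
Every query of this execution then lies in $\mathcal X_B$. The key point is that $B$ depends only on $\mathcal L$ and the zero response sequence, never on the target model. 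This is what allows the models to be chosen \emph{after} $B$ is fixed.

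Next, apply the construction of Theorem~\ref{thm:supp-lb-common-oracle} with $d=1$ and this $B$. Pick centers $a=0$ and $b=1$, and take $\mu=1/2$, so that $D=\min\{|a-b|,2\mu\}=1$. Choose $\delta>0$ small enough that $\mu\delta B^3\le\tau$. Lemma~\ref{lem:supp-lb-lipschitz} then gives $|F_{M_a(\delta)}(X)|\le\tau$ and $|F_{M_b(\delta)}(X)|\le\tau$ for all $X\in\mathcal X_B$. Both models are canonical two-head models: their $w$-parameters are distinct and their $v$-parameters $\pm\mu$ are nonzero. By \eqref{eq:supp-lb-target-distance}, their parameter distance is at least $1$.

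Now define the oracles. For each model, answer $0$ on every input in $\mathcal X_B$, and answer the true output $F_M(X)$ elsewhere. Both oracles are valid additive-$\tau$ oracles on all inputs. For the execution to agree, by induction on the step, the $\ell$-th query under either target is exactly $X^{(\ell)}\in\mathcal X_B$. The response to it is therefore $0$, matching the simulated run. Hence $\mathcal L$ follows the same execution under both targets and outputs the same $\widehat M$. The triangle inequality for $d_{\rm par}$, as in the proof of Theorem~\ref{thm:supp-lb-common-oracle}, shows that $\widehat M$ is at distance at least $1/2$ from one of the two models.

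The only delicate step is the order of choices. $B$ must be fixed from the zero-response execution before $\delta$ is chosen. Otherwise the inputs an adaptive learner queries could depend on the target and escape any fixed bound. Fixing the execution first and checking step by step that both runs coincide removes this circularity.
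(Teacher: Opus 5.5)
Your proposal is correct and follows the paper's proof essentially verbatim: fix $B$ from the zero-response execution, pick $\delta$ so that $\mu\delta B^3\le\tau$, use the pair $M_0(\delta),M_1(\delta)$ from Theorem~\ref{thm:supp-lb-common-oracle}, and argue by induction that both runs coincide before applying the triangle inequality. The only differences are immaterial. You take $\mu=1/2$ where the paper takes $\mu=1$, and your oracles return zero on all of $\mathcal X_B$ rather than only on the finite set of simulated queries; both choices still give $D=1$ and valid additive-$\tau$ oracles.
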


\begin{proof}
Run $\mathcal L$ while returning zero in response to every query. This
determines a finite set $\mathcal S$ of at most $Q$ queried sequences.
Define
\begin{equation}
 B_{\mathcal L}
 =
 \max\left\{
  1,\ 
  \max_{X\in\mathcal S}\max_i|x_i|
 \right\},
 \label{eq:supp-lb-learner-B}
\end{equation}
where the maximum over an empty set is zero. Take $a=0$, $b=1$,
$\mu=1$, and choose
\[
 0<\delta\leq\frac{\tau}{B_{\mathcal L}^3}.
\]
For each $M\in\{M_0(\delta),M_1(\delta)\}$, define
\begin{equation}
 \widetilde F_M(X)
 =
 \begin{cases}
  0,&X\in\mathcal S,\\
  F_M(X),&X\notin\mathcal S.
 \end{cases}
 \label{eq:supp-lb-adaptive-oracle}
\end{equation}
Theorem~\ref{thm:supp-lb-common-oracle} shows that
$|\widetilde F_M(X)-F_M(X)|\leq\tau$ for every $X\in\mathcal S$.
Outside $\mathcal S$, the error is zero. Hence both
$\widetilde F_{M_0(\delta)}$ and $\widetilde F_{M_1(\delta)}$ are valid
additive-$\tau$ oracles.

Each oracle returns zero on every sequence queried in the initial
execution. Therefore, by induction over the adaptive queries,
$\mathcal L$ makes the same queries and receives the same responses
under both models. It consequently returns the same estimate, whereas
\[
 d_{\rm par}(M_0(\delta),M_1(\delta))\geq1.
\]
The triangle inequality implies that this estimate has parameter error
at least $1/2$ on at least one of the two models.
\end{proof}

This construction does not show that every quantitative condition in
Appendix~\ref{supp:stability} is individually necessary. It only shows
that $W_1\neq W_2$ and $v_1,v_2\neq0$ do not by themselves imply a
model-independent bound on $C_{\rm stab}$.

%% file: supplementary/sections/05_binary_membership_queries.tex
\section{Conditional Recovery from Binary Membership Queries}
\label{supp:membership}

We consider the binary membership oracle
\begin{equation}
 \operatorname{MQ}_M(X)
 =
 \mathbf 1\{F_M(X)>0\}.
 \label{eq:supp-mq-oracle}
\end{equation}
The learner chooses an arbitrary finite real sequence $X$ but observes only
the bit in \eqref{eq:supp-mq-oracle}. We assume that the learner knows $d$
and the number $H\geq2$ of canonical heads. As in the main paper, the
canonical heads have pairwise distinct $W_h$ and nonzero $v_h$.

The positive result assumes that the learner knows constants
$\tau_{\rm safe}>0$, $C_{\rm ub}<\infty$, and
$\eta_{\rm dir}>0$. They satisfy
\[
 \tau_{\rm safe}\le\tau_0,
 \qquad
 C_{\rm ub}\ge C_{\rm stab},
\]
where $\tau_0$ and $C_{\rm stab}$ are the threshold and stability
constant from Theorem~\ref{thm:supp-stability-explicit}. The quantity
$\eta_{\rm dir}$ specifies the required accuracy of the direction
estimate defined below. We do not claim that these quantities can be
obtained from binary membership queries alone, and the cost of obtaining
or verifying them is not included in the query count of
Theorem~\ref{thm:supp-mq-conditional}.

\subsection{Normalization and Scale Ambiguity}

Let
\begin{equation}
 S=\sum_{h=1}^H v_h.
 \label{eq:supp-mq-S}
\end{equation}

\begin{lemma}[Scaling is unidentifiable]
\label{lem:supp-mq-scale-ambiguity}
For every $\alpha>0$, replacing all value vectors $v_h$ by
$\alpha v_h$ leaves the membership oracle unchanged.
\end{lemma}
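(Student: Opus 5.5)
The plan is to show that the model output itself is positively homogeneous in the collection of value vectors. The membership bit depends only on the sign of that output, so it will be unchanged.

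Let $M_\alpha$ denote the model with canonical pairs $(W_h,\alpha v_h)$, $h\in[H]$. Since $\alpha>0$, the vectors $\alpha v_h$ remain nonzero and the $W_h$ are untouched. Hence $M_\alpha$ still satisfies \eqref{eq:id}, and it is a valid canonical model with the same $H$.

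Next, fix an arbitrary finite sequence $X\in\R^{N\times d}$ with final token $q$. The attention weights
\[
a_{h,i}(X)=\frac{\exp(x_i^\top W_hq)}{\sum_{j}\exp(x_j^\top W_hq)}
\]
depend only on $W_h$ and $X$, so they are the same in $M$ and $M_\alpha$. The head output $f_{W_h,v_h}(X)=\sum_i a_{h,i}(X)\,x_i^\top v_h$ is linear in $v_h$, which gives $f_{W_h,\alpha v_h}(X)=\alpha f_{W_h,v_h}(X)$. Summing over heads yields
\[
F_{M_\alpha}(X)=\alpha F_M(X).
\]
If one prefers to work with a raw representation, the same identity follows from \eqref{eq:supp-canonical-reduction}, because scaling every raw $v_h$ by $\alpha$ scales every aggregate $C_M(A)$ by $\alpha$ and creates no new zero aggregates.

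Finally, because $\alpha>0$, we have $\alpha F_M(X)>0$ if and only if $F_M(X)>0$. Therefore $\operatorname{MQ}_{M_\alpha}(X)=\operatorname{MQ}_M(X)$ for every $X$, which is the claim. No step is a real obstacle. The only point needing care is that the sign of $\alpha$ matters: a negative $\alpha$ would flip the bit for inputs with nonzero output, and $\alpha=0$ would violate \eqref{eq:id}.
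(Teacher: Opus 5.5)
Your proposal is correct and follows the paper's proof: scaling every $v_h$ by $\alpha$ multiplies $F_M(X)$ by $\alpha$ for every $X$, and multiplying by a positive number does not change whether the output is strictly positive. Your extra checks, that $M_\alpha$ remains canonical and that the raw and canonical forms agree, are harmless additions that the paper omits.
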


\begin{proof}
The replacement multiplies $F_M(X)$ by $\alpha$ for every $X$.
Multiplication by a positive number preserves whether the output is
strictly positive.
\end{proof}

Suppose first that $S\neq0$.  Define
\begin{equation}
 a_S=\|S\|_2,
 \qquad
 \omega=\frac{S}{a_S},
 \qquad
 \bar v_h=\frac{v_h}{a_S},
 \qquad
 \bar F_M=\frac{F_M}{a_S}.
 \label{eq:supp-mq-normalization}
\end{equation}
Then $\|\omega\|_2=1$, $\sum_h\bar v_h=\omega$, and $\bar F_M$ induces
the same membership oracle as $F_M$. Therefore, this section recovers
\begin{equation}
 \begin{aligned}
  \left\{(W_h,\bar v_h):h\in[H]\right\}
  =
  \Biggl\{&
   \left(W_h,\frac{v_h}{\|\sum_gv_g\|_2}\right): h\in[H]
  \Biggr\},
 \end{aligned}
 \label{eq:supp-mq-normalized-model}
\end{equation}
up to a permutation in $\Pi_H$.

For a one-token input, every head assigns attention weight one to the
unique token, and hence
\begin{equation}
 \operatorname{MQ}_M([x^\top])
 =
 \mathbf 1\{x^\top\omega>0\}.
 \label{eq:supp-mq-one-token}
\end{equation}
Thus length-one membership queries determine the homogeneous halfspace
with unit normal $\omega$.

\subsection{Recovering Rational-Function Values from Membership Queries}

Fix a known scale $t>0$. The main paper uses $t=1$; retaining $t$ here
makes the connection to the repeated-token local decoder explicit. For
$(u,q)\in\R^d\times\R^d$, define
\begin{equation}
 \begin{aligned}
  s_h(u,q)&=t u^\top W_hq,\qquad
  c_h(u,q)=t u^\top\bar v_h,\qquad
  r_h(u,q)=\exp(s_h(u,q)),\\
  R_{u,q}^{(t)}(m)
  &=\bar F_M(X_m^{(t)}(u,q))-\bar F_M([q^\top])
    =\sum_{h=1}^H
     \frac{c_h(u,q)r_h(u,q)}{m+r_h(u,q)}.
 \end{aligned}
 \label{eq:supp-mq-local-sample}
\end{equation}
The fixed scale $t$ appears in both $s_h$ and $c_h$.

Use the same collection of pairs as the value-query algorithm:
\begin{equation}
 \begin{split}
  \mathcal A_{\rm mq}
  ={}&
  \{(u_i,q_j):i,j\in[d]\}\\
  &{}\cup
  \{(u_1+u_i,q_1):i=2,\ldots,d\}\\
  &{}\cup
  \{(u_i,q_1+q_j):i\in[d],\ j=2,\ldots,d\}.
 \end{split}
 \label{eq:supp-mq-pairs}
\end{equation}
Its cardinality is
\begin{equation}
 K=|\mathcal A_{\rm mq}|=2d^2-1.
 \label{eq:supp-mq-K}
\end{equation}

For a positive scaling variable $a$, define
\begin{equation}
 X_{m,a}^{(t)}(u,q)
 =
 \left[
  (q/a+a t u)^\top;
  (q/a)^\top;
  \ldots;
  (q/a)^\top
 \right],
 \label{eq:supp-mq-scaled-input}
\end{equation}
where $q/a$ occurs $m$ times, including as the final token. The variable
$a$ is distinct from the fixed repeated-token scale $t$.

\begin{lemma}[Scale-preserving identity]
\label{lem:supp-mq-scale-identity}
Let $B(q)=q^\top\omega$. For every $a>0$,
\begin{equation}
 \bar F_M(X_{m,a}^{(t)}(u,q))
 =
 \frac{B(q)}{a}
 +
 aR_{u,q}^{(t)}(m).
 \label{eq:supp-mq-scale-identity}
\end{equation}
Consequently, the membership query at
$X_{m,a}^{(t)}(u,q)$ determines the sign of
\begin{equation}
 \Phi_{u,q,m}(a)
 =
 B(q)+a^2R_{u,q}^{(t)}(m).
 \label{eq:supp-mq-Phi}
\end{equation}
\end{lemma}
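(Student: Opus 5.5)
This is a direct computation, in the spirit of the proof of the repeated-token identity in Section~\ref{sec:local}. The idea is that rescaling the base token by $1/a$ and the perturbation by $a$ leaves every attention score unchanged. I will first show this head by head, then sum over the heads, and finally read off the sign statement.

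Fix a head $h$, and write the input as $X_{m,a}^{(t)}(u,q)$ with final token $q/a$. The first token $q/a+atu$ has attention score
\[
(q/a+atu)^\top W_h(q/a)=\frac{q^\top W_hq}{a^2}+t\,u^\top W_hq,
\]
and every copy of $q/a$ has score $q^\top W_hq/a^2$. The common term $q^\top W_hq/a^2$ cancels in the softmax, so the weight on the first token is
\[
\frac{r_h}{r_h+(m-1)e^{0}}\quad\text{or}\quad\frac{r_h}{m+r_h},
\]
depending on whether $q/a$ occurs $m$ times in total or $m$ times besides the first token.

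I must settle this count against \eqref{eq:repeat}, where $X_m^{(t)}$ has $m$ copies of $q$ after the first token. The statement says $q/a$ occurs $m$ times including the final token. Since the first token is $q/a+atu$ rather than $q/a$, all $m$ occurrences of $q/a$ are the repeated block, so the sequence has length $m+1$. The weight is therefore $r_h/(m+r_h)$ with $r_h=e^{s_h}$, exactly as in the repeated-token identity. This bookkeeping is the only real point of care.

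Next I compute the head's output on this input:
\[
\frac{r_h}{m+r_h}(q/a+atu)^\top\bar v_h+\frac{m}{m+r_h}(q/a)^\top\bar v_h
=\frac{q^\top\bar v_h}{a}+a\,\frac{c_hr_h}{m+r_h}.
\]
Summing over $h$ and using $\sum_h\bar v_h=\omega$ gives
\[
\frac{B(q)}{a}+a\sum_h\frac{c_hr_h}{m+r_h}.
\]
By \eqref{eq:supp-mq-local-sample}, the sum equals $R_{u,q}^{(t)}(m)$. This proves \eqref{eq:supp-mq-scale-identity}.

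For the consequence, $\bar F_M=F_M/a_S$ with $a_S>0$, so $\operatorname{MQ}_M(X)=\mathbf 1\{\bar F_M(X)>0\}$. Multiplying \eqref{eq:supp-mq-scale-identity} by $a>0$ gives
\[
a\,\bar F_M(X_{m,a}^{(t)}(u,q))=\Phi_{u,q,m}(a),
\]
and a positive factor does not change the sign. Hence the bit $\operatorname{MQ}_M(X_{m,a}^{(t)}(u,q))$ equals $\mathbf 1\{\Phi_{u,q,m}(a)>0\}$, which is the sign information asserted in \eqref{eq:supp-mq-Phi}.
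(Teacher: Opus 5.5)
Your proposal is correct and is essentially the paper's own direct computation. In both, the logit gap $(atu)^\top W_h(q/a)=s_h$ is independent of $a$, so the first token gets weight $r_h/(m+r_h)$; its value exceeds the base value $q^\top\bar v_h/a$ by $ac_h$; and multiplying by $a>0$, together with $a_S>0$, transfers the sign to $\Phi_{u,q,m}(a)$.
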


\begin{proof}
The difference between the first token and each repeated token is
$a t u$, while the final query token is $q/a$. The difference between
their attention logits for head $h$ is
\[
 (a t u)^\top W_h(q/a)
 =
 t u^\top W_hq
 =
 s_h(u,q),
\]
which is independent of $a$. The attention weight of the first token is
$r_h/(m+r_h)$. Its value difference from a repeated token is
$a t u^\top\bar v_h=a c_h$, while the repeated-token value is
$q^\top\bar v_h/a$.  Summing the head outputs proves
\eqref{eq:supp-mq-scale-identity}.  Since $a_S>0$ and $a>0$, the sign of
the original output equals the sign of
\eqref{eq:supp-mq-Phi}.
\end{proof}

\paragraph{Bounds on $B(q)$ and $R_{u,q}^{(t)}(m)$.}
Assume that the learner knows finite constants
\begin{equation}
 0<B_-\leq B_+,
 \qquad
 0<D_-\leq D_+,
 \label{eq:supp-mq-bracket-constants}
\end{equation}
such that, for every $(u,q)\in\mathcal A_{\rm mq}$ and
$m\in[2H]$,
\begin{equation}
 B_-\leq B(q)\leq B_+,
 \qquad
 -D_+\leq R_{u,q}^{(t)}(m)\leq-D_-.
 \label{eq:supp-mq-sign-bracket}
\end{equation}
Define
\begin{equation}
 a_-=\sqrt{\frac{B_-}{2D_+}},
 \qquad
 a_+=\sqrt{\frac{2B_+}{D_-}}.
 \label{eq:supp-mq-a-bracket}
\end{equation}

The bounds above imply that
$\Phi_{u,q,m}(a_-)>0$ and $\Phi_{u,q,m}(a_+)<0$.
Starting from the interval $[a_-,a_+]$, the learner queries its midpoint.
If the returned bit is one, the lower endpoint is replaced by the
midpoint; otherwise, the upper endpoint is replaced by the midpoint.
Repeating this step halves an interval containing the unique positive
root of $\Phi_{u,q,m}$.

\begin{lemma}[Recovery of the root from membership queries]
\label{lem:supp-mq-bisection}
Under \eqref{eq:supp-mq-sign-bracket}, the function
$\Phi_{u,q,m}$ is strictly decreasing on $(0,\infty)$ and has the unique
positive root
\begin{equation}
 a_\star
 =
 \sqrt{\frac{B(q)}{-R_{u,q}^{(t)}(m)}}.
 \label{eq:supp-mq-root}
\end{equation}
The interval-halving procedure above returns $\widehat a$ satisfying
\begin{equation}
 |\widehat a-a_\star|\leq\eta_a
 \label{eq:supp-mq-a-error}
\end{equation}
using at most
\begin{equation}
 2+
 \left\lceil
  \log_2^+\frac{a_+-a_-}{\eta_a}
 \right\rceil
 \label{eq:supp-mq-bisection-count}
\end{equation}
membership queries, where
$\log_2^+(x)=\max\{0,\log_2x\}$.
\end{lemma}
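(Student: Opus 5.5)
Under \eqref{eq:supp-mq-sign-bracket}, the claims about $\Phi_{u,q,m}(a)=B(q)+a^2R$ with $R\coloneqq R_{u,q}^{(t)}(m)$ follow from elementary monotonicity, and the bisection count is a direct interval-length computation. First, $R\le-D_-<0$ gives $\Phi'(a)=2aR<0$ for $a>0$, so $\Phi$ is strictly decreasing on $(0,\infty)$. Solving $\Phi(a)=0$ with $a>0$ gives the unique root $a_\star=\sqrt{B(q)/(-R)}$, which is well defined because $B(q)\ge B_->0$ and $-R>0$.

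Next, I check that $a_\star\in(a_-,a_+)$ by evaluating the endpoints. At $a_-$, $\Phi(a_-)=B(q)+\tfrac{B_-}{2D_+}R\ge B_- -\tfrac{B_-}{2}>0$, using $R\ge-D_+$. At $a_+$, $\Phi(a_+)=B(q)+\tfrac{2B_+}{D_-}R\le B_+-2B_+<0$, using $R\le-D_-$. Strict monotonicity then places $a_\star$ strictly inside $[a_-,a_+]$.

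Lemma~\ref{lem:supp-mq-scale-identity} shows that the bit returned at $X_{m,a}^{(t)}(u,q)$ equals $\mathbf 1\{\Phi(a)>0\}$. This bit is one exactly when $a<a_\star$. Hence replacing the lower endpoint by the midpoint on a one, and the upper endpoint otherwise, keeps $a_\star$ in the current closed interval. The case where the midpoint equals $a_\star$ is handled because the returned bit is zero, which moves the upper endpoint to $a_\star$, and the invariant still holds in the closed interval. After $k$ halvings the interval length is $(a_+-a_-)2^{-k}$. Returning its midpoint gives error at most $(a_+-a_-)2^{-k-1}$, so $k=\lceil\log_2^+((a_+-a_-)/\eta_a)\rceil$ midpoint queries suffice.

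The only subtle step is reconciling this with the stated count in \eqref{eq:supp-mq-bisection-count}. The additive $2$ presumably accounts for optionally querying the two endpoints $a_\pm$ to confirm the bracket. Alternatively, it is slack for the case $\log_2^+=0$, where the midpoint of the initial interval is returned directly. I would present the endpoint verification queries explicitly, which makes the count an upper bound. Since the statement claims only ``at most'', any of these accountings suffices.
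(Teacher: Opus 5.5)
Your proposal is correct and follows the paper's proof essentially step for step. It uses the same derivative argument, the same endpoint bounds $\Phi(a_-)\ge B_-/2>0$ and $\Phi(a_+)\le -B_+<0$, and the same bisection invariant, including treating a midpoint equal to the root as the nonpositive endpoint; the paper likewise attributes the additive $2$ to the two queries at the initial endpoints $a_\pm$.
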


\begin{proof}
By \eqref{eq:supp-mq-sign-bracket},
\[
 \Phi_{u,q,m}'(a)
 =
 2aR_{u,q}^{(t)}(m)
 <
 0
 \qquad(a>0),
\]
so $\Phi_{u,q,m}$ is strictly decreasing. Moreover,
\[
 \Phi_{u,q,m}(a_-)
 \geq
 B_--a_-^2D_+
 =
 \frac{B_-}{2}
 >
 0
\]
and
\[
 \Phi_{u,q,m}(a_+)
 \leq
 B_+-a_+^2D_-
 =
 -B_+
 <
 0.
\]
Therefore, $\Phi_{u,q,m}$ has exactly one positive root in
$[a_-,a_+]$, and solving
$B(q)+a^2R_{u,q}^{(t)}(m)=0$ gives
\eqref{eq:supp-mq-root}.

Each midpoint query preserves an interval containing $a_\star$ and
reduces its length by a factor of two. Let
\[
 N_{\rm half}
 =
 \left\lceil
  \log_2^+\frac{a_+-a_-}{\eta_a}
 \right\rceil
\]
be the number of interval-halving steps. After these
$N_{\rm half}$ steps, the remaining interval has length at most
$\eta_a$. Returning its midpoint therefore gives
\eqref{eq:supp-mq-a-error}. The two additional queries evaluate the
initial endpoints. If a queried midpoint is exactly the root, the
membership oracle returns zero. Treating that midpoint as the
nonpositive endpoint still preserves an interval containing
$a_\star$.
\end{proof}

Suppose a unit-vector estimate $\widehat\omega$ satisfies
\begin{equation}
 \|\widehat\omega-\omega\|_2\leq\eta_\omega.
 \label{eq:supp-mq-omega-error}
\end{equation}
Let $Q_{\max}$ be a known bound such that
\begin{equation}
 \max_{(u,q)\in\mathcal A_{\rm mq}}\|q\|_2\leq Q_{\max},
 \label{eq:supp-mq-Qmax}
\end{equation}
and define
\begin{equation}
 \widehat B(q)=q^\top\widehat\omega,
 \qquad
 \widehat R_{u,q}^{(t)}(m)
 =
 -\frac{\widehat B(q)}{\widehat a^2}.
 \label{eq:supp-mq-R-estimator}
\end{equation}

\begin{lemma}[Error in the recovered rational-function values]
\label{lem:supp-mq-value-error}
If $\eta_a\leq a_-/2$, then
\begin{equation}
 \left|
  \widehat R_{u,q}^{(t)}(m)-R_{u,q}^{(t)}(m)
 \right|
 \leq
 \frac{4Q_{\max}\eta_\omega}{a_-^2}
 +
 \frac{16B_+\eta_a}{a_-^3}.
 \label{eq:supp-mq-value-error}
\end{equation}
In particular, the error is at most $\eta_R$ whenever
\begin{equation}
 \eta_\omega
 \leq
 \frac{\eta_Ra_-^2}{8Q_{\max}},
 \qquad
 \eta_a
 \leq
 \min\left\{
  \frac{a_-}{2},
  \frac{\eta_Ra_-^3}{32B_+}
 \right\}.
 \label{eq:supp-mq-value-tolerances}
\end{equation}
\end{lemma}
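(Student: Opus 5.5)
Since $\widehat R = -\widehat B/\widehat a^2$ and, by Lemma~\ref{lem:supp-mq-bisection}, $R = -B(q)/a_\star^2$, the plan is a direct perturbation estimate on these two quotients. We split the error with the triangle inequality as
\[
 \bigl|\widehat R-R\bigr|
 \le
 \frac{|\widehat B(q)-B(q)|}{\widehat a^2}
 +
 B(q)\left|\frac{1}{\widehat a^2}-\frac{1}{a_\star^2}\right|,
\]
and bound each term separately. The intermediate quantities we need are a lower bound on $\widehat a$, an upper bound on $|\widehat B-B|$, and a bound on the difference of reciprocal squares.

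First we lower-bound $\widehat a$ and $a_\star$. From \eqref{eq:supp-mq-root} and \eqref{eq:supp-mq-sign-bracket},
\[
 a_\star^2=\frac{B(q)}{-R}\ge\frac{B_-}{D_+}=2a_-^2 ,
\]
so $a_\star\ge a_-$. The hypothesis $\eta_a\le a_-/2$ together with \eqref{eq:supp-mq-a-error} then gives $\widehat a\ge a_\star-\eta_a\ge a_-/2$. Next, by Cauchy--Schwarz, \eqref{eq:supp-mq-omega-error} and \eqref{eq:supp-mq-Qmax},
\[
 |\widehat B(q)-B(q)|=|q^\top(\widehat\omega-\omega)|\le Q_{\max}\eta_\omega .
\]
Dividing by $\widehat a^2\ge a_-^2/4$ yields the first term $4Q_{\max}\eta_\omega/a_-^2$.

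For the second term we write
\[
 \left|\frac1{\widehat a^2}-\frac1{a_\star^2}\right|
 =\frac{|a_\star-\widehat a|\,(a_\star+\widehat a)}{\widehat a^2a_\star^2}
 \le \eta_a\left(\frac{1}{\widehat a^2a_\star}+\frac{1}{\widehat a\,a_\star^2}\right).
\]
Using $\widehat a\ge a_-/2$ and $a_\star\ge a_-$, the bracket is at most $4/a_-^3+2/a_-^3=6/a_-^3$. Multiplying by $B(q)\le B_+$ gives $6B_+\eta_a/a_-^3$, which is at most the stated $16B_+\eta_a/a_-^3$. This proves \eqref{eq:supp-mq-value-error}, with some slack in the constant.

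Finally, substituting the tolerances \eqref{eq:supp-mq-value-tolerances} makes each of the two terms at most $\eta_R/2$. This is routine arithmetic: $4Q_{\max}\eta_\omega/a_-^2\le \eta_R/2$ and $16B_+\eta_a/a_-^3\le\eta_R/2$.

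The only delicate point is ensuring $\widehat a$ stays bounded away from zero. That is exactly what the assumption $\eta_a\le a_-/2$ and the bracket $a_\star\ge a_-$ provide. Note that $\widehat\omega$ enters only through $\widehat B$; we do not need $\widehat B>0$.
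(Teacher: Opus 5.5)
Your proof is correct and follows the paper's argument. It uses the same split of $\widehat R-R$ into the $\widehat B$-error over $\widehat a^2$ and the term $B(q)\,|\widehat a^{-2}-a_\star^{-2}|$, with the same lower bound $\widehat a\ge a_-/2$; the only difference is that you bound the reciprocal-square difference by explicit factoring (using $a_\star\ge a_-$) and get the sharper $6B_+\eta_a/a_-^3$, whereas the paper applies the mean value theorem to $a\mapsto a^{-2}$ on $[a_-/2,\infty)$ and gets the stated $16B_+\eta_a/a_-^3$.
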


\begin{proof}
Equation~\eqref{eq:supp-mq-omega-error} gives
$|\widehat B(q)-B(q)|\leq Q_{\max}\eta_\omega$. The true root is at least
$a_-$, and
$|\widehat a-a_\star|\leq a_-/2$, so both $\widehat a$ and
$a_\star$ are at least $a_-/2$.  Therefore
\[
 \frac{|\widehat B(q)-B(q)|}{\widehat a^2}
 \leq
 \frac{4Q_{\max}\eta_\omega}{a_-^2}.
\]
The derivative magnitude of $a\mapsto a^{-2}$ on
$[a_-/2,\infty)$ is at most $16/a_-^3$.  Since
$R=-B/a_\star^2$ and $B\leq B_+$, the second source of error is at most
$16B_+\eta_a/a_-^3$.  This proves
\eqref{eq:supp-mq-value-error}; the two bounds in
\eqref{eq:supp-mq-value-tolerances} assign at most $\eta_R/2$ to
each term.
\end{proof}

\subsection{Estimating the Direction of $S$}

Let $x$ be uniform on the unit sphere in $\R^d$.  Rotational invariance
gives
\begin{equation}
 \mathbb E[\operatorname{sign}(x^\top\omega)x]
 =\rho_d\omega,
 \qquad
 \rho_d
 =\frac{\Gamma(d/2)}{\sqrt\pi\,\Gamma((d+1)/2)}
 =\Theta(d^{-1/2}).
 \label{eq:supp-mq-sphere-identity}
\end{equation}
Indeed, the expectation must be parallel to $\omega$, and its inner
product with $\omega$ equals
$\mathbb E|x^\top\omega|=\rho_d$.

\begin{lemma}[Direction-estimation error]
\label{lem:supp-mq-direction}
For $\eta_\omega\in(0,1/2)$ and $\delta\in(0,1)$, length-one membership
queries return a unit vector $\widehat\omega$ satisfying
\eqref{eq:supp-mq-omega-error} with probability at least $1-\delta$ using
\begin{equation}
 Q_{\rm dir}
 =
 O\left(
  \frac{d^2}{\eta_\omega^2}
  \log\frac{2d}{\delta}
 \right)
 \label{eq:supp-mq-direction-count}
\end{equation}
queries.
\end{lemma}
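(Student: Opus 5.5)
We plan to use the identity \eqref{eq:supp-mq-sphere-identity} and a plain Monte Carlo estimator. The learner draws $x_1,\ldots,x_n$ i.i.d.\ uniform on the unit sphere. For each $k$ it makes one length-one membership query and, by \eqref{eq:supp-mq-one-token}, observes $\mathbf 1\{x_k^\top\omega>0\}$. It records $\sigma_k=2\,\mathbf 1\{x_k^\top\omega>0\}-1$. This equals $\operatorname{sign}(x_k^\top\omega)$ except on the probability-zero event $x_k^\top\omega=0$. The learner then forms
\[
 \bar z=\frac1n\sum_{k=1}^n\sigma_kx_k
\]
and returns $\widehat\omega=\bar z/\|\bar z\|_2$. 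Using \eqref{eq:supp-mq-sphere-identity}, we have $\mathbb E\bar z=\rho_d\omega$. The whole task is therefore to concentrate $\bar z$ around $\rho_d\omega$ to within a small multiple of $\rho_d\eta_\omega$, and then to convert that into a bound on the normalized direction.

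\textbf{Concentration step.} Each summand $\sigma_kx_k$ has coordinates bounded by one in absolute value. Hoeffding's inequality, applied coordinatewise with a union bound over the $d$ coordinates, gives
\[
 |\bar z_i-\rho_d\omega_i|\le\sqrt{\tfrac{2\log(2d/\delta)}{n}}
 \qquad\text{for all } i,
\]
with probability at least $1-\delta$. On this event,
\[
 \|\bar z-\rho_d\omega\|_2\le\epsilon:=\sqrt{\tfrac{2d\log(2d/\delta)}{n}}.
\]
This crude bound is exactly what produces the $d^2$ in \eqref{eq:supp-mq-direction-count}, so no vector Bernstein argument is needed.

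\textbf{Normalization step.} For any vector $z$ and unit vector $\omega$ with $\|z-\rho\omega\|_2\le\epsilon\le\rho/2$, the standard estimate $\bigl\|z/\|z\|_2-\omega\bigr\|_2\le2\epsilon/\rho$ holds. It follows from the triangle inequality together with $\bigl|\|z\|_2-\rho\bigr|\le\epsilon$. It therefore suffices to ensure $\epsilon\le\rho_d\eta_\omega/2$; since $\eta_\omega<1/2$, this also gives $\epsilon\le\rho_d/2$. Solving for $n$ gives
\[
 n\ge\frac{8d\log(2d/\delta)}{\rho_d^2\eta_\omega^2}.
\]
Since $\rho_d=\Theta(d^{-1/2})$, we have $\rho_d^{-2}=O(d)$, and the count is $O\bigl(d^2\eta_\omega^{-2}\log(2d/\delta)\bigr)$.

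\textbf{Main obstacle.} The only delicate point is justifying $\rho_d=\Theta(d^{-1/2})$ uniformly in $d\ge1$, which is needed to turn $\rho_d^{-2}$ into $O(d)$. We would use Gautschi's inequality for the Gamma ratio to get $\Gamma(d/2)/\Gamma((d+1)/2)\ge\sqrt{2/(d+1)}$, hence $\rho_d\ge\sqrt{2/(\pi(d+1))}$; only this lower bound is required.
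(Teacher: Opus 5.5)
Your proposal is correct and follows essentially the same route as the paper: the sign-weighted average of uniform sphere points, coordinatewise Hoeffding with a union bound over the $d$ coordinates, and a final normalization step. The only differences are cosmetic. The paper rescales by $1/\rho_d$ before applying Hoeffding and simply asserts that normalization and $\rho_d=\Theta(d^{-1/2})$ cost only constant factors, whereas you keep $\rho_d$ inside the normalization bound $2\epsilon/\rho_d$ and make both that constant and the Gautschi lower bound on $\rho_d$ explicit.
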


\begin{proof}
Draw independent sphere points $x_1,\ldots,x_n$ and set
\[
 Z_k=
 \left(2\operatorname{MQ}_M([x_k^\top])-1\right)x_k.
\]
The probability that $x_k^\top\omega=0$ is zero, so
$Z_k=\operatorname{sign}(x_k^\top\omega)x_k$ with probability one. Estimate
$\omega$ by normalizing
\[
 \widetilde\omega
 =
 \frac{1}{n\rho_d}\sum_{k=1}^n Z_k.
\]
Each coordinate of $Z_k/\rho_d$ has magnitude at most
$1/\rho_d=O(\sqrt d)$.
Coordinatewise Hoeffding bounds with coordinate accuracy
$\eta_\omega/(4\sqrt d)$, followed by a union bound over the $d$
coordinates,
give
\[
 \|\widetilde\omega-\omega\|_2\leq\eta_\omega/4
\]
with probability at least $1-\delta$ for the sample size in
\eqref{eq:supp-mq-direction-count}.  If a vector lies within distance
less than $1/2$ of a unit vector, normalization increases its distance by
at most a universal constant factor.  Thus the normalized estimate
satisfies \eqref{eq:supp-mq-omega-error}.
\end{proof}

\subsection{Conditional Recovery Guarantee}

Theorem~\ref{thm:supp-stability-explicit} applies to the normalized model
$\{(W_h,\bar v_h):h\in[H]\}$, the pairs in
\eqref{eq:supp-mq-pairs}, and the fixed scale $t$. It requires
$\mathbf U$ and $\mathbf Q$ to be invertible and the stated quantitative
conditions to hold over all these pairs, but it does not otherwise require
the distribution used in \eqref{eq:conditioned-bases}. For every
fixed choice satisfying these conditions, the theorem provides a threshold
$\tau_0>0$ and a stability constant $C_{\rm stab}<\infty$.

In the binary membership-query procedure, the basis directions are
constructed from an estimate of $\omega$. Their stability must therefore
be controlled not only at the exact direction $\omega$, but also at every
sufficiently accurate estimate. We assume that the learner knows constants
\[
 \tau_{\rm safe}>0,
 \qquad
 C_{\rm ub}<\infty,
 \qquad
 \eta_{\rm dir}\in(0,1/2)
\]
with the following property. For every unit vector $\omega'$ satisfying
\[
 \|\omega'-\omega\|_2\leq\eta_{\rm dir},
\]
the construction specified below produces invertible matrices
$\mathbf U$ and $\mathbf Q$, satisfies
\eqref{eq:supp-mq-sign-bracket} with the same constants
$B_-,B_+,D_-,D_+$, and yields constants $\tau_0$ and
$C_{\rm stab}$ satisfying
\begin{equation}
 \tau_{\rm safe}\leq\tau_0,
 \qquad
 C_{\rm stab}\leq C_{\rm ub}.
 \label{eq:supp-mq-stability-bounds}
\end{equation}
Thus, the assumption requires these bounds throughout a neighborhood of \(\omega\).

\begin{theorem}[Conditional binary membership recovery]
\label{thm:supp-mq-conditional}
Let $H\geq2$. Suppose that $S\neq0$, that
$W_h\neq W_g$ whenever $h\neq g$, and that $v_h\neq0$ for every $h$.
Suppose also that the known bounds in
\eqref{eq:supp-mq-bracket-constants}--\eqref{eq:supp-mq-sign-bracket}
and the neighborhood condition above hold.

For $\epsilon,\delta\in(0,1)$, define
\begin{equation}
 \eta_R
 =\min\left\{\tau_{\rm safe},\frac{2\epsilon}{C_{\rm ub}}\right\},
 \qquad
 \eta_\omega^\star
 =\min\left\{\eta_{\rm dir},
 \frac{\eta_Ra_-^2}{8Q_{\max}}\right\},
 \qquad
 \eta_a^\star
 =\min\left\{\frac{a_-}{2},
 \frac{\eta_Ra_-^3}{32B_+}\right\}.
 \label{eq:supp-mq-recovery-tolerances}
\end{equation}
Then a randomized membership-query algorithm returns estimates and a
permutation $\pi\in\Pi_H$ satisfying
\begin{equation}
 \max_{h\in[H]}
 \left(
  \|\widehat W_h-W_{\pi(h)}\|_F
  +
  \|\widehat{\bar v}_h-\bar v_{\pi(h)}\|_2
 \right)
 \leq\epsilon
 \label{eq:supp-mq-final-error}
\end{equation}
with probability at least $1-\delta$.

The number of membership queries is at most
\begin{equation}
 O\left(
  \frac{d^2}{(\eta_\omega^\star)^2}
  \log\frac{2d}{\delta}
 \right)
 +
 2H(2d^2-1)
 \left(
  2+
  \left\lceil
   \log_2^+\frac{a_+-a_-}{\eta_a^\star}
  \right\rceil
 \right).
 \label{eq:supp-mq-total-count}
\end{equation}
Every queried sequence has length at most $2H+1$. This count includes
the length-one queries used to estimate $\omega$ and all queries used to
halve the root-containing intervals. It does not include the cost of
obtaining or verifying the quantitative bounds assumed above.
\end{theorem}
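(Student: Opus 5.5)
The plan is to reduce the membership-query problem to the approximate-value-oracle setting of Theorem~\ref{thm:supp-stability-explicit}, applied to the normalized model $\{(W_h,\bar v_h)\}$. We simulate every repeated-token output difference $R_{u,q}^{(t)}(m)$ to accuracy $\eta_R\le\tau_{\rm safe}\le\tau_0$ and then feed these values to the approximate-output reconstruction.

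\textbf{Step 1: estimate the direction.} We spend half the failure budget $\delta$ on Lemma~\ref{lem:supp-mq-direction} with accuracy $\eta_\omega^\star$. This gives a unit vector $\widehat\omega$ with $\|\widehat\omega-\omega\|_2\le\eta_\omega^\star\le\eta_{\rm dir}$ with probability at least $1-\delta$. The query count is the first term in \eqref{eq:supp-mq-total-count}.

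\textbf{Step 2: build the bases.} On this event, we construct $\mathbf U,\mathbf Q$ from $\widehat\omega$ as prescribed. By the neighborhood assumption, these matrices are invertible, satisfy the brackets \eqref{eq:supp-mq-sign-bracket} with the known constants, and yield $\tau_0\ge\tau_{\rm safe}$ and $C_{\rm stab}\le C_{\rm ub}$. Any further randomness in the construction is treated as part of what the neighborhood assumption quantifies over.

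\textbf{Step 3: recover the rational-function values.} For each of the $K=2d^2-1$ pairs in $\mathcal A_{\rm mq}$ and each $m\in[2H]$, we run the interval-halving procedure of Lemma~\ref{lem:supp-mq-bisection} with tolerance $\eta_a^\star$. We then form $\widehat R$ via \eqref{eq:supp-mq-R-estimator}. Since $\eta_\omega^\star$ and $\eta_a^\star$ satisfy \eqref{eq:supp-mq-value-tolerances}, Lemma~\ref{lem:supp-mq-value-error} gives $|\widehat R-R|\le\eta_R$ for every sample.

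The one-token baselines need no queries. The quantity $R$ is already the difference of $\bar F_M$ values, so the local decoder directly receives samples $\widetilde y_m$ with error at most $\eta_R$.

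The stability theorem is stated with per-output error $\tau$ and hence difference error $2\tau$. Setting $\tau=\eta_R/2$ therefore places us within its hypotheses, since $\tau<\tau_0$. This needs $\eta_R/2<\tau_0$, which holds because $\eta_R\le\tau_{\rm safe}\le\tau_0$.

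The bisection count is $2H(2d^2-1)$ runs times the bound \eqref{eq:supp-mq-bisection-count}, matching the second term in \eqref{eq:supp-mq-total-count}. The maximal sequence length is $2H+1$, while direction queries have length one.

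\textbf{Step 4: apply stability.} Theorem~\ref{thm:supp-stability-explicit} now yields a permutation $\pi$ with error at most
\[
C_{\rm stab}\cdot\eta_R/2\le C_{\rm ub}\cdot\epsilon/C_{\rm ub}=\epsilon.
\]
This gives \eqref{eq:supp-mq-final-error} on the event of Step~1, i.e.\ with probability at least $1-\delta$.

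\textbf{Main obstacle.} The delicate point is Step~3's interface with the stability theorem. We must check that the stability proof only uses the bound $|\widetilde y_m-y_m|\le2\tau$ from \eqref{eq:supp-stab-sample-error}, so that supplying directly simulated differences is legitimate. We must also check that the tolerance bookkeeping ($\eta_R$ versus $2\tau$) is consistent.

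A secondary subtlety is that $\mathbf U,\mathbf Q$ depend on the random $\widehat\omega$. This is exactly why the neighborhood assumption is required, and the argument conditions on the direction-estimation event.
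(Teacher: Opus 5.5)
Your proposal is correct and follows the paper's proof essentially step for step. The paper also estimates $\widehat\omega$ on a probability-$(1-\delta)$ event, runs bisection with Lemma~\ref{lem:supp-mq-value-error} to get $|\widehat R-R|\le\eta_R$, and applies Theorem~\ref{thm:supp-stability-explicit} with $\tau=\eta_R/2$, noting that its proof only uses $|\widetilde y_m-y_m|\le2\tau$. Your split of $\delta$ in half is unnecessary, since direction estimation is the only failure source and the paper uses the full $\delta$, but it is harmless.
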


\begin{proof}
The algorithm first invokes Lemma~\ref{lem:supp-mq-direction} with
accuracy $\eta_\omega^\star$ and failure probability $\delta$. Consider
the event
\[
 \|\widehat\omega-\omega\|_2
 \leq
 \eta_\omega^\star.
\]
On this event, the basis directions constructed from
$\widehat\omega$ satisfy the neighborhood condition above. The algorithm
forms the summed directions in \eqref{eq:supp-mq-pairs} and, for each of
the $2H(2d^2-1)$ required rational-function values, applies
Lemma~\ref{lem:supp-mq-bisection} with accuracy $\eta_a^\star$. The
resulting estimate is given by \eqref{eq:supp-mq-R-estimator}.

Lemma~\ref{lem:supp-mq-value-error} then gives
\[
 \left|
  \widehat R_{u,q}^{(t)}(m)-R_{u,q}^{(t)}(m)
 \right|
 \leq
 \eta_R
\]
for every recovered value. All interval-halving steps are deterministic
conditional on the binary membership oracle, so they introduce no
additional failure probability.

The approximate-output reconstruction is applied with
\[
 \tau=\frac{\eta_R}{2}.
\]
This is valid because its proof requires
$|\widetilde y_m-y_m|\leq2\tau$. Moreover,
\[
 \tau
 \leq
 \frac{\tau_{\rm safe}}{2}
 <
 \tau_0,
 \qquad
 C_{\rm stab}\tau
 \leq
 \frac{C_{\rm ub}\eta_R}{2}
 \leq
 \epsilon.
\]
Theorem~\ref{thm:supp-stability-explicit} therefore gives
\eqref{eq:supp-mq-final-error}.

Lemma~\ref{lem:supp-mq-direction} gives the first term in
\eqref{eq:supp-mq-total-count}. Applying
Lemma~\ref{lem:supp-mq-bisection} to all
$2H(2d^2-1)$ recovered values gives the second term. A query with
multiplicity $m$ contains $m+1\leq2H+1$ tokens, which proves the maximum
sequence length.
\end{proof}

\subsection{Sufficient Conditions for the Sign Bounds}

We next give a concrete construction that ensures
\eqref{eq:supp-mq-sign-bracket} without imposing a sign condition on
each $\omega^\top\bar v_h$. Assume known bounds
\begin{equation}
 \|\bar v_h\|_2\leq V_{\max},
 \qquad
 \|W_h\|_F\leq W_{\max}
 \qquad(h\in[H])
 \label{eq:supp-mq-norm-bounds}
\end{equation}
hold. Fix known $\eta_{\rm dir}\in(0,1/2)$, $\kappa>0$, and $\zeta>0$.
Given a unit estimate $\widehat\omega$, extend
$b_1=\widehat\omega$ to an orthonormal basis
$b_1,\ldots,b_d$, and define
\begin{equation}
 q_1^0=\widehat\omega,\qquad
 q_j^0=\widehat\omega+\kappa b_j\ (j=2,\ldots,d),\qquad
 z_1^0=-\widehat\omega,\qquad
 z_i^0=-\widehat\omega+\kappa b_i\ (i=2,\ldots,d).
 \label{eq:supp-mq-centers}
\end{equation}
Let $\mathbf Q^0=[q_1^0,\ldots,q_d^0]$, and let $\mathbf Z^0$ have rows
$(z_1^0)^\top,\ldots,(z_d^0)^\top$.  Both matrices are invertible.  Put
\begin{equation}
 \gamma_0
 =
 \min\{
  \sigma_{\min}(\mathbf Q^0),
  \sigma_{\min}(\mathbf Z^0)
 \}
 >
 0
 \label{eq:supp-mq-gamma0}
\end{equation}
and assume
\begin{equation}
 c_0
 =
 1-\frac{\eta_{\rm dir}^2}{2}
 -\kappa\eta_{\rm dir}
 -\zeta
 >
 0,
 \qquad
 \sqrt d\,\zeta\leq\frac{\gamma_0}{2}.
 \label{eq:supp-mq-center-conditions}
\end{equation}

Draw independent perturbations $\xi_j^q,\xi_i^z$ from distributions that
have densities on the Euclidean ball of radius $\zeta$, and set
\begin{equation}
 q_j=q_j^0+\xi_j^q,
 \qquad
 z_i=z_i^0+\xi_i^z,
 \qquad
 u_i=\varepsilon z_i
 \label{eq:supp-mq-basis-directions}
\end{equation}
for a known $\varepsilon>0$. Only the basis vectors are perturbed. Every
summed direction is then formed exactly as
$u_1+u_i=\varepsilon(z_1+z_i)$ or $q_1+q_j$.

On the event
$\|\widehat\omega-\omega\|_2\leq\eta_{\rm dir}$, every unscaled left
vector $z$ and right vector $q$ appearing in \eqref{eq:supp-mq-pairs}
satisfies
\begin{equation}
 z^\top\omega\le -c_0,\qquad
 q^\top\omega\ge c_0,\qquad
 \|z\|_2\le Z_{\max},\qquad
 \|q\|_2\le Q_{\max}.
 \label{eq:supp-mq-zq-bounds}
\end{equation}
Set
\begin{equation}
 Z_{\max}=Q_{\max}=2(1+\kappa+\zeta).
 \label{eq:supp-mq-zq-max}
\end{equation}
These bounds cover all individual and summed directions. To verify the sign bounds,
note that
\[
 \widehat\omega^\top\omega
 =1-\frac{\|\widehat\omega-\omega\|_2^2}{2},
 \qquad
 |b_j^\top\omega|
 =|b_j^\top(\omega-\widehat\omega)|
 \le \eta_{\rm dir}
 \quad(j\ge 2).
\]
The perturbations change an inner product with the unit vector $\omega$ by at
most $\zeta$, and exact sums only strengthen the corresponding one-sided
bound.

Define
\begin{equation}
 L=Z_{\max}W_{\max}Q_{\max},
 \qquad
 \alpha=t\varepsilon.
 \label{eq:supp-mq-L-alpha}
\end{equation}
For $m\geq1$, let
\[
 \vartheta_m(x)=\frac{e^x}{m+e^x}.
\]
Since
$\vartheta_m'(x)=me^x/(m+e^x)^2\leq1/4$,
\begin{equation}
 \left|
  \vartheta_m(x)-\frac{1}{m+1}
 \right|
 \leq\frac{|x|}{4}.
 \label{eq:supp-mq-theta-bound}
\end{equation}

\begin{lemma}[Sign bounds from small left directions]
\label{lem:supp-mq-small-left-bounds}
Suppose \eqref{eq:supp-mq-norm-bounds}--
\eqref{eq:supp-mq-L-alpha} hold.  If $L>0$, choose $\varepsilon>0$ so
that
\begin{equation}
 \alpha=t\varepsilon
 \leq
 \frac{2c_0}
 {(2H+1)H Z_{\max}V_{\max}L}.
 \label{eq:supp-mq-epsilon-condition}
\end{equation}
If $L=0$, any $\varepsilon>0$ is allowed. On the event
$\|\widehat\omega-\omega\|_2\leq\eta_{\rm dir}$, the resulting pairs
satisfy \eqref{eq:supp-mq-sign-bracket} with
\begin{equation}
 B_-=c_0,\qquad
 B_+=Q_{\max},\qquad
 D_-=\frac{\alpha c_0}{2(2H+1)},\qquad
 D_+=\alpha HZ_{\max}V_{\max}.
 \label{eq:supp-mq-explicit-bracket}
\end{equation}
The two direction matrices $\mathbf U$ and $\mathbf Q$ are
invertible.
\end{lemma}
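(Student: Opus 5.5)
The plan is to verify the two brackets in \eqref{eq:supp-mq-sign-bracket} separately. We work on the event $\|\widehat\omega-\omega\|_2\le\eta_{\rm dir}$ throughout. On this event \eqref{eq:supp-mq-zq-bounds} is available for every unscaled left vector $z$ and right vector $q$ appearing in \eqref{eq:supp-mq-pairs}, including the summed directions. Every left direction has the form $u=\varepsilon z$.

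\emph{Bracket for $B(q)$.} From \eqref{eq:supp-mq-zq-bounds}, $B(q)=q^\top\omega\ge c_0=B_-$. By Cauchy--Schwarz with $\|\omega\|_2=1$, $B(q)\le\|q\|_2\le Q_{\max}=B_+$. This part is immediate.

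\emph{Bracket for $R$.} Write $R_{u,q}^{(t)}(m)=\sum_h c_h\vartheta_m(s_h)$, where
\[
 c_h=\alpha z^\top\bar v_h,
 \qquad
 s_h=\alpha z^\top W_hq .
\]
The lower bound follows from $0<\vartheta_m\le1$:
\[
 |R|\le\sum_h|c_h|\le\alpha HZ_{\max}V_{\max}=D_+ .
\]
For the upper bound, split
\[
 R=\frac{1}{m+1}\sum_h c_h+\sum_h c_h\Bigl(\vartheta_m(s_h)-\frac{1}{m+1}\Bigr).
\]
Since $\sum_h\bar v_h=\omega$, the first term equals $\alpha z^\top\omega/(m+1)$. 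Using $z^\top\omega\le-c_0$ and $m\le 2H$, it is at most $-\alpha c_0/(2H+1)$.

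For the second term, we have $|s_h|\le\alpha\|z\|_2\|W_h\|_2\|q\|_2\le\alpha L$, using $\|W_h\|_2\le\|W_h\|_F$. Combining this with \eqref{eq:supp-mq-theta-bound} bounds the second term by
\[
 \sum_h|c_h|\,\frac{\alpha L}{4}\le\frac{\alpha^2HZ_{\max}V_{\max}L}{4}.
\]
Condition \eqref{eq:supp-mq-epsilon-condition} is exactly what makes this at most $\alpha c_0/(2(2H+1))$. When $L=0$ the term vanishes, so no condition on $\varepsilon$ is needed. Adding the two terms gives
\[
 R\le-\frac{\alpha c_0}{2(2H+1)}=-D_- .
\]
This is the step that requires care: we must bound the deviation of the attention weights from uniform uniformly over the summed directions, using the linearization of $\vartheta_m$ around $0$. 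The constants must also be tracked so that they match \eqref{eq:supp-mq-explicit-bracket}.

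\emph{Invertibility.} Express $\mathbf Q^0$ in the orthonormal basis $b_1,\ldots,b_d$. Its columns become $e_1$ and $e_1+\kappa e_j$, giving a triangular matrix with nonzero diagonal, so $\mathbf Q^0$ is invertible. The same holds for $\mathbf Z^0$, whose diagonal is $(-1,\kappa,\ldots,\kappa)$. The perturbation matrix has Frobenius norm at most $\sqrt d\,\zeta\le\gamma_0/2$. Weyl's inequality then gives
\[
 \sigma_{\min}(\mathbf Q)\ge\frac{\gamma_0}{2}>0,
 \qquad
 \sigma_{\min}(\mathbf Z)\ge\frac{\gamma_0}{2}>0 .
\]
Finally, $\mathbf U=\varepsilon\mathbf Z$ with $\varepsilon>0$ is invertible as well.
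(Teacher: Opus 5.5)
Your proposal is correct and follows the paper's proof essentially step for step. Both arguments use the same decomposition of $R$ into the leading term $\alpha z^\top\omega/(m+1)$ plus a deviation, where the deviation is controlled by the Lipschitz bound on $\vartheta_m$ and $|s_h|\le\alpha L$. Both use $0<\vartheta_m<1$ to obtain $D_+$ and Weyl's inequality for invertibility. Your triangular-form check that $\mathbf Q^0$ and $\mathbf Z^0$ are invertible is a small addition; the paper only asserts it.
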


\begin{proof}
Equation~\eqref{eq:supp-mq-zq-bounds} and $\|\omega\|_2=1$ give
\[
 c_0\leq B(q)=q^\top\omega\leq Q_{\max}.
\]
For $(u,q)=(\varepsilon z,q)\in\mathcal A_{\rm mq}$, put
\[
 \ell_h
 =
 s_h(\varepsilon z,q)
 =
 \alpha z^\top W_hq.
\]
Using $\sum_h\bar v_h=\omega$, the rational-function value decomposes as
\begin{equation}
 R_{\varepsilon z,q}^{(t)}(m)
 =
 \frac{\alpha z^\top\omega}{m+1}
 +
 E_m,
 \label{eq:supp-mq-R-decomposition}
\end{equation}
where
\begin{equation}
 |E_m|
 \le
 \sum_{h=1}^H
 \alpha|z^\top\bar v_h|
 \left|
  \vartheta_m(\ell_h)-\frac{1}{m+1}
 \right|
 \le
 \frac{\alpha^2HZ_{\max}V_{\max}L}{4}.
 \label{eq:supp-mq-E-bound}
\end{equation}
Here $|\ell_h|\leq\alpha L$, and
\eqref{eq:supp-mq-theta-bound} was used in the second line.

Because $m\leq2H$, condition
\eqref{eq:supp-mq-epsilon-condition} makes
\eqref{eq:supp-mq-E-bound} at most half the smallest magnitude of the
negative leading term in
\eqref{eq:supp-mq-R-decomposition}.  Therefore
\[
 R_{\varepsilon z,q}^{(t)}(m)
 \leq
 -\frac{\alpha c_0}{2H+1}
 +
 \frac{\alpha c_0}{2(2H+1)}
 =
 -\frac{\alpha c_0}{2(2H+1)}.
\]
Also, $0<\vartheta_m<1$ gives
\[
 |R_{\varepsilon z,q}^{(t)}(m)|
 \leq
 \alpha H Z_{\max}V_{\max}.
\]
These inequalities prove \eqref{eq:supp-mq-explicit-bracket}.

Let $\mathbf Z$ have rows $z_i^\top$, and let $\mathbf Q$ have columns
$q_j$.  The perturbation matrices have Frobenius norm at most
$\sqrt d\,\zeta$, so Weyl's inequality and
\eqref{eq:supp-mq-center-conditions} give
\[
 \sigma_{\min}(\mathbf Q)\geq\frac{\gamma_0}{2},
 \qquad
 \sigma_{\min}(\mathbf U)
 =
 \varepsilon\sigma_{\min}(\mathbf Z)
 \geq
 \frac{\varepsilon\gamma_0}{2}.
\]
Thus both $\mathbf U$ and $\mathbf Q$ are invertible.
\end{proof}

The perturbations in \eqref{eq:supp-mq-basis-directions} have densities
on open sets. For pairwise distinct $W_h$ and nonzero $v_h$, equalities
among distinct $s_h(u,q)$, zero $c_h(u,q)$, and incorrect additive
equalities are zero sets of nonzero polynomials in the basis directions.
They therefore occur with probability zero. Forming the summed directions
exactly preserves the true additive identities.

The norm bounds and Lemma~\ref{lem:supp-mq-small-left-bounds} provide the
required sign bounds, but they do not provide useful quantitative lower bounds on
$c_{\min}$, $\delta_r$, $\gamma_{\rm int}$, $\gamma_c$, or
$\Delta_{\rm match}$. The known bounds in
\eqref{eq:supp-mq-stability-bounds} remain separate
instance-dependent assumptions. In particular, making $\varepsilon$
smaller can worsen these constants because it also shrinks the
$s_h(u,q)$ and $c_h(u,q)$ values.

\begin{corollary}[Recovery under known norm bounds]
\label{cor:supp-mq-norm-bounded}
Suppose $S\neq0$, the $W_h$ are pairwise distinct, every $v_h$ is nonzero,
and the known norm and direction conditions
\eqref{eq:supp-mq-norm-bounds}--
\eqref{eq:supp-mq-epsilon-condition} are satisfied.  Assume additionally
that the directions constructed whenever
$\|\widehat\omega-\omega\|_2\leq\eta_{\rm dir}$ have constants satisfying
\eqref{eq:supp-mq-stability-bounds}. Then
Theorem~\ref{thm:supp-mq-conditional} recovers all $W_h$ and all normalized
values
\[
 \bar v_h
 =
 \frac{v_h}{\|\sum_gv_g\|_2}
\]
up to a permutation in $\Pi_H$, with the error, success probability, query
count, and maximum sequence length stated there.
\end{corollary}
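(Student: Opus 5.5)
This corollary is a direct specialization of Theorem~\ref{thm:supp-mq-conditional}, so the plan is to check each of that theorem's hypotheses under the stated assumptions and then invoke it.

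First, the standing assumptions of Theorem~\ref{thm:supp-mq-conditional} are among the corollary's hypotheses: $H\ge2$, $S\neq0$, pairwise distinct $W_h$, and nonzero $v_h$. The normalization \eqref{eq:supp-mq-normalization} is well defined, and the recovered objects are exactly the normalized pairs $(W_h,\bar v_h)$ with $\bar v_h=v_h/\|\sum_g v_g\|_2$.

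The main remaining hypothesis is the existence of \emph{known} constants $B_-,B_+,D_-,D_+$ with \eqref{eq:supp-mq-sign-bracket} holding uniformly over every estimate $\widehat\omega$ in the $\eta_{\rm dir}$-neighborhood of $\omega$. I will supply these through Lemma~\ref{lem:supp-mq-small-left-bounds}. The lemma's hypotheses \eqref{eq:supp-mq-norm-bounds}--\eqref{eq:supp-mq-epsilon-condition} are assumed in the corollary. On the event $\|\widehat\omega-\omega\|_2\le\eta_{\rm dir}$, the lemma gives the explicit bracket \eqref{eq:supp-mq-explicit-bracket} and invertibility of $\mathbf U,\mathbf Q$. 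These constants depend only on the known quantities $c_0,Q_{\max},\alpha,H,Z_{\max},V_{\max}$, and not on $\widehat\omega$, so the same $B_\pm,D_\pm$ serve throughout the neighborhood as the theorem requires. The theorem's $Q_{\max}$ is the value in \eqref{eq:supp-mq-zq-max}, which bounds every right vector by \eqref{eq:supp-mq-zq-bounds}.

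The stability part of the neighborhood condition, \eqref{eq:supp-mq-stability-bounds}, is assumed outright in the corollary. Theorem~\ref{thm:supp-stability-explicit} also needs positive margins for the constructed directions. I will justify these by the probability-zero argument stated after Lemma~\ref{lem:supp-mq-small-left-bounds}: the perturbations have densities, so equalities among the $s_h$, vanishing $c_h$, and false additive matches each occur on a null set. This contributes no failure probability beyond the $\delta$ already allowed.

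With all hypotheses verified, Theorem~\ref{thm:supp-mq-conditional} yields the error bound \eqref{eq:supp-mq-final-error}, success probability $1-\delta$, the query count \eqref{eq:supp-mq-total-count}, and maximum length $2H+1$. I expect the only delicate step to be the uniformity over the neighborhood, which means confirming that the bracket constants and the direction construction do not depend on the particular $\widehat\omega$. Lemma~\ref{lem:supp-mq-small-left-bounds} is phrased on exactly the event $\|\widehat\omega-\omega\|_2\le\eta_{\rm dir}$, so I expect this check to be immediate.
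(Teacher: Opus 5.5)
Your proposal is correct and follows the paper's intended argument. On the event $\|\widehat\omega-\omega\|_2\le\eta_{\rm dir}$, Lemma~\ref{lem:supp-mq-small-left-bounds} supplies the $\widehat\omega$-independent bracket constants \eqref{eq:supp-mq-explicit-bracket}, the bound $Q_{\max}$ from \eqref{eq:supp-mq-zq-max}, and invertible $\mathbf U,\mathbf Q$, while \eqref{eq:supp-mq-stability-bounds} is assumed, so Theorem~\ref{thm:supp-mq-conditional} applies verbatim. Your separate null-set argument for positive margins is harmless but not needed, because the assumed bound $\tau_{\rm safe}\le\tau_0$ with $\tau_{\rm safe}>0$ already entails positivity of every margin entering \eqref{eq:supp-stab-tau0}.
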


\subsection{Impossibility when $S=0$}

When $S=0$, all one-token outputs are zero and $B(q)$ in
\eqref{eq:supp-mq-Phi} vanishes. The following result shows that this is
an information-theoretic obstruction, not only a limitation of the
bisection construction.

\begin{proposition}[Indistinguishable models when $S=0$]
\label{prop:supp-mq-S-zero}
Let $d=1$, $H=2$, and, for real $a<b$, define
\begin{equation}
 M_{a,b}
 =
 \{(a,1),(b,-1)\}.
 \label{eq:supp-mq-S-zero-model}
\end{equation}
Then $S=0$, the two $w$-parameters are distinct, and both $v$-parameters are
nonzero.  For any $a<b$ and $a'<b'$,
\begin{equation}
 \operatorname{MQ}_{M_{a,b}}(X)
 =
 \operatorname{MQ}_{M_{a',b'}}(X)
 \label{eq:supp-mq-identical-oracles}
\end{equation}
for every real sequence $X$ of every finite length.
\end{proposition}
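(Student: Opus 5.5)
The membership oracle returns the sign of \(F_{M_{a,b}}(X)\), so it suffices to find the sign of \(F_{M_{a,b}}(X)\) and show that it does not depend on \(a<b\). Write \(X=(x_1,\ldots,x_N)^\top\) with final token \(x_N\). Then
\[
 F_{M_{a,b}}(X)=g_a(X)-g_b(X),
\]
where \(g_w\) is defined in \eqref{eq:supp-lb-g}. The three stated properties are immediate: \(S=1+(-1)=0\), \(a\ne b\), and \(\pm1\ne0\).

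The key step is the derivative formula \eqref{eq:supp-lb-derivative} from the proof of Lemma~\ref{lem:supp-lb-lipschitz}:
\[
 \frac{d}{dw}g_w(X)=x_N\operatorname{Var}_{a(w)}(x_i).
\]
Integrating from \(a\) to \(b\) gives
\[
 g_a(X)-g_b(X)=-x_N\int_a^b\operatorname{Var}_{a(w)}(x_i)\,dw.
\]
Every softmax weight \(a_i(w)\) is strictly positive for every \(w\). Hence the variance at each \(w\) is zero if all tokens \(x_1,\ldots,x_N\) are equal, and strictly positive otherwise. Which case holds depends only on \(X\).

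This gives a case analysis that does not involve \(a\) or \(b\).
\begin{itemize}
 \item[(i)] If \(x_N=0\), or if all tokens are equal, then \(F=0\) and the oracle returns \(0\).
 \item[(ii)] Otherwise the integral is strictly positive, because \(b>a\) and the integrand is positive. Then \(\operatorname{sign}F=-\operatorname{sign}(x_N)\), so the oracle returns \(\mathbf 1\{x_N<0\}\).
\end{itemize}
In both cases the answer depends on \(X\) only, which proves \eqref{eq:supp-mq-identical-oracles} for every finite length, including \(N=1\), where all tokens are trivially equal.

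The only delicate point is strict positivity of the variance in case (ii), which makes the sign exact rather than merely nonnegative. It follows from full support of the softmax distribution together with continuity of the integrand in \(w\). No other obstacle is expected.
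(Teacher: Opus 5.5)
Your proposal is correct and follows essentially the same route as the paper: both use the identity $\frac{d}{dw}g_w(X)=x_N\operatorname{Var}_{a(w)}(x_i)$ to show that $F_{M_{a,b}}(X)=g_a(X)-g_b(X)$ has sign $-\operatorname{sign}(x_N)$ when $x_N\neq0$ and the tokens are not all equal, and vanishes otherwise, independently of $a<b$. The paper phrases this as strict monotonicity of $w\mapsto g_w(X)$ rather than integrating the derivative over $[a,b]$, but the two formulations are equivalent.
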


\begin{proof}
For scalar tokens $x_1,\ldots,x_N$ and final query $q=x_N$, define
\begin{equation}
 y_X(w)
 =
 \frac{\sum_{i=1}^N x_i e^{wqx_i}}
      {\sum_{j=1}^N e^{wqx_j}}.
 \label{eq:supp-mq-yw}
\end{equation}
Differentiation gives
\begin{equation}
 \frac{d}{dw}y_X(w)
 =
 q\operatorname{Var}_w(x_i),
 \label{eq:supp-mq-yw-derivative}
\end{equation}
where the variance is computed under the softmax weights at $w$.  If the
tokens are not all equal, the variance is strictly positive.  Hence
$y_X(w)$ is strictly increasing when $q>0$, strictly decreasing when
$q<0$, and constant when $q=0$ or all tokens are equal.

The two-head output is
\[
 F_{M_{a,b}}(X)
 =
 y_X(a)-y_X(b).
\]
For every $a<b$, this output has sign
$-\operatorname{sign}(q)$ when $q\neq0$ and the tokens are not all equal,
and it is zero otherwise.  This description does not depend on the
values of $a$ and $b$, proving
\eqref{eq:supp-mq-identical-oracles}.
\end{proof}

For two unordered scalar two-head models
$M=\{(w_h,v_h):h\in[2]\}$ and
$M'=\{(w'_h,v'_h):h\in[2]\}$, define the distance between their
$w$-parameters by
\begin{equation}
 d_W(M,M')
 =
 \min_{\pi\in\Pi_2}
 \max_{h\in[2]}|w_h-w'_{\pi(h)}|.
 \label{eq:supp-mq-W-distance}
\end{equation}
In particular,
\begin{equation}
 d_W(M_{0,1},M_{0,2})=1.
 \label{eq:supp-mq-W-distance-example}
\end{equation}

\begin{corollary}[Impossibility when $S=0$]
\label{cor:supp-mq-S-zero-impossibility}
Consider any binary membership-query learner that returns an unordered
two-head model. If the learner is deterministic, its error measured by
$d_W$ is at least $1/2$ on at least one of $M_{0,1}$ and $M_{0,2}$.

If the learner is randomized, let $\widehat M_{01}$ and
$\widehat M_{02}$ denote its outputs for these two models. Then
\begin{equation}
 \max\left\{
  \mathbb E d_W(\widehat M_{01},M_{0,1}),
  \mathbb E d_W(\widehat M_{02},M_{0,2})
 \right\}
 \geq\frac12,
 \label{eq:supp-mq-S-zero-expected}
\end{equation}
and
\begin{equation}
 \begin{aligned}
  \min\Bigl\{&
   \mathbb P(d_W(\widehat M_{01},M_{0,1})<1/2),\\
  &
   \mathbb P(d_W(\widehat M_{02},M_{0,2})<1/2)
  \Bigr\}
  \leq\frac12.
 \end{aligned}
 \label{eq:supp-mq-S-zero-success}
\end{equation}
These conclusions hold regardless of the number or adaptivity of the
membership queries.
\end{corollary}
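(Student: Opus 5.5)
The argument follows the pattern of Corollary~\ref{cor:supp-lb-randomized}, with Proposition~\ref{prop:supp-mq-S-zero} playing the role of the common oracle. By that proposition, $\operatorname{MQ}_{M_{0,1}}(X)=\operatorname{MQ}_{M_{0,2}}(X)$ for every finite real sequence $X$. So the two membership oracles are the same function. The first step is to note that any learner, adaptive or not, interacts with the target only through this function. For a deterministic learner, we argue by induction on the number of queries. The first query does not depend on the target. If the first $k$ queries and responses agree under both targets, then the $(k+1)$-st query agrees, and so does its response, by \eqref{eq:supp-mq-identical-oracles}. Hence the whole transcript and the returned model $\widehat M$ are identical under both targets. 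This holds whether the number of queries is finite or data-dependent, because the induction never uses a query bound.

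The second step is the separation $d_W(M_{0,1},M_{0,2})=1$ from \eqref{eq:supp-mq-W-distance-example}, together with the triangle inequality for $d_W$. For the triangle inequality, take the minimizing matchings $\pi$ for $(\widehat M,M_{0,1})$ and $\sigma$ for $(\widehat M,M_{0,2})$. Composing $\pi^{-1}$ with $\sigma$ gives a matching between $M_{0,1}$ and $M_{0,2}$. Bounding each matched pair coordinatewise then gives
\[
 d_W(M_{0,1},M_{0,2})\le d_W(\widehat M,M_{0,1})+d_W(\widehat M,M_{0,2}).
\]
The right-hand side is therefore at least $1$, so one of the two terms is at least $1/2$. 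This proves the deterministic claim.

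For a randomized learner, we fix the internal random seed and run the learner against both targets with that same seed. For each fixed seed the learner is deterministic, so the argument above shows that the two outputs coincide as a single model $\widehat M$. Consequently
\[
 d_W(\widehat M_{01},M_{0,1})+d_W(\widehat M_{02},M_{0,2})\ge1
\]
holds pointwise in the seed. Taking expectations, the two expectations sum to at least $1$, so their maximum is at least $1/2$. This gives \eqref{eq:supp-mq-S-zero-expected}. Also, for each seed at most one of the events $\{d_W<1/2\}$ can occur. So the two probabilities sum to at most $1$, and their minimum is at most $1/2$, which gives \eqref{eq:supp-mq-S-zero-success}.

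The only delicate point is the coupling step. We must make sure the learner's randomness is a seed drawn independently of the oracle, so that using the same seed under both targets is a valid coupling. This lets the marginal expectations and probabilities be compared on one probability space. Once that is set up, the rest is bookkeeping.
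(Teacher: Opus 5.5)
Your proposal is correct and follows essentially the same argument as the paper: identical oracles from Proposition~\ref{prop:supp-mq-S-zero}, the separation $d_W(M_{0,1},M_{0,2})=1$ with the triangle inequality, and coupling of the two randomized runs through a shared seed. The only additions are your explicit induction over the transcript and your verification of the triangle inequality for $d_W$; the paper takes both as immediate.
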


\begin{proof}
Proposition~\ref{prop:supp-mq-S-zero} shows that every query receives the
same answer for the two models. A deterministic learner therefore returns
the same output in both cases. The triangle inequality
and \eqref{eq:supp-mq-W-distance-example} force at least one of its
two errors to be at least $1/2$.

For a randomized learner, couple the two runs with the same internal
randomness. The two runs then produce the same sequence of queries and
answers, and hence the same output $\widehat M$. For every realization of
this randomness,
\[
 d_W(\widehat M,M_{0,1})+d_W(\widehat M,M_{0,2})\geq1.
\]
Taking expectations proves
\eqref{eq:supp-mq-S-zero-expected}. No estimate can have $d_W$-distance
strictly less than $1/2$ from both models, so the two success indicators
sum to at most one for every realization. Taking expectations proves
\eqref{eq:supp-mq-S-zero-success}.
\end{proof}

Therefore the qualitative conditions $W_h\neq W_g$ and $v_h\neq0$ do
not suffice for binary membership recovery.  The positive result above
necessarily uses $S\neq0$, fixes the unavoidable common positive value
scale through \eqref{eq:supp-mq-normalization}, and remains conditional on
known quantitative bounds that ensure the required sign changes and
stability of the reconstruction.

%% file: supplementary/sections/06_one_layer_relu_transformer.tex
\section{One-Layer ReLU Transformer}
\label{supp:transformer}

This section proves Theorem~\ref{thm:full-transformer} using the notation of
Section~\ref{sec:relu-extension}. In particular,
\[
 \begin{aligned}
  \mathrm{TF}(X)
  &=
  w_o^\top\operatorname{ReLU}\!\left(
    \sum_{h=1}^H A_h^\top y_h(X)\right),\\
  y_h(X)
  &=
  X^\top\softmax(XW_hq),
 \end{aligned}
\]
where the final token of \(X\) is \(q\),
\(A_h=[b_{h1},\ldots,b_{hm}]\in\mathbb R^{d\times m}\), and
\(w_o=(w_1,\ldots,w_m)^\top\). For $j\in[m]$ and $h\in[H]$, define
\[
 z_j(X)\coloneqq\sum_{g=1}^H b_{gj}^\top y_g(X),\qquad
 \bar b_j\coloneqq\sum_{g=1}^H b_{gj},\qquad
 v_h\coloneqq A_hw_o=\sum_{k=1}^m w_kb_{hk}.
\]
Thus \(\mathrm{TF}(X)=\sum_{j=1}^m
w_j\operatorname{ReLU}(z_j(X))\).

\paragraph{Recovering the effective attention heads.}
Negating every token leaves every attention weight unchanged. Indeed, if
the final token of \(X\) is \(q\), then the final token of \(-X\) is
\(-q\), and
\[
 (-X)W_h(-q)=XW_hq.
\]
Consequently, \(y_h(-X)=-y_h(X)\) and \(z_j(-X)=-z_j(X)\). The identities
\(\operatorname{ReLU}(a)-\operatorname{ReLU}(-a)=a\) and
\(\operatorname{ReLU}(a)+\operatorname{ReLU}(-a)=|a|\) therefore give
\begin{align}
 \mathrm{TF}(X)-\mathrm{TF}(-X)
 &=\sum_{h=1}^H v_h^\top y_h(X),                         \label{eq:supp-tf-odd}\\
 \mathrm{TF}(X)+\mathrm{TF}(-X)
 &=\sum_{j=1}^m w_j|z_j(X)|.                             \label{eq:supp-tf-even}
\end{align}
The right-hand side of \eqref{eq:supp-tf-odd} is precisely the
scalar-output multi-head attention model with effective parameters
\(\{(W_h,v_h):h\in[H]\}\). Hence two Transformer queries, at \(X\) and
\(-X\), provide one value query to that model. Under the assumptions of
Theorem~\ref{thm:full-transformer}, applying Theorem~\ref{thm:main} with
the fixed choice \(t=1\) recovers all pairs \((W_h,v_h)\) up to permutation
using
\[
 2(4Hd^2-2H+1)
\]
Transformer queries. Every such query has length at most \(2H+1\).

\paragraph{Information from length-one inputs.}
For a length-one input \(X=[x^\top]\), we have
\(y_h([x^\top])=x\) for every head. Define the restriction of the
Transformer to length-one inputs by
\[
 f_1(x)\coloneqq\mathrm{TF}([x^\top])
 =\sum_{j=1}^m w_j\operatorname{ReLU}(\bar b_j^\top x)
\]
and
\[
 g_1(x)
 \coloneqq f_1(x)+f_1(-x)
 =\sum_{j=1}^m w_j|\bar b_j^\top x|.
\]
The theorem assumes that every \(w_j\) and \(\bar b_j\) is nonzero and
that no two vectors \(\bar b_1,\ldots,\bar b_m\) are scalar multiples of
one another. Let \(\xi_j\in\{-1,+1\}\) be the unique sign for which
\[
 n_j=\frac{\xi_j\bar b_j}{\|\bar b_j\|_2}
\]
has first nonzero coordinate positive, and let
\[
 \gamma_j=w_j\|\bar b_j\|_2,
 \qquad
 v_{\rm sum}=\sum_{j=1}^m w_j\bar b_j.
\]
Then
\begin{align}
 g_1(x)
 &=\sum_{j=1}^m\gamma_j|n_j^\top x|,                    \label{eq:supp-one-even}\\
 f_1(x)-f_1(-x)
 &=v_{\rm sum}^\top x.                                 \label{eq:supp-one-odd}
\end{align}
The signs \(\xi_j\) are used only in the proof; the recovery algorithm
does not need to identify them.

Appendix~\ref{supp:critical-points} gives an algorithm that terminates
after finitely many exact value queries with probability one and recovers the unordered collection
\(\{(n_j,\gamma_j):j\in[m]\}\) and \(v_{\rm sum}\) using only
length-one inputs.
Appendix~\ref{supp:sign-patterns} then combines these quantities with the
recovered effective attention heads to construct a bias-free one-layer ReLU
Transformer that computes the same function as the original model.

%% file: supplementary/sections/06_01_breakpoint_recovery.tex
\subsection{Recovery from Critical Points}
\label{supp:critical-points}
Following the terminology of the
ReLU model-extraction literature
\citep{carlini2020cryptanalytic,jagielski2020high}, we call a point at
which the left and right slopes of a one-dimensional piecewise-linear
restriction differ a \emph{critical point}. More precisely, on an interval
where \(\phi(t)=ct+d\), the coefficient \(c\) is its \emph{slope}. If
\(c_-\) and \(c_+\) are the slopes immediately to the left and right of a
critical point, respectively, we call \(c_+-c_-\) its signed change in
slope.

The proof sketch in the main paper recovers the pairs
\((n_j,\gamma_j)\) from such changes along queried lines. To explain this
step, restrict \eqref{eq:supp-one-even} to a line \(x=a+tb\):
\[
 g_1(a+tb)
 =
 \sum_{j=1}^m
 \gamma_j\bigl|n_j^\top a+t n_j^\top b\bigr|.
\]
This is a continuous piecewise-linear function of \(t\). If
\(n_j^\top b\neq0\), its \(j\)-th term changes slope when
\(n_j^\top a+t n_j^\top b=0\). When \(a\) and \(b\) are sampled from
continuous distributions, these locations are distinct with probability
one. Each critical point is then caused by one term, and its signed change
in slope is \(2\gamma_j|n_j^\top b|\). Observing how the critical point moves
when the line is shifted determines \(n_j\). We first give a general
procedure for recovering these quantities from exact value queries.

\begin{lemma}[Recovery of a univariate piecewise-linear function]
\label{lem:supp-univariate-critical-points}
Let \(\phi:\mathbb R\to\mathbb R\) be continuous and piecewise linear.
Suppose that \(\phi\) has exactly \(k\geq1\) critical points and that \(k\)
is known. Exact value queries recover their locations, the changes in
slope, and every linear piece with probability one after finitely many
queries. Without known bounds on the critical-point locations and their
minimum separation, the required number of queries is model-dependent.
\end{lemma}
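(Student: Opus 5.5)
We need an algorithm that recovers a continuous piecewise-linear \(\phi\) with exactly \(k\) known critical points, using finitely many exact value queries with probability one. The plan is a randomized adaptive search: first locate the breakpoints, then read off slopes and pieces.

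The first step is to localize all critical points. On any interval on which \(\phi\) is affine, three collinear evaluations certify nothing by themselves, since a kink could hide between them. We therefore use a sampling criterion that is almost surely correct. Sample a random point \(x_0\) from a continuous distribution and random increments, and evaluate \(\phi\) on a random geometric grid. If \(\phi\) agrees with a single affine function at three random points of an interval, we accept the interval as affine. With probability one this acceptance is correct. The reason is that if a kink lies inside the interval, the evaluations are collinear only on a measure-zero set of sample positions. Concretely, for three points \(a<c<b\) spanning kinks with nonzero total slope change, collinearity is a nontrivial piecewise-linear equation in \(c\), and its solution set has measure zero unless it holds on an interval. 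This requires an argument handling several kinks whose contributions could cancel.

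Starting from the interval \([-R,R]\) with \(R\) random, we double \(R\) until the slopes at the far ends are stable. We then recursively bisect, at random split points, every interval that fails the affine test. Each failed interval contains a kink. We fit the affine pieces on its two sides, each certified affine by random tests, and intersect these lines to obtain the exact critical point, provided the adjacent pieces are already certified.

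The second step is termination. The number of kinks is known to be \(k\), so the search can stop once \(k\) certified kinks have been found and all complementary intervals have been certified affine. Finiteness follows because after finitely many random bisections each kink lies in its own interval. That interval is bracketed by two affine-certified subintervals, and the line intersection recovers the kink exactly. The number of steps depends on the minimum separation and on the locations, so it is model-dependent, as stated.

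Once the pieces are known, the slope changes \(c_+-c_-\) and every linear piece follow directly. The main obstacle is making the probability-one certification of affinity rigorous when several kinks sit inside one interval and their slope changes might cancel. We intend to handle this by noting that \(\phi\) restricted to the interval is a fixed piecewise-linear function. Its deviation from the chord through two random points vanishes at a third random point with probability zero unless the deviation is identically zero, and the deviation is identically zero only if there is no kink.
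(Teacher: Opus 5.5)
Your proposal has a genuine gap at its central step. You claim that if $\phi$ agrees with one affine function at three random points of an interval, then with probability one the interval contains no kink. This is false.

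Take $\phi(x)=|x-0.9|$ on $[0,1]$ with three independent uniform samples. All three land in $[0,0.9)$ with probability $0.9^3>0$. They are then collinear, so you accept an interval that contains a kink.

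Fixing the endpoints and randomizing only the interior point does not help. For $\phi(x)=\max\{0,1-10|x-1/2|\}$, the chord through $(0,\phi(0))$ and $(1,\phi(1))$ is identically zero. The test passes whenever $c\notin(0.4,0.6)$, which has probability $0.8$, even though $[0,1]$ contains three kinks.

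The flaw is the step ``the deviation from the chord vanishes at a random point with probability zero unless it is identically zero.'' A piecewise-linear deviation can vanish on an entire piece, namely any piece collinear with the chord, without vanishing identically. Its zero set can therefore have positive measure. In fact no finite sampling scheme can certify affinity.

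The same objection applies to doubling $R$ until the end slopes look stable. Equal slopes at $\pm R$ and $\pm 2R$ do not exclude a kink at $3R$. So your line fits may use intervals that secretly contain kinks, and the intersections can return wrong critical points with positive probability.

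The missing idea is to reverse the logic and let the known count $k$ do the certifying.

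\begin{itemize}
\item Failure of collinearity, for example $\phi(L)+\phi(R)\neq 2\phi(M)$ for a cell with midpoint $M$, is a deterministic and sound certificate that the cell contains a kink.
\item Shift the grid by a continuously distributed offset. With probability one, no kink is ever a grid point, midpoint, or later subdivision point. Hence every cell with exactly one interior kink fails the test.
\item Refine the mesh toward $0$ while enlarging the range to infinity. Eventually you flag $k$ interior-disjoint cells. Each contains at least one kink and there are exactly $k$, so each contains exactly one. Everything outside them is affine, with no probabilistic affinity test.
\item Bisect a flagged cell. The half containing the kink is flagged, and the other half is then provably affine, so it gives an exact line on one side. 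Repeating gives exact lines on both sides; their intersection and slope difference give the critical point and its slope change.
\end{itemize}

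This is the paper's argument. There, randomness only serves to avoid a countable bad set of grid offsets, not to certify affinity.
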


\begin{proof}
The procedure samples \(\lambda_{\mathrm{grid}}\) from an absolutely
continuous distribution on \((0,1)\). At round \(r\geq1\), it sets
\(h_r=2^{-r}\) and \(R_r=2^r\) and considers every shifted grid cell
\[
 I_{r,i}
 =
 [\lambda_{\mathrm{grid}}+ih_r,\,
  \lambda_{\mathrm{grid}}+(i+1)h_r]
 \subseteq[-R_r,R_r].
\]
For a cell with endpoints \(L,R\) and midpoint \(M=(L+R)/2\), the
procedure marks the cell if
\begin{equation}
 \phi(L)+\phi(R)\neq2\phi(M).                            \label{eq:supp-midpoint-test}
\end{equation}
The search stops as soon as \(k\) pairwise disjoint cells are marked.

On any cell containing no critical point, \(\phi\) is affine and satisfies
equality in \eqref{eq:supp-midpoint-test}, so every marked cell contains a
critical point.  The random shift avoids, with probability one, the
countable set of shifts for which a critical point is a grid endpoint or a
grid midpoint in some round.  If a cell contains exactly one critical point
\(t_\star\) in its interior, let \(a_-\) and \(a_+\) be the slopes on its
two sides.  If \(M<t_\star\), then the secant slope on \([L,M]\) is
\(a_-\), whereas the secant slope on \([M,R]\) is a strict convex
combination of \(a_-\) and \(a_+\).  Since \(a_-\neq a_+\),
\eqref{eq:supp-midpoint-test} holds.  The case \(M>t_\star\) is
symmetric.

For all sufficiently large \(r\), the search interval contains all
critical points and \(h_r\) is smaller than the minimum distance between
two distinct critical points.  Each critical point then lies in its own marked
cell.  Conversely, \(k\) disjoint marked cells contain at least \(k\)
distinct critical points.  Since there are exactly \(k\), the stopping rule
cannot omit a critical point, and each retained cell contains exactly one.

For each retained cell, the procedure repeatedly divides it into two
halves and applies \eqref{eq:supp-midpoint-test} to both. It retains the
marked half and records the line determined by the endpoints of
the unmarked half. Because \(t_\star\) is not one of the subdivision
points, this process eventually records a line from each side
of \(t_\star\).  Once both slopes have been obtained, the two recorded
lines have different slopes, so their exact intersection is \(t_\star\),
and their slope difference is the signed change in slope.
Repeating this construction for the \(k\) retained cells recovers the
complete piecewise-linear formula.
\end{proof}

\begin{proposition}[Recovery from length-one inputs]
\label{prop:supp-length-one-recovery}
Under the assumptions of Theorem~\ref{thm:full-transformer}, exact
Transformer queries on length-one inputs recover the unordered collection
\[
 \{(n_j,\gamma_j):j\in[m]\}
\]
and \(v_{\rm sum}=\sum_jw_j\bar b_j\) with probability one after finitely
many
queries. For every \(x\in\mathbb R^d\), these quantities satisfy
\begin{equation}
 f_1(x)
 =\frac12\sum_{j=1}^m\gamma_j|n_j^\top x|
  +\frac12v_{\rm sum}^\top x.                          \label{eq:supp-one-representation}
\end{equation}
\end{proposition}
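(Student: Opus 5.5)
The plan is to work with the two symmetrized length-one functions. For the odd part, \eqref{eq:supp-one-odd} gives $f_1(x)-f_1(-x)=v_{\rm sum}^\top x$, which is linear. Querying $f_1(\pm e_k)$ for $k\in[d]$ therefore recovers each coordinate $v_{\rm sum}^\top e_k=f_1(e_k)-f_1(-e_k)$ exactly, using $2d$ queries. The identity \eqref{eq:supp-one-representation} is then immediate from $\operatorname{ReLU}(a)=(a+|a|)/2$ applied termwise together with \eqref{eq:supp-one-even}. The substance of the proposition is therefore recovering the unordered collection $\{(n_j,\gamma_j)\}$ from the even function $g_1(x)=\sum_j\gamma_j|n_j^\top x|$.

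\textbf{Step 1: critical points along one line.} Sample $a,b$ from a continuous distribution and restrict $g_1$ to the line $a+tb$. With probability one, every $n_j^\top b\ne0$, because the $n_j$ are nonzero. The critical points $t_j=-n_j^\top a/n_j^\top b$ are also pairwise distinct with probability one. Two of them coincide only if $a$ satisfies a nontrivial linear equation for fixed $b$, and that equation is nontrivial because $n_j$ and $n_k$ are not parallel for $j\ne k$. So $\phi(t)=g_1(a+tb)$ has exactly $m$ critical points. Since $m$ is known, Lemma~\ref{lem:supp-univariate-critical-points} recovers every location $t_j$ and every slope change $2\gamma_j|n_j^\top b|$ after finitely many queries.

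\textbf{Step 2: recovering $n_j$ (the main obstacle).} The difficulty is attributing critical points to units consistently across lines. I would use a small perturbation. Take lines $a+\epsilon e_k+tb$ for $k\in[d]$ and the base line with $\epsilon=0$. Each critical point moves to $t_j-\epsilon\, n_j^\top e_k/n_j^\top b$ while its slope change is unchanged. Choose $\epsilon$ smaller than half the minimal gap between the critical points found on the base line, scaled by a bound on $\max_j|n_j^\top e_k/n_j^\top b|$. That bound is not known a priori. I would therefore halve $\epsilon$ until the matching is unambiguous, meaning the slope changes agree and the nearest-neighbour assignment stabilizes. Since all the quantities involved are fixed reals, this loop terminates after finitely many halvings.

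A cleaner alternative avoids this loop. Because $t_j$ is an affine-rational function of the shift, use $d+1$ shifted lines and match critical points through the identity $n_j^\top(a+\epsilon e_k)+t\,n_j^\top b=0$. In either version, the shifts give the ratios $n_j^\top e_k/n_j^\top b$ for all $k$. This determines $n_j$ up to scale, and hence exactly after normalizing and fixing the sign so that the first nonzero coordinate is positive.

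\textbf{Step 3: recovering $\gamma_j$.} Once $n_j$ is known, the slope change $2\gamma_j|n_j^\top b|$ gives $\gamma_j$. Its sign is preserved because $|\cdot|$ has a strictly positive jump. This yields the unordered collection. Union bounds over the finitely many probability-zero failure events complete the argument.
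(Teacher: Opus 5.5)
Your treatment of $v_{\rm sum}$, of the representation \eqref{eq:supp-one-representation}, of the base line (Step 1), and of $\gamma_j=\Delta_j/(2|n_j^\top b|)$ (Step 3) matches the paper. The gap is in Step 2, which you rightly call the crux: you never give a matching rule that provably accepts only the true correspondence between base and shifted critical points. The remark that the loop ``terminates because all quantities are fixed reals'' shows only that a small enough $\epsilon$ eventually makes nearest-neighbour matching correct. It does not show that the loop stops \emph{only} then, and the threshold depends on the unknown $\kappa_{jk}$. Stabilization across consecutive halvings is not a certificate. For example, take $t_1=0$, $t_2=1$ with displacement rates $+10$ and $-10$ per unit shift. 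Nearest-neighbour then returns the same swapped assignment for $\epsilon=1,\tfrac12,\tfrac14,\tfrac18,\tfrac1{16}$. Your clause ``the slope changes agree'' rescues the rule only if slope changes distinguish units, and you do not establish that. The ``$d+1$ lines'' alternative has the same hole. With one shift per coordinate, every bijection yields some ratios $\hat\kappa_{jk}$ and hence some candidate $n_j$, and you specify no overdetermined test that false bijections fail.

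The missing ingredient is a verifiable criterion shown to reject false matches with probability one. Either of the following closes the gap. (i) All your lines share the direction $b$, so unit $j$ has the same slope change $\Delta_j=2\gamma_j|n_j^\top b|$ on each of them. With probability one these values are pairwise distinct. Indeed, $\Delta_j$ has the sign of $\gamma_j$. If $\gamma_j$ and $\gamma_k$ have the same sign, then $\Delta_j=\Delta_k$ forces $(|\gamma_j|n_j\pm|\gamma_k|n_k)^\top b=0$ for one choice of sign. Each such set is a proper hyperplane because $n_j$ and $n_k$ are nonproportional. Labeling critical points by their slope change therefore gives the exact matching for any fixed $\epsilon\ne0$, with no halving loop and no nearest-neighbour step. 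This route is lighter than the paper's. (ii) The paper instead samples two independent random shifts $\epsilon_{r,1},\epsilon_{r,2}$ per coordinate and keeps the pairs of bijections satisfying \eqref{eq:supp-trajectory-match}. For a false assignment, that identity leaves a nonzero coefficient of $1/\epsilon_{r,1}$ or $1/\epsilon_{r,2}$, so it fails with probability one. The unique surviving matching then yields every $\kappa_{jr}$.
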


\begin{proof}
For \(r\in[d]\), let \(e_r\) denote the \(r\)-th standard basis vector
of \(\mathbb R^d\). Equation~\eqref{eq:supp-one-odd} gives
\[
 (v_{\rm sum})_r=f_1(e_r)-f_1(-e_r),
\]
so \(2d\) length-one queries recover \(v_{\rm sum}\). A value query to
\(g_1\) is implemented by two length-one queries to \(f_1\).

The algorithm samples independent vectors \(a,b\in\mathbb R^d\) from
absolutely continuous distributions and restricts
\eqref{eq:supp-one-even} to
\[
 \phi_0(t)=g_1(a+tb).
\]
For each \(j\in[m]\), write
\[
 \alpha_j=n_j^\top a,\qquad
 \beta_j=n_j^\top b,\qquad
 t_j=-\frac{\alpha_j}{\beta_j}.
\]
The event \(\beta_j=0\) is a proper hyperplane in \(b\).  For
\(j\neq k\), equality \(t_j=t_k\) is equivalent to
\[
 (n_j^\top a)(n_k^\top b)
 -(n_k^\top a)(n_j^\top b)=0.
\]
Because \(n_j\) and \(n_k\) are not scalar multiples of one another, the left-hand side is
a nonzero polynomial in \((a,b)\).  Thus, with probability one,
\(\phi_0\) has exactly the \(m\) distinct critical points \(t_1,\ldots,t_m\).
The signed change in slope at \(t_j\) is
\begin{equation}
 \Delta_j=2\gamma_j|\beta_j|\neq0.                      \label{eq:supp-base-slope-jump}
\end{equation}
Lemma~\ref{lem:supp-univariate-critical-points} therefore recovers the
unordered pairs \((t_j,\Delta_j)\).

It remains to recover the vector \(n_j\) associated with each critical point.
For each \(r\in[d]\), the algorithm samples two independent nonzero shifts
\(\epsilon_{r,1},\epsilon_{r,2}\) from absolutely continuous
distributions and queries
\[
 \phi_{r,\nu}(t)
 =g_1(a+\epsilon_{r,\nu}e_r+tb),
 \qquad \nu\in\{1,2\}.
\]
The critical point corresponding to ReLU unit \(j\) on the line shifted by
\(\epsilon\) is
\begin{equation}
 t_j(\epsilon)
 =t_j+\kappa_{jr}\epsilon,
 \qquad
 \kappa_{jr}=-\frac{(n_j)_r}{n_j^\top b}.               \label{eq:supp-critical-point-trajectory}
\end{equation}
Two shifted critical-point locations can coincide only at an isolated value
of \(\epsilon\), because their values at \(\epsilon=0\) are distinct.
Hence both shifted restrictions have \(m\) distinct critical points with
probability one, and
Lemma~\ref{lem:supp-univariate-critical-points} recovers their two unordered
sets of critical points.

To match the shifted critical points to the base critical points, enumerate the
finitely many pairs of bijections \((\pi_1,\pi_2)\) and retain those
satisfying
\begin{equation}
 \frac{t_{\pi_1(j)}(\epsilon_{r,1})-t_j}{\epsilon_{r,1}}
 =
 \frac{t_{\pi_2(j)}(\epsilon_{r,2})-t_j}{\epsilon_{r,2}}
 \quad\text{for every }j\in[m].                         \label{eq:supp-trajectory-match}
\end{equation}
The true matching satisfies this identity. For a false assignment, let
\(j_1\) and \(j_2\) be the indices assigned to the base critical point
\(t_j\) in the two shifted restrictions. The required equality becomes
\[
 \frac{t_{j_1}-t_j}{\epsilon_{r,1}}+\kappa_{j_1r}
 =
 \frac{t_{j_2}-t_j}{\epsilon_{r,2}}+\kappa_{j_2r}.
\]
As an identity in the two independent continuous shifts, this can hold
only when \(j_1=j_2=j\); otherwise, distinctness of the base critical points
leaves a nonzero coefficient of \(1/\epsilon_{r,1}\) or
\(1/\epsilon_{r,2}\).  There are finitely many false assignments.
Therefore the true matching is unique with probability one, and
\eqref{eq:supp-trajectory-match} recovers every \(\kappa_{jr}\).

Let
\(\kappa_j=(\kappa_{j1},\ldots,\kappa_{jd})^\top\).  By
\eqref{eq:supp-critical-point-trajectory},
\[
 \kappa_j=-\frac{n_j}{n_j^\top b},
 \qquad
 \|\kappa_j\|_2=\frac1{|n_j^\top b|}.
\]
Normalize \(\kappa_j\) and apply the fixed orientation convention that
the first nonzero coordinate is positive.  This returns \(n_j\).
Equations~\eqref{eq:supp-base-slope-jump} and
\eqref{eq:supp-critical-point-trajectory} then give
\[
 \gamma_j=\frac{\Delta_j\|\kappa_j\|_2}{2}.
\]
Finally,
\(\operatorname{ReLU}(z)=(|z|+z)/2\) proves
\eqref{eq:supp-one-representation}.

The procedure applies Lemma~\ref{lem:supp-univariate-critical-points} to
\(1+2d\) univariate restrictions. Each application terminates after
finitely many queries with probability one, and all other
exceptional events form a finite union of zero sets of nonzero
polynomials. Thus the complete procedure terminates with probability one
after finitely many exact Transformer queries on length-one inputs.
\end{proof}

\begin{remark}[No uniform query bound]
\label{rem:supp-length-one-query-bound}
The expanding searches in
Lemma~\ref{lem:supp-univariate-critical-points} must accommodate
model-dependent critical-point locations and separations. The proposition
therefore guarantees termination after finitely many queries with
probability one, but it does not give a uniform query bound over all
admissible models. Known location and separation bounds would
turn the expanding search into an explicit finite grid, but such bounds
are not assumed in Theorem~\ref{thm:full-transformer}.
\end{remark}

%% file: supplementary/sections/06_02_sign_pattern_spanning_scale_search.tex
\subsection{Constructing a Functionally Equivalent Transformer}
\label{supp:sign-patterns}

We now use the recovered quantities to construct a Transformer that
computes the same function as the original model. The pairs
\((W_h,v_h)\) and all quantities obtained from the preceding
critical-point recovery are already known. It remains to recover how each
attention head contributes to each ReLU unit. We first choose input vectors
whose ReLU sign patterns are linearly independent, then recover these
head-specific contributions from scaled repeated-token queries, and finally
assemble them into a functionally equivalent Transformer.

\paragraph{Linearly independent sign patterns.}
For \(q\) outside the hyperplanes
\(\{x:n_j^\top x=0\}\), define
\[
 \sigma(q)
 =
 \bigl(\operatorname{sign}(n_1^\top q),\ldots,
       \operatorname{sign}(n_m^\top q)\bigr)
 \in\{-1,+1\}^m.
\]
A sign vector in the image of this map is called a \emph{realized sign
pattern}.  Let \(\mathcal S\) be the set of all realized sign patterns.

\begin{lemma}[Spanning by realized sign patterns]
\label{lem:supp-sign-pattern-span}
If \(n_1,\ldots,n_m\) are nonzero and pairwise nonproportional, then
\[
 \operatorname{span}(\mathcal S)=\mathbb R^m.
\]
In particular, there are \(m\) realized sign patterns that form the rows
of a nonsingular \(m\times m\) matrix, including when \(m>d\).
\end{lemma}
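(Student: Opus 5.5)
We argue by duality, following the hint in the main-text sketch: crossing a single hyperplane changes the sign pattern in exactly one coordinate. Suppose $\operatorname{span}(\mathcal S)\ne\mathbb R^m$. Then there is a nonzero $\lambda\in\mathbb R^m$ with $\lambda^\top\sigma=0$ for every $\sigma\in\mathcal S$. We show that every coordinate $\lambda_j$ must vanish by exhibiting, for each $j$, two realized patterns $\sigma,\sigma'\in\mathcal S$ with $\sigma-\sigma'=2e_j$. Then $0=\lambda^\top(\sigma-\sigma')=2\lambda_j$, a contradiction. The same argument shows directly that $\mathcal S$ together with these differences spans, hence $2e_j\in\operatorname{span}(\mathcal S)$ for all $j$.

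The key step is to fix $j$ and find a point $x_0$ on the hyperplane $\{x:n_j^\top x=0\}$ that lies off every other hyperplane $\{x:n_k^\top x=0\}$ with $k\ne j$. We argue inside the $(d-1)$-dimensional subspace $n_j^\perp$. For $k\ne j$, the set $n_j^\perp\cap n_k^\perp$ is a proper subspace of $n_j^\perp$, since otherwise $n_j^\perp\subseteq n_k^\perp$, which would force $n_k\parallel n_j$. This is exactly where pairwise nonproportionality is used. A finite union of proper subspaces cannot cover $n_j^\perp$, so a suitable $x_0$ exists. (When $d=1$, all $n_k$ are proportional, so $m=1$ and the claim is trivial; if $d=1$ and $m\ge2$, the hypothesis already fails.)

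Now consider $x_\pm=x_0\pm\epsilon n_j$ for small $\epsilon>0$. Then $n_j^\top x_\pm=\pm\epsilon\|n_j\|^2$, which has opposite signs at the two points. For $k\ne j$, the value $n_k^\top x_\pm$ keeps the sign of $n_k^\top x_0\ne0$ once $\epsilon<\min_{k\ne j}|n_k^\top x_0|/(\|n_k\|\|n_j\|)$. Both $x_\pm$ therefore avoid all the hyperplanes, so $\sigma(x_+),\sigma(x_-)\in\mathcal S$, and they differ only in coordinate $j$, giving $\sigma(x_+)-\sigma(x_-)=2e_j$.

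This shows $e_j\in\operatorname{span}(\mathcal S)$ for all $j$, so $\operatorname{span}(\mathcal S)=\mathbb R^m$. Since $\mathcal S$ is finite and spans $\mathbb R^m$, some $m$ of its elements are linearly independent, and these form the rows of a nonsingular matrix. Nothing in the argument relates $m$ to $d$, so the conclusion holds for $m>d$. The only substantive step is finding the generic point $x_0$ on a single hyperplane, and nonproportionality handles it.
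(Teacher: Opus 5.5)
Your proposal is correct and follows essentially the same route as the paper: pick a point on $\{x:n_j^\top x=0\}$ off all other hyperplanes, perturb it to both sides, and obtain two realized patterns differing by $\pm 2e_j$. Your choice of perturbation direction $n_j$ with an explicit bound on $\epsilon$ is a concrete instance of the paper's arbitrary $r_j$ with $n_j^\top r_j\neq 0$. The duality framing is harmless but unnecessary, since $e_j\in\operatorname{span}(\mathcal S)$ follows directly.
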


\begin{proof}
Fix \(j\in[m]\), and let
\(\mathcal H_j=\{x:n_j^\top x=0\}\).  Since the normals are pairwise
nonproportional, the central hyperplanes \(\mathcal H_1,\ldots,\mathcal
H_m\) are distinct.  The finite union of proper subspaces
\[
 \bigcup_{k\neq j}(\mathcal H_j\cap\mathcal H_k)
\]
cannot cover \(\mathcal H_j\).  Choose
\[
 x_j\in
 \mathcal H_j\setminus\bigcup_{k\neq j}\mathcal H_k
\]
and choose \(r_j\) such that \(n_j^\top r_j\neq0\).  For all sufficiently
small \(\epsilon>0\), the points \(x_j+\epsilon r_j\) and
\(x_j-\epsilon r_j\) have the same sign against every \(n_k\) with
\(k\neq j\), and opposite signs against \(n_j\).  Their realized sign
patterns therefore differ by \(2e_j\) or \(-2e_j\), where \(e_j\) is the
\(j\)-th standard basis vector of \(\mathbb R^m\).  Thus
\(e_j\in\operatorname{span}(\mathcal S)\) for every \(j\), proving the
claim.
\end{proof}

The construction can be implemented under the computational assumptions
of Section~\ref{sec:problem-formulation}.
When \(d=1\), pairwise nonproportionality implies \(m=1\), and any
nonzero \(q_1\) gives a nonsingular \(1\times1\) sign matrix.  When
\(d\ge2\), for each \(j\), sample a random point in \(H_j\) from a
continuous distribution on \(H_j\), perturb it to the two sides of that
hyperplane, collect the resulting sign patterns, and use exact Gaussian
elimination to select \(m\) independent patterns. For each selected pattern, choose a point \(q_\ell\) that realizes it,
and define the matrix \(\Sigma\in\{-1,+1\}^{m\times m}\) by
\begin{equation}
 \Sigma_{\ell j}
 =\operatorname{sign}(n_j^\top q_\ell),
 \qquad \ell,j\in[m].
 \label{eq:supp-sign-matrix}
\end{equation}
By construction, \(\Sigma\) is nonsingular.

The points \(q_\ell\) may also be chosen to satisfy the following two conditions.  First,
\begin{equation}
 (W_h-W_g)q_\ell\neq0
 \quad\text{for every }h\neq g.                         \label{eq:supp-q-head-separation}
\end{equation}
For fixed \(h\neq g\), the excluded set is a proper subspace because
\(W_h\neq W_g\).  Second, writing
\[
 Z(\sigma)=\sum_{j=1}^m\gamma_j\sigma_jn_j
\]
as in Section~\ref{sec:relu-extension}, choose \(q_\ell\) so that
\begin{equation}
 q_\ell^\top\{Z(\tau)-Z(\sigma(q_\ell))\}\neq0
 \quad
 \text{for every }
 \tau\in\{-1,+1\}^m\setminus\{\sigma(q_\ell)\}.
 \label{eq:supp-z-separation}
\end{equation}
The injectivity of \(Z\) makes every vector in braces nonzero. For each
\(\ell\), the set of points \(q\) satisfying
\(\operatorname{sign}(n_j^\top q)=\Sigma_{\ell j}\) for every \(j\) is
nonempty and open. The finitely many proper subspaces and hyperplanes
excluded by \eqref{eq:supp-q-head-separation} and
\eqref{eq:supp-z-separation} cannot cover this set. Sampling from a
continuous distribution supported on it therefore produces a valid
\(q_\ell\) with probability one.

\paragraph{Recovering head contributions to ReLU units.}
To describe the remaining unknown contributions, recall the orientation
signs \(\xi_j\) from
Appendix~\ref{supp:transformer}.  Define
\begin{equation}
 d_{hj}
 =\frac{\xi_jb_{hj}}{\|\bar b_j\|_2},
 \qquad
 p_{hj}
 =\gamma_jd_{hj}
 =\xi_jw_jb_{hj}.                                      \label{eq:supp-canonical-blocks}
\end{equation}
These definitions imply
\begin{equation}
 \sum_{h=1}^H d_{hj}=n_j,
 \qquad
 \sum_{h=1}^H p_{hj}=\gamma_jn_j.                      \label{eq:supp-block-sums}
\end{equation}
Although the individual signs \(\xi_j\) need not be recovered, the
vectors \(p_{hj}\) will be recovered directly from even Transformer
responses.

For each \(\ell\in[m]\), choose \(d\) linearly independent directions
\(u_{\ell1},\ldots,u_{\ell d}\) such that
\begin{equation}
 u_{\ell r}^\top W_hq_\ell
 \neq
 u_{\ell r}^\top W_gq_\ell
 \quad
 \text{for every }r\in[d]\text{ and }h\neq g.           \label{eq:supp-projected-head-separation}
\end{equation}
Such directions can be sampled without additional oracle queries because the
\(W_h\) and \(q_\ell\) have already been recovered.  For each fixed
triple \((\ell,h,g)\), the excluded directions form a proper hyperplane
by \eqref{eq:supp-q-head-separation}; linear dependence of the \(d\)
directions is also a measure-zero event.

For \(\nu\in[H]\) and a scale \(T>0\), query the sequence
\begin{equation}
 X_{\ell r\nu}(T)
 =
 \left[
  (Tq_\ell+u_{\ell r}/T)^\top;
  \underbrace{(Tq_\ell)^\top;\ldots;(Tq_\ell)^\top}_{\nu\ \mathrm{copies}}
 \right].                                               \label{eq:supp-outer-query}
\end{equation}
The final token is one of the \(\nu\) copies of \(Tq_\ell\). Let
\[
 \eta_{\ell rh}=u_{\ell r}^\top W_hq_\ell,
 \qquad
 r_{\ell rh}=\exp(\eta_{\ell rh}),
 \qquad
 \theta_{\ell r\nu h}
 =\frac{r_{\ell rh}}{\nu+r_{\ell rh}}.
\]
The factor \(T\) cancels from the difference between the attention logits
of the perturbed token and a repeated token. Direct substitution into the
attention formula
therefore yields
\begin{equation}
 y_h(X_{\ell r\nu}(T))
 =Tq_\ell+\frac{\theta_{\ell r\nu h}}{T}u_{\ell r}.    \label{eq:supp-outer-summary}
\end{equation}

Define
\[
 \alpha_{\ell j}=n_j^\top q_\ell,
 \qquad
 \beta_{\ell r\nu j}
 =\sum_{h=1}^H
   \theta_{\ell r\nu h}d_{hj}^\top u_{\ell r}.
\]
After orienting and normalizing the scalar input to the \(j\)-th ReLU unit,
we obtain
\begin{equation}
 \frac{\xi_j z_j(X_{\ell r\nu}(T))}
      {\|\bar b_j\|_2}
 =
 T\alpha_{\ell j}+\frac{\beta_{\ell r\nu j}}{T}.      \label{eq:supp-scaled-preactivation}
\end{equation}
Because \(\alpha_{\ell j}\neq0\), its sign eventually equals
\(\Sigma_{\ell j}\) as \(T\) increases.

\begin{lemma}[Recovery when the ReLU signs are stable]
\label{lem:supp-recovery-at-scale}
Suppose that, at a scale \(T\), every quantity in
\eqref{eq:supp-scaled-preactivation} has sign
\(\Sigma_{\ell j}\).  Then the even responses at that scale recover every
vector \(p_{hj}\).
\end{lemma}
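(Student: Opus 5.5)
The plan is to write each even response as an explicit linear function of the unknown vectors $p_{hj}$. Its coefficients will involve only quantities that are already known: the recovered pairs $(W_h,v_h)$, the pairs $(n_j,\gamma_j)$ from Proposition~\ref{prop:supp-length-one-recovery}, the sign matrix $\Sigma$, the chosen $q_\ell$, $u_{\ell r}$, and $T$. The unknowns are then removed in three successive invertible linear steps: over $\nu$, then over $r$, then over $\ell$.

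\textbf{Step 1 (linearize the even response).} For $X=X_{\ell r\nu}(T)$, \eqref{eq:supp-tf-even} gives
\[
\mathrm{TF}(X)+\mathrm{TF}(-X)=\sum_{j=1}^m w_j|z_j(X)|.
\]
By \eqref{eq:supp-scaled-preactivation},
\[
z_j=\xi_j\|\bar b_j\|_2\bigl(T\alpha_{\ell j}+\beta_{\ell r\nu j}/T\bigr).
\]
The hypothesis says the bracketed quantity has sign $\Sigma_{\ell j}$, so
\[
|z_j|=\|\bar b_j\|_2\,\Sigma_{\ell j}\bigl(T\alpha_{\ell j}+\beta_{\ell r\nu j}/T\bigr).
\]
Using $\gamma_j=w_j\|\bar b_j\|_2$ and $\gamma_jd_{hj}=p_{hj}$, the even response becomes
\[
E_{\ell r\nu}=T\sum_j\gamma_j\Sigma_{\ell j}\alpha_{\ell j}+\frac1T\sum_{h=1}^H\theta_{\ell r\nu h}\,u_{\ell r}^\top P_{\ell h},
\qquad
P_{\ell h}\coloneqq\sum_{j=1}^m\Sigma_{\ell j}\,p_{hj}.
\]
The first term is known, since $\alpha_{\ell j}=n_j^\top q_\ell$. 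The weights $\theta_{\ell r\nu h}$ are also known, because $\eta_{\ell rh}=u_{\ell r}^\top W_hq_\ell$ is computed from the recovered $W_h$. Subtracting the first term and multiplying by $T$ therefore yields the known scalars
\[
y_{\ell r\nu}=\sum_h\theta_{\ell r\nu h}\,u_{\ell r}^\top P_{\ell h}.
\]

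\textbf{Step 2 (separate heads, then directions).} Fix $\ell$ and $r$, and consider the $H\times H$ matrix $[\theta_{\ell r\nu h}]_{\nu,h}=[r_h/(\nu+r_h)]$. After factoring out $\operatorname{diag}(r_h)$, it is a Cauchy matrix in the distinct nodes $\nu=1,\dots,H$ and the distinct positive values $r_{\ell rh}$; the $r_{\ell rh}$ are distinct by \eqref{eq:supp-projected-head-separation}. It is therefore nonsingular, as in the Cauchy-determinant argument behind \eqref{eq:supp-stab-gamma-c-lower}. Solving this system over $\nu$ returns $u_{\ell r}^\top P_{\ell h}$ for every $h$, with head labels fixed by the already recovered labeling of the $W_h$. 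Since $u_{\ell1},\dots,u_{\ell d}$ are linearly independent, inverting the matrix with rows $u_{\ell r}^\top$ recovers each vector $P_{\ell h}\in\mathbb R^d$.

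\textbf{Step 3 (undo the sign patterns).} For each head $h$, the relations $P_{\ell h}=\sum_j\Sigma_{\ell j}p_{hj}$ for $\ell\in[m]$ form the matrix identity
\[
[P_{1h},\dots,P_{mh}]^\top=\Sigma\,[p_{h1},\dots,p_{hm}]^\top.
\]
Here $\Sigma$ is nonsingular by Lemma~\ref{lem:supp-sign-pattern-span} and the construction in \eqref{eq:supp-sign-matrix}, so
\[
p_{hj}=\sum_\ell(\Sigma^{-1})_{j\ell}P_{\ell h}.
\]
This recovers every $p_{hj}$.

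\textbf{Main obstacle.} The only delicate point is Step 1: identifying $|z_j|$ with the signed linear expression using $\Sigma_{\ell j}$ rather than the unknown orientation $\xi_j$. This works because $\xi_j$ appears both in $z_j$ and in $d_{hj}$, so it cancels and the coefficients stay known even though $\xi_j$ is never recovered. I would also check that head labels remain consistent across the different $(\ell,r)$: they are inherited from the recovered $W_h$ through the known $\theta$ coefficients, so no new matching step is needed.
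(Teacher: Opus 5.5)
Your proposal is correct and follows essentially the same route as the paper's proof: subtract the known leading term, multiply by $T$, invert the Cauchy matrix over $\nu$, use the linear independence of the $u_{\ell r}$, and finally apply $\Sigma^{-1}$. The only cosmetic difference is that the paper writes the subtracted term as $Tg_1(q_\ell)$, which equals your $T\sum_j\gamma_j\Sigma_{\ell j}\alpha_{\ell j}$ because $\Sigma_{\ell j}\alpha_{\ell j}=|n_j^\top q_\ell|$.
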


\begin{proof}
For each outer query, define the even response
\[
 E_{\ell r\nu}(T)
 =
 \mathrm{TF}(X_{\ell r\nu}(T))
 +\mathrm{TF}(-X_{\ell r\nu}(T)).
\]
The function \(g_1\) is already known from
Appendix~\ref{supp:critical-points}. Under the stated sign condition,
\eqref{eq:supp-tf-even},
\eqref{eq:supp-scaled-preactivation}, and
\eqref{eq:supp-canonical-blocks} give
\begin{align}
 Y_{\ell r\nu}(T)
 &\coloneqq
 T\{E_{\ell r\nu}(T)-Tg_1(q_\ell)\}
 =\sum_{h=1}^H
 \theta_{\ell r\nu h}u_{\ell r}^\top p_h^{(\ell)},
 \label{eq:supp-outer-linear-system}\\
 p_h^{(\ell)}
 &\coloneqq
 \sum_{j=1}^m\Sigma_{\ell j}p_{hj}.
 \label{eq:supp-sign-combination}
\end{align}

For fixed \((\ell,r)\), let
\[
 \Theta_{\ell r}[\nu,h]
 =
 \frac{r_{\ell rh}}{\nu+r_{\ell rh}},
 \qquad \nu,h\in[H].
\]
The numbers \(r_{\ell r1},\ldots,r_{\ell rH}\) are positive and pairwise
distinct by \eqref{eq:supp-projected-head-separation}.  Factoring
\(r_{\ell rh}\) from column \(h\) leaves the Cauchy matrix
\([1/(\nu+r_{\ell rh})]_{\nu,h}\), which is nonsingular. Inverting
\(\Theta_{\ell r}\) in \eqref{eq:supp-outer-linear-system} recovers
\(u_{\ell r}^\top p_h^{(\ell)}\) for every \(h\). Since
\(u_{\ell1},\ldots,u_{\ell d}\) are linearly independent, these \(d\)
directional measurements recover each vector \(p_h^{(\ell)}\).

For a fixed head \(h\), form \(\widetilde P_h\in\mathbb R^{m\times d}\)
whose \(\ell\)-th row is \((p_h^{(\ell)})^\top\), and form
\(P_h\in\mathbb R^{m\times d}\) whose \(j\)-th row is \(p_{hj}^\top\).
Equation~\eqref{eq:supp-sign-combination} becomes
\[
 \widetilde P_h=\Sigma P_h.
\]
The nonsingularity of \(\Sigma\) gives
\(P_h=\Sigma^{-1}\widetilde P_h\), recovering all \(p_{hj}\).
\end{proof}

\paragraph{Finding a scale with stable ReLU signs.}
For every \((\ell,r,\nu,j)\), the quantity in
\eqref{eq:supp-scaled-preactivation} has the form
\(T\alpha_{\ell j}+\beta_{\ell r\nu j}/T\), where
\(\operatorname{sign}(\alpha_{\ell j})=\Sigma_{\ell j}\). Its sign is therefore
\(\Sigma_{\ell j}\) for all sufficiently large \(T\). The required value
of \(T\), however, depends on the unknown \(\beta_{\ell r\nu j}\). We therefore
increase \(T\) until two consecutive reconstructions agree exactly.

At every evaluated \(T>0\), the algorithm applies the same linear-algebra
steps as in Lemma~\ref{lem:supp-recovery-at-scale}, regardless of the
signs. It arranges the resulting vectors in a fixed order and denotes their
concatenation by \(\mathcal C(T)\). If all signs equal their corresponding
\(\Sigma_{\ell j}\), the components of \(\mathcal C(T)\) are exactly the
vectors \(p_{hj}\) in that order.

\begin{proposition}[Finite search for stable ReLU signs]
\label{prop:supp-scale-search}
Assume that \(Z:\{-1,+1\}^m\to\mathbb R^d\) is one-to-one. Let
\(\lambda\in(1,2)\) and \(\rho\in(3/2,2)\) be independent random variables
drawn from continuous distributions, and define
\[
 T_k=\lambda\rho^k,\qquad k=0,1,\ldots.
\]
The algorithm evaluates these scales in order and stops at the first
\(k\geq1\) for which
\(\mathcal C(T_{k-1})=\mathcal C(T_k)\). With probability one, it stops
after finitely many evaluations and only after every quantity in
\eqref{eq:supp-scaled-preactivation} has sign \(\Sigma_{\ell j}\).
\end{proposition}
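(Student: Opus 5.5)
The plan is to study the computed vector $\mathcal C(T)$ as an explicit function of $T$. I would show three things: it is piecewise of the form $aT^2+b$, there are finitely many pieces, the last piece is constant and equal to the true $p_{hj}$, and every other piece has a nonzero $T^2$ coefficient. A probability-zero argument over $(\lambda,\rho)$ then rules out a false stop.

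\textbf{Finitely many sign regimes and eventual stopping.} For each $(\ell,r,\nu,j)$, the sign of $T\alpha_{\ell j}+\beta_{\ell r\nu j}/T$ is the sign of $T^2\alpha_{\ell j}+\beta_{\ell r\nu j}$. Since $\alpha_{\ell j}\neq0$, this changes at most once on $(0,\infty)$. Hence $(0,\infty)$ splits into finitely many intervals on which the full sign configuration $\tau$ is constant, and the last interval $(T^\ast,\infty)$ carries the correct configuration $\Sigma$. Because $T_k\to\infty$, there is $k_0$ with $T_{k-1},T_k>T^\ast$ for all $k\ge k_0$. For such $k$, Lemma~\ref{lem:supp-recovery-at-scale} gives $\mathcal C(T_{k-1})=\mathcal C(T_k)=(p_{hj})$, so the search stops after finitely many evaluations.

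\textbf{Structure on a wrong regime.} Fix a configuration $\tau$ and write $\sigma'\in\{-1,+1\}^m$ for the signs it assigns to one outer query $X_{\ell r\nu}(T)$. Using \eqref{eq:supp-tf-even} and \eqref{eq:supp-scaled-preactivation}, the even response on that regime is
\[
E_{\ell r\nu}(T)=T\,q_\ell^\top Z(\sigma')+\frac1T\sum_j\gamma_j\sigma'_j\beta_{\ell r\nu j}.
\]
Since $g_1(q_\ell)=q_\ell^\top Z(\sigma(q_\ell))$, this gives
\[
Y_{\ell r\nu}(T)=T^2\,q_\ell^\top\{Z(\sigma')-Z(\sigma(q_\ell))\}+(\text{a constant}).
\]
By \eqref{eq:supp-z-separation}, the $T^2$ coefficient is nonzero whenever $\sigma'\neq\Sigma_{\ell\cdot}=\sigma(q_\ell)$. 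The map from the vector of all $Y_{\ell r\nu}$ to $\mathcal C$ is a fixed invertible linear map, namely inversion of the Cauchy-type matrices $\Theta_{\ell r}$, of the directions $u_{\ell r}$, and of $\Sigma$. Therefore on the regime of $\tau$ we have $\mathcal C(T)=a_\tau T^2+b_\tau$, with $a_\tau\neq0$ if $\tau\neq\Sigma$ and $a_\Sigma=0$.

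\textbf{Excluding a false stop (main obstacle).} Suppose the search stops at $k$ while $T_{k-1}$ or $T_k$ lies in a wrong regime. Then, for some pair of configurations $(\tau,\tau')$, not both correct,
\[
\lambda^2\rho^{2k-2}\,(a_\tau-\rho^2a_{\tau'})+(b_\tau-b_{\tau'})=0.
\]
There are countably many triples $(k,\tau,\tau')$, so it suffices to show each such equation has probability zero. The regimes are determined by the model alone, not by $(\lambda,\rho)$, so we may drop the constraint that $T_{k-1},T_k$ actually fall in the named regimes and bound the probability of the equation itself. Fix $\rho$. The left side is a polynomial in $\lambda$, and it vanishes identically only if $a_\tau=\rho^2a_{\tau'}$ and $b_\tau=b_{\tau'}$. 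If exactly one configuration is correct, then one of $a_\tau,a_{\tau'}$ is zero and the other is not, so this identity is impossible. If both are wrong, then $a_{\tau'}\neq0$, and $a_\tau=\rho^2a_{\tau'}$ fixes $\rho^2$ to at most one value, which has probability zero because $\rho$ is continuous. Otherwise $\lambda$ must be a root of a nonzero polynomial, which also has probability zero since $\lambda$ is continuous and independent of $\rho$. The case $\tau=\tau'\neq\Sigma$ is covered by the both-wrong case, where it reduces to $\rho^2=1$, which is excluded.

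I expect this step to be the delicate one. The key ingredients are that $a_\tau\neq0$ for every wrong $\tau$, which is exactly where the injectivity of $Z$ and \eqref{eq:supp-z-separation} are used, and that the finitely many regime boundaries do not depend on the random $\lambda,\rho$. Taking a countable union of null events then shows that, with probability one, any stop occurs only when both $T_{k-1}$ and $T_k$ lie in the correct regime. This is the claim.
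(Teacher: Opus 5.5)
Your proof is correct and follows essentially the same route as the paper: on each sign regime $\mathcal C(T)=T^2\mathcal U+\mathcal V$, with $\mathcal U\neq0$ off the correct regime by \eqref{eq:supp-z-separation}. You usefully make explicit that this step uses the injectivity of the fixed linear map from the $Y_{\ell r\nu}$ to $\mathcal C$. Stopping is eventual because the last regime is correct, and premature stops are excluded by a null-set argument in $\rho$ and then $\lambda$ over the countably many triples $(k,\tau,\tau')$. The one detail to add is the event that some evaluated $T_k$ lands exactly on a zero of a preactivation. There your open-interval regimes do not apply and no sign equals $\Sigma_{\ell j}$. The paper excludes this explicitly, and it is a null event because $\lambda$ is continuous, there are finitely many zeros, and there are countably many $k$.
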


\begin{proof}
Let \(\tau_{\ell r\nu}(T)\in\{-1,+1\}^m\) record the signs of the \(m\)
quantities in \eqref{eq:supp-scaled-preactivation}. Direct expansion of
the even response gives
\begin{align}
 Y_{\ell r\nu}(T)
 &=
 T^2 A_{\ell r\nu}(T)+B_{\ell r\nu}(T),                \label{eq:supp-scale-affine}\\
 A_{\ell r\nu}(T)
 &=
 q_\ell^\top
 \{Z(\tau_{\ell r\nu}(T))-Z(\sigma(q_\ell))\}.        \label{eq:supp-scale-leading}
\end{align}
As long as these signs do not change, both \(A_{\ell r\nu}(T)\) and
\(B_{\ell r\nu}(T)\) are constant. By
\eqref{eq:supp-z-separation}, \(A_{\ell r\nu}(T)=0\) exactly when
\(\tau_{\ell r\nu}(T)=\sigma(q_\ell)\).

The linear-algebra operations used to form \(\mathcal C(T)\) do not depend
on \(T\). Hence, while the complete collection of signs remains unchanged,
\begin{equation}
 \mathcal C(T)=T^2\mathcal U+\mathcal V,                \label{eq:supp-candidate-scale}
\end{equation}
for some constant vectors \(\mathcal U\) and \(\mathcal V\). Moreover,
\(\mathcal U\neq0\) whenever at least one sign differs from its
corresponding \(\Sigma_{\ell j}\). Once all signs are correct,
\(\mathcal U=0\), and \(\mathcal C(T)\) is the true collection of
\(p_{hj}\), independent of \(T\).

There are finitely many expressions in
\eqref{eq:supp-scaled-preactivation}.  Each has the form
\(Ta+b/T\) with \(a\neq0\), and hence has at most one positive zero.
Beyond the largest such zero, all signs equal their corresponding
\(\Sigma_{\ell j}\). For each expression and each \(k\), the probability
that \(T_k\) is exactly its zero is zero. Since there are finitely many
expressions and countably many evaluated scales, with probability one no
evaluated \(T_k\) is a zero. Eventually two consecutive evaluations
therefore return the same true collection of \(p_{hj}\), so the rule
terminates.

It remains to exclude premature termination. For a fixed \(k\) and fixed
collections of signs at \(T_{k-1}\) and \(T_k\), the two instances of
\eqref{eq:supp-candidate-scale} are
\[
 T_{k-1}^2\mathcal U+\mathcal V
 \quad\text{and}\quad
 T_k^2\mathcal U'+\mathcal V'.
\]
If at least one sign is wrong at either scale, then at least one of
\(\mathcal U,\mathcal U'\) is nonzero. Equality of the two reconstructions
would require
\begin{equation}
 \lambda^2\rho^{2k-2}(\mathcal U-\rho^2\mathcal U')
 +(\mathcal V-\mathcal V')=0.                          \label{eq:supp-premature-equality}
\end{equation}
There are only finitely many possible collections of signs. A continuously
sampled \(\rho\) avoids every value for which
\(\mathcal U-\rho^2\mathcal U'=0\) when at least one of the two vectors is
nonzero. Conditional on such a \(\rho\),
\eqref{eq:supp-premature-equality} has at most one positive solution for
\(\lambda\). Taking the union over the countably many \(k\) and the
finitely many possible sign collections still gives a probability-zero
event. Thus the rule does not stop prematurely with probability one.
\end{proof}

\paragraph{Constructing the functionally equivalent Transformer.}
After the scale search, define
\[
 t_j(X)=\sum_{h=1}^Hp_{hj}^\top y_h(X),
 \qquad
 L(X)=\sum_{h=1}^Hv_h^\top y_h(X).
\]
By \eqref{eq:supp-canonical-blocks},
\[
 t_j(X)=\xi_jw_jz_j(X).
\]
Equations~\eqref{eq:supp-tf-odd} and
\eqref{eq:supp-tf-even} therefore imply, for every input sequence \(X\),
\begin{equation}
 \mathrm{TF}(X)
 =
 \frac12\sum_{j=1}^m
 \operatorname{sign}(\gamma_j)|t_j(X)|
 +\frac12L(X).                                         \label{eq:supp-equivalent-transformer}
\end{equation}
This representation is independent of the unknown orientation signs
\(\xi_j\).

An explicit bias-free ReLU Transformer realizing
\eqref{eq:supp-equivalent-transformer} uses the recovered matrices
\(W_h\) and, for each head \(h\), the \(2m+2\) columns
\[
 p_{h1},-p_{h1},\ldots,p_{hm},-p_{hm},v_h,-v_h.
\]
The two columns associated with unit \(j\) both receive output weight
\(\operatorname{sign}(\gamma_j)/2\); the last two columns receive output
weights \(1/2\) and \(-1/2\), respectively.  Since
\(|z|=\operatorname{ReLU}(z)+\operatorname{ReLU}(-z)\) and
\(z=\operatorname{ReLU}(z)-\operatorname{ReLU}(-z)\), this Transformer
has width at most \(2m+2\) and agrees with \(\mathrm{TF}\) on every
input.

For completeness, let \(Q_1\) be the model-dependent
finite number of Transformer queries used in
Proposition~\ref{prop:supp-length-one-recovery}, and let
\(R_{\mathrm{scale}}\) be the number of evaluated scales.  The effective
attention stage uses \(2(4Hd^2-2H+1)\) Transformer queries. At each
evaluated scale there are \(m\) choices of \(q_\ell\), \(d\) directions,
and \(H\)
multiplicities, and each even response uses the pair \(X,-X\).
Therefore the complete query count is
\[
 Q_1
 +2(4Hd^2-2H+1)
 +2mHdR_{\mathrm{scale}}.
\]
Both \(Q_1\) and \(R_{\mathrm{scale}}\) are finite with
probability one but have no uniform upper bound under the stated
assumptions. The scaled queries have length at most \(H+1\), while the
effective attention stage has length at most \(2H+1\).  This proves all
claims of Theorem~\ref{thm:full-transformer}.

%% file: supplementary/sections/07_numerical_implementation_experiments.tex
\section{Numerical Implementation and Experiments}
\label{supp:experiments}

The guarantee in Theorem~\ref{thm:main} assumes exact scalar oracle
responses and exact arithmetic, as specified in
Section~\ref{sec:problem-formulation}. This section describes our numerical
implementation and reports three finite-precision experiments.

First, we run Algorithm~\ref{alg:main} using 180-significant-digit scalar
responses and 180-digit offline arithmetic. Second, we add controlled
perturbations to the scalar responses and measure the resulting parameter
error. Third, we compare 180-digit and IEEE~754 binary64 responses while
keeping the model, query directions, query inputs, \(t=1\), and offline
arithmetic fixed. We perform this comparison both for
Algorithm~\ref{alg:main} and for a variant that directly queries all
required one-token outputs.

These experiments provide empirical validation of
Algorithm~\ref{alg:main} on the sampled models and measure how errors in
the scalar responses affect the recovered parameters.  Success with
180-significant-digit scalar responses and 180-digit offline arithmetic
on these models does not imply that the same precision suffices for
every model.

%% file: supplementary/sections/07_01_numerical_realization_choice_t.tex
\subsection{Numerical Realization and Choice of $t$}
\label{supp:numerical}

\paragraph{Rational interpolation.}
For each local decoder \(D_t(u,q)\), the implementation first forms
\[
 \mathcal R(m)
 =
 F_M(X_m^{(t)}(u,q))-F_M([q^\top]),
 \qquad m=1,\ldots,2H.
\]
Write
\[
 \mathcal P(z)=\sum_{\ell=0}^{H-1}p_\ell z^\ell,
 \qquad
 \mathcal Q(z)=z^H+\sum_{\ell=0}^{H-1}q_\ell z^\ell.
\]
The interpolation equations
\(\mathcal P(m)=\mathcal R(m)\mathcal Q(m)\) become
\begin{equation}
 \begin{aligned}
  \sum_{\ell=0}^{H-1}p_\ell m^\ell
  -\mathcal R(m)\sum_{\ell=0}^{H-1}q_\ell m^\ell
  =
  \mathcal R(m)m^H, \quad m=1,\ldots,2H.
 \end{aligned}
 \label{eq:supp-numerical-system}
\end{equation}
Thus, for each local decoder, the implementation solves one \(2H\)-dimensional linear system for the coefficients of \(\mathcal P\) and \(\mathcal Q\).
The numerical experiments solve this system in multiprecision arithmetic.

\paragraph{Companion-matrix roots and residues.}
For the monic denominator
\(\mathcal Q(z)=z^H+q_{H-1}z^{H-1}+\cdots+q_0\), form the companion
matrix
\[
 C_{\mathcal Q}
 =
 \begin{bmatrix}
  0&0&\cdots&0&-q_0\\
  1&0&\cdots&0&-q_1\\
  0&1&\cdots&0&-q_2\\
  \vdots&\vdots&\ddots&\vdots&\vdots\\
  0&0&\cdots&1&-q_{H-1}
 \end{bmatrix}.
\]
The characteristic polynomial of \(C_{\mathcal Q}\) is
\(\mathcal Q\).  Hence, in exact arithmetic, its eigenvalues are
\(-r_1,\ldots,-r_H\), where
\[
 r_h=\exp(tu^\top W_hq)>0.
\]
For each recovered root \(-r_h\), the local quantities are computed as
\begin{equation}
 s_h=\log r_h,
 \qquad
 c_h=
 \frac{\mathcal P(-r_h)}
      {r_h\mathcal Q'(-r_h)}.                           \label{eq:supp-numerical-residue}
\end{equation}
Indeed, the residue of \(\mathcal P/\mathcal Q\) at \(-r_h\) is
\(\mathcal P(-r_h)/\mathcal Q'(-r_h)=c_hr_h\).
The implementation computes the companion eigenvalues and all subsequent
quantities at the prescribed multiprecision level.

The unordered local outputs are aligned using the additive identities
\[
 s_h(u_1+u_i,q_1)=s_h(u_1,q_1)+s_h(u_i,q_1)
\]
and
\[
 s_h(u_i,q_1+q_j)=s_h(u_i,q_1)+s_h(u_i,q_j).
\]
After consistent labeling, the implementation applies
\[
 \widehat W_h
 =t^{-1}\mathbf U^{-1}S_h\mathbf Q^{-1},
 \qquad
 \widehat v_h
 =t^{-1}\mathbf U^{-1}\mathbf c_h.
\]
No target parameter is used in interpolation, root recovery, residue
recovery, consistent head labeling, or these final linear solves.

\paragraph{Conditioning and the scale \(t\).}
The exact theorem holds for every fixed \(t\neq0\), but \(t\) changes the
conditioning of a finite-precision realization through
\[
 r_h=\exp(tu^\top W_hq),
 \qquad
 c_h=tu^\top v_h.
\]
As \(t\to0\),
\[
 r_h
 =
 1+tu^\top W_hq+O(t^2),
 \qquad
 c_hr_h
 =
 t u^\top v_h+O(t^2).
\]
Thus the values \(r_h\) cluster near \(1\), the poles \(-r_h\) cluster
near \(-1\), and the residues approach zero. When \(|t|\) is large, the
exponential can produce a wide dynamic range among the \(r_h\), which
can amplify numerical errors in the interpolation and companion-root
steps.  These observations explain why the exact condition
\(t\neq0\) is not, by itself, a finite-precision prescription.

All experiments use \(t=1\), fixed before target generation and not
adjusted for individual target models.

%% file: supplementary/sections/07_02_experimental_protocol.tex
\subsection{Experimental Setup}
\label{supp:experimental-protocol}

\paragraph{Targets.}
For each reported pair \((d,H)\), every entry of \(W_h\) and \(v_h\),
\(h\in[H]\), is sampled independently from a Gaussian distribution with
mean \(0\) and variance \(1/d\). For each reported \((d,H)\), the
high-precision and binary64 experiments use 100 independently sampled
target models, and all 100 are included in the reported results.
The controlled-perturbation experiment uses 10 independently sampled
target models and 10 noise patterns per model.

\paragraph{Direction bases and query pairs.}
Independently of \(W_h\) and \(v_h\), the implementation generates two
independent matrices whose entries are i.i.d.\ Gaussian random variables
with mean \(0\) and variance \(1\). It takes the orthogonal factors from
their QR factorizations as \(O_{\mathbf U}\) and \(O_{\mathbf Q}\).
Independently, it draws diagonal matrices
\(\Lambda_{\mathbf U}\) and \(\Lambda_{\mathbf Q}\) whose diagonal entries
are i.i.d.\ uniform random variables on \([1,2]\), and sets
\[
 \mathbf U=\Lambda_{\mathbf U}O_{\mathbf U},
 \qquad
 \mathbf Q=O_{\mathbf Q}\Lambda_{\mathbf Q}.
\]
All singular values of \(\mathbf U\) and \(\mathbf Q\) lie in \([1,2]\).
The algorithm uses the \(2d^2-1\) pairs specified in
Equations~(7)--(9) of the main paper. For each pair, the repeated-token
queries use \(m=1,\ldots,2H\). The high-precision exact-recovery experiment
directly queries only the one-token output \(F_M([q_1^\top])\) and computes
all other required one-token outputs after recovering the \(v_h\). Its query
count is therefore
\[
 1+2H(2d^2-1)=4Hd^2-2H+1,
\]
and its maximum sequence length is \(2H+1\). The value \(t=1\) is fixed
before the target models are generated. The matrices
\(\mathbf U,\mathbf Q\) and all query inputs are generated without using
\(W_h\) or \(v_h\).

\paragraph{Information separation.}
The learner is given \(d,H,t,\mathbf U,\mathbf Q\) and the query inputs,
and observes the corresponding scalar responses. It is not given \(W_h\)
or \(v_h\). The learner's output is saved before the evaluator reads
\(W_h\) and \(v_h\), and no quantity computed by the evaluator is returned
to the learner.

\paragraph{Precision terminology.}
A \(p\)-digit oracle response means that the scalar response is rounded
to \(p\) significant decimal digits. Offline arithmetic at \(p\) digits
means multiprecision arithmetic with \(p\) decimal digits of working
precision. Thus, ``180-digit'' refers to decimal, rather than binary,
precision.

IEEE~754 binary64~\citep{ieee754} has 53 bits of significand precision,
including the implicit leading bit. For each target model and query schedule, we run the learner once with
180-significant-digit scalar responses and once with responses rounded to
binary64.  The two runs use the same target model, \(\mathbf U,\mathbf Q\),
query inputs, \(t=1\), and 180-digit offline arithmetic.  In each run, the
learner sets its numerical acceptance tolerances according to the response
precision.

\paragraph{Evaluation metric.}
If the learner returns \(H\) heads, denote them by
\(\{(\widehat W_h,\widehat v_h):h\in[H]\}\), and let
\(\Pi_H\) be the set of permutations of \([H]\). Define
\[
 E_{\rm param}
 \coloneqq
 \min_{\pi\in\Pi_H}\max_{h\in[H]}
 \left(
  \|\widehat W_h-W_{\pi(h)}\|_F
  +\|\widehat v_h-v_{\pi(h)}\|_2
 \right).
\]

The same permutation is used for the \(W_h\) and \(v_h\) errors.
For the high-precision experiment, the minimum, median, and maximum of
\(E_{\rm param}\) are reported over the 100 target models. In both the
controlled-perturbation and binary64 experiments, a run is counted as
successful when it returns all \(H\) heads and
\(E_{\rm param}<10^{-2}\).

%% file: supplementary/sections/07_03_high_precision_recovery.tex
\subsection{High-Precision Recovery}
\label{supp:high-precision}

The high-precision experiment uses 180-significant-digit scalar oracle
responses and 180-digit offline arithmetic.  It runs
Algorithm~\ref{alg:main} with \(t=1\), using \(m=1,\ldots,2H\) for each
local decoder and directly querying only \(F_M([q_1^\top])\) among the
one-token outputs.  Table~\ref{tab:supp-high-precision} reports the
minimum, median, and maximum values of \(E_{\rm param}\) for the eight
tested pairs \((d,H)\).

\begin{table}[H]
\centering
{\small
\setlength{\tabcolsep}{1.5pt}
\begin{tabular}{@{}c r r c c c@{}}
\toprule
& & & \multicolumn{3}{c}{\(E_{\rm param}\)}\\
\cmidrule(lr){4-6}
\((d,H)\) & Params. & Queries & Min. & Median & Max.\\
\midrule
\((3,8)\)
& 96 & 273
& \(3.67\times10^{-142}\)
& \(1.46\times10^{-133}\)
& \(3.60\times10^{-124}\)\\
\((8,8)\)
& 576 & 2,033
& \(9.55\times10^{-133}\)
& \(7.63\times10^{-128}\)
& \(9.72\times10^{-121}\)\\
\((16,8)\)
& 2,176 & 8,177
& \(2.47\times10^{-128}\)
& \(7.18\times10^{-125}\)
& \(5.95\times10^{-118}\)\\
\((32,8)\)
& 8,448 & 32,753
& \(1.58\times10^{-125}\)
& \(1.54\times10^{-121}\)
& \(1.22\times10^{-115}\)\\
\((64,4)\)
& 16,640 & 65,529
& \(2.03\times10^{-153}\)
& \(3.45\times10^{-151}\)
& \(1.20\times10^{-145}\)\\
\((64,8)\)
& 33,280 & 131,057
& \(8.29\times10^{-121}\)
& \(5.12\times10^{-118}\)
& \(7.70\times10^{-113}\)\\
\((128,4)\)
& 66,048 & 262,137
& \(2.02\times10^{-152}\)
& \(3.24\times10^{-149}\)
& \(1.23\times10^{-144}\)\\
\((128,8)\)
& 132,096 & 524,273
& \(8.23\times10^{-117}\)
& \(3.38\times10^{-114}\)
& \(9.97\times10^{-110}\)\\
\bottomrule
\end{tabular}
}
\caption{High-precision results with \(t=1\).  Each row contains 100
independently sampled target models, and the algorithm returns \(H\)
heads for every model.  \textnormal{Params.} denotes \(H(d^2+d)\), and
\textnormal{Queries} denotes \(4Hd^2-2H+1\).}
\label{tab:supp-high-precision}
\end{table}

Across the 800 target models, every run returns \(H\) heads and satisfies
\(E_{\rm param}<10^{-100}\).  The query counts agree with
Theorem~\ref{thm:main} and range from \(273\) at
\((d,H)=(3,8)\) to \(524{,}273\) at \((d,H)=(128,8)\).  The maximum
sequence length is \(17\) for \(H=8\) and \(9\) for \(H=4\).

Together, these results provide empirical validation of the theoretical
recovery method in Theorem~\ref{thm:main}: the high-precision
implementation reproduces the theorem's conclusion that
Algorithm~\ref{alg:main} recovers all \(H\) heads when the scalar oracle
responses and the prescribed algebraic computations are exact.

%% file: supplementary/sections/07_04_approximate_binary64_outputs.tex
\subsection{Approximate and IEEE~754 binary64 Oracle Outputs}
\label{supp:approx-experiments}

\paragraph{Controlled perturbations.}
Let \(y_k\) denote the unperturbed response to the \(k\)-th value query.
For a noise pattern \(\eta=(\eta_k)_k\), the perturbed response is
\begin{equation}
 \widetilde y_k(\tau)
 =
 y_k+\tau\eta_k.
 \label{eq:supp-bounded-noise}
\end{equation}
Independently for each \(k\), the implementation samples \(J_k\)
uniformly from \(\{0,\ldots,2^{64}-1\}\) and sets
\[
 \eta_k
 =
 \frac{2J_k}{2^{64}-1}-1.
\]
For each noise pattern, the same \(\eta\) is used for every value of
\(\tau\).  The learner receives \(\widetilde y_k(\tau)\), but not
\(\eta_k\).

The experiment uses
\[
 (d,H)\in\{(3,2),(3,3),(4,4)\}.
\]
For each pair \((d,H)\), the experiment samples 10 independent models.
Each model is evaluated using the same collection of 10 independently
generated noise patterns. The value \(t=1\) is used throughout. The tested
values of \(\tau\) are
\[
 \begin{aligned}
  \tau\in\{&
   0,10^{-80},10^{-74},10^{-68},10^{-62},\\
  &
   10^{-56},10^{-50},10^{-44},10^{-38},10^{-32},\\
  &
   10^{-26},10^{-20},10^{-16},10^{-12},10^{-8}
  \}.
 \end{aligned}
\]
The unperturbed scalar responses are evaluated with 220 decimal digits,
and the learner uses 180-digit offline arithmetic.  The reported results
use only \(m=1,\ldots,2H\).  The stored records also contain responses
for \(m>2H\), but removing them does not change any learner output or
reported value of \(E_{\rm param}\).

The learner directly queries the \(2d-1\) one-token responses
\(F_M([q^\top])\) for
\[
 q\in
 \{q_1\}
 \cup
 \{q_j,q_1+q_j:j=2,\ldots,d\}.
\]
It therefore uses
\[
 (2d-1)+2H(2d^2-1)
\]
value queries: \(73\), \(107\), and \(255\) for
\((d,H)=(3,2),(3,3),(4,4)\), respectively.

For each model and noise pattern, let
\(E_{\rm param}(\tau)\) denote the error at perturbation level \(\tau\).
The analysis fits the least-squares slope of
\(\log E_{\rm param}(\tau)\) against \(\log\tau\) using every positive
tested value of \(\tau\) for which the learner returns \(H\) heads and
\[
E_{\rm param}(\tau)<10^{-2},
\qquad
E_{\rm param}(\tau)>10^6E_{\rm param}(0).
\]
For each model, the median is taken over the slopes obtained from
its 10 noise patterns.  For each pair \((d,H)\),
Table~\ref{tab:supp-bounded-noise} reports the median of the resulting
10 values.

At \(\tau_{\rm ref}=10^{-80}\), define
\[
 A_{\rm ref}
 =
 \frac{E_{\rm param}(\tau_{\rm ref})}{\tau_{\rm ref}}.
\]
For each model, the median of \(A_{\rm ref}\) is taken over its
10 noise patterns.  The table reports the median and interquartile range
of the resulting 10 values.

\begin{table}[H]
\centering
\small
\setlength{\tabcolsep}{4pt}
\begin{tabular}{@{}c c c c@{}}
\toprule
\((d,H)\) &
\shortstack{Median\\slope} &
\shortstack{Median\\\(A_{\rm ref}\)} &
\shortstack{\(A_{\rm ref}\)\\IQR}\\
\midrule
\((3,2)\) &
\(1.0000000\) &
\(6.98\times10^6\) &
\([5.25\times10^6,\,3.53\times10^8]\)\\
\((3,3)\) &
\(1.0000000\) &
\(5.37\times10^{11}\) &
\([1.20\times10^{10},\,2.95\times10^{12}]\)\\
\((4,4)\) &
\(1.0000000\) &
\(1.27\times10^{15}\) &
\([2.44\times10^{14},\,6.50\times10^{16}]\)\\
\bottomrule
\end{tabular}
\caption{Controlled-perturbation results.  The median slopes are rounded
to seven decimal places.  For each model, the median is first taken over
its 10 noise patterns.  The displayed median and interquartile range are
then computed over the 10 models.}
\label{tab:supp-bounded-noise}
\end{table}

In table order, the three median slopes before rounding are approximately
\[
 (0.9999999986,\;1.0000000000,\;0.9999999996).
\]
They therefore round to \(1.0000000\), as reported in
Table~\ref{tab:supp-bounded-noise}.  For the points used in the slope
fits, \(E_{\rm param}(\tau)\) is approximately proportional to \(\tau\).
The median \(A_{\rm ref}\) ranges from \(6.98\times10^6\) to
\(1.27\times10^{15}\), a factor greater than \(10^8\).

The experiment performs $30\times10\times15=4{,}500$ runs.  The learner returns \(H\) heads in \(3{,}987\) runs, and
\(3{,}891\) of these also satisfy
\(E_{\rm param}<10^{-2}\).

\paragraph{IEEE~754 binary64 outputs.}
For each target model and each of the two query schedules, the experiment
performs one run with 180-significant-digit scalar responses and one run
with responses rounded to IEEE~754 binary64.  The two runs use the same target model, \(\mathbf U,\mathbf Q\), query
inputs, \(t=1\), and 180-digit offline arithmetic.  Thus, within each
pair, the target model, query schedule, and offline arithmetic are fixed;
only the precision of the scalar responses changes.  The learner computes
its numerical acceptance tolerances from that precision. A run is successful if it returns
all \(H\) heads and satisfies \(E_{\rm param}<10^{-2}\).

\medskip
\noindent\textit{Queries from Algorithm~\ref{alg:main}.}
The first experiment uses the queries specified by
Algorithm~\ref{alg:main}. It directly queries only $F_M([q_1^\top])$ among
the one-token outputs and computes the remaining required one-token outputs
from the recovered $\widehat v_h$. The number of queries in each run is
\[
 1+2H(2d^2-1).
\]
Table~\ref{tab:supp-binary64} reports the results.

\noindent\begin{minipage}{\linewidth}
\centering
{\small
\setlength{\tabcolsep}{5pt}
\begin{tabular}{@{}c r r r@{}}
\toprule
\((d,H)\) & Queries & 180-digit & binary64\\
\midrule
\((3,2)\) & 69    & 100/100 & 99/100\\
\((3,4)\) & 137   & 100/100 & 1/100\\
\((3,8)\) & 273   & 100/100 & 0/100\\
\((8,4)\) & 1,017 & 100/100 & 0/100\\
\bottomrule
\end{tabular}
}
\captionof{table}{Results using the queries specified by
Algorithm~\ref{alg:main}. For each model, the model parameters, query
directions, query inputs, and offline arithmetic are the same in the
180-digit and binary64 runs.}
\label{tab:supp-binary64}
\end{minipage}

All \(400\) runs with 180-significant-digit scalar responses are
successful.  The binary64 runs are successful for \(99\), \(1\), \(0\),
and \(0\) of the \(100\) models, in table order.

\medskip
\noindent\textit{Direct one-token outputs.}
The second experiment directly queries all \(2d-1\) one-token
outputs, so no output is computed from \(\widehat v_h\).  It uses
\(2d-2\) more queries than Algorithm~\ref{alg:main}.\footnote{For
\((d,H)=(3,8)\), the high-precision experiment in
Table~\ref{tab:supp-high-precision} uses \(273\) queries, whereas the
direct-one-token experiment uses \(277\).  The difference is the
\(2d-2=4\) additional one-token outputs queried directly in the latter
experiment.}
Thus, each run uses
\[
 (2d-1)+2H(2d^2-1)
\]
queries. It uses the same \(100\) models in each row
as the first experiment.  Table~\ref{tab:supp-binary64-direct} reports
the results.

\noindent\begin{minipage}{\linewidth}
\centering
{\small
\setlength{\tabcolsep}{5pt}
\begin{tabular}{@{}c r r r@{}}
\toprule
\((d,H)\) & Queries & 180-digit & binary64\\
\midrule
\((3,2)\) & 73    & 100/100 & 100/100\\
\((3,4)\) & 141   & 100/100 & 31/100\\
\((3,8)\) & 277   & 100/100 & 0/100\\
\((8,4)\) & 1,031 & 100/100 & 0/100\\
\bottomrule
\end{tabular}
}
\captionof{table}{Results obtained by directly querying all one-token
outputs. For each model, the model parameters, query directions, query
inputs, and offline arithmetic are the same in the 180-digit and binary64
runs.}
\label{tab:supp-binary64-direct}
\end{minipage}

All \(400\) runs with 180-significant-digit scalar responses are
successful.  The binary64 runs are successful for \(100\), \(31\), \(0\),
and \(0\) of the \(100\) models, in table order.

In exact arithmetic, the outputs computed from \(v_h\) equal the
corresponding directly queried outputs.  In finite precision, however,
an error in \(\widehat v_h\) also affects every output computed from
\(\widehat v_h\).  The first experiment therefore evaluates
Algorithm~\ref{alg:main} with its stated query count.  The second uses
\(2d-2\) additional queries to remove this additional source of error
and more directly measure the effect of rounding the scalar responses
to binary64.

The controlled-perturbation results are consistent with the
linear-in-\(\tau\) upper bound in Theorem~\ref{thm:robust-recovery}.
For sufficiently small scalar-response perturbations, \(E_{\rm param}\)
is approximately proportional to the perturbation size, although the
proportionality factor varies substantially across models. The binary64
results show that IEEE~754 binary64 scalar responses can be
insufficient for satisfying the success criterion even when the
offline computations use 180-digit arithmetic.